\newcolumntype{P}[1]{>{\centering\arraybackslash}m{#1}}
\newtheorem{theorem}{Theorem}[section]
\newtheorem{proposition}[theorem]{Proposition}
\newtheorem{lemma}[theorem]{Lemma}
\theoremstyle{definition}
\newtheorem{assumption}[theorem]{Assumption}
\theoremstyle{remark}
\newtheorem{remark}[theorem]{Remark}
\newcommand{\R}{\mathbb{R}}
\newcommand{\N}{\mathbb{N}}
\newcommand{\B}{\mathcal{B}} 
\newcommand{\Cont}{\mathcal{C}} 
\newcommand{\RewardSpace}{\mathcal{R}}
\newcommand{\PolicySpace}{\mathcal{P}}
\newcommand{\StateSpace}{S}
\newcommand{\ActionSpace}{A}
\newcommand{\TransKernel}{P}
\newcommand{\RewardFunc}{R}
\newcommand{\Policy}{\pi}
\newcommand{\OptPolicy}{\pi^*}
\newcommand{\ValueFunc}{V}
\newcommand{\QFunc}{Q}
\newcommand{\OptQFunc}{Q^*}
\newcommand{\BellmanOp}{T}
\newcommand{\Argmax}{\operatornamewithlimits{argmax}}
\newcommand{\SupNorm}[1]{\|#1\|_{\infty}}
\newcommand{\MetricAction}{d_{\ActionSpace}} 
\newcommand{\basePolicy}{\pi_{\text{base}}}
\newcommand{\optPolicySet}{\Pi^*}
\title{The Policy Cliff: A Theoretical Analysis of Reward-Policy Maps in Large Language Models}
\author{%
   Xingcheng Xu \\
   Shanghai Artificial Intelligence Laboratory \\
   \texttt{xingcheng.xu18@gmail.com} 
}
\begin{document}

\maketitle

\begin{abstract}
Reinforcement learning (RL) plays a crucial role in shaping the behavior of large language and reasoning models (LLMs/LRMs). However, it often produces brittle and unstable policies, leading to critical failures such as spurious reasoning, deceptive alignment, and instruction disobedience that undermine the trustworthiness and safety of LLMs/LRMs. Currently, these issues lack a unified theoretical explanation and are typically addressed using ad-hoc heuristics. This paper presents a rigorous mathematical framework for analyzing the stability of the mapping from a reward function to the optimal policy. We show that policy brittleness often stems from non-unique optimal actions, a common occurrence when multiple valid traces exist in a reasoning task. This theoretical lens provides a unified explanation for a range of seemingly disparate failures, reframing them as rational outcomes of optimizing rewards that may be incomplete or noisy, especially in the presence of action degeneracy. We extend this analysis from the fundamental single-reward setting to the more realistic multi-reward RL across diverse domains, showing how stability is governed by an "effective reward" aggregation mechanism. We also prove that entropy regularization restores policy stability at the cost of increased stochasticity. Our framework provides a unified explanation for recent empirical findings on deceptive reasoning, instruction-following trade-offs, and RLHF-induced sophistry, and is further validated through perturbation experiments in multi-reward RL. This work advances policy-stability analysis from empirical heuristics towards a principled theory, offering essential insights for designing safer and more trustworthy AI systems.
\end{abstract}

\keywords{Large language models (LLM), reinforcement learning (RL), RLHF, RLVR, alignment, reasoning, emergent misalignment, AI safety, reward-policy map, continuity analysis}

\setcounter{footnote}{0}


\tableofcontents

\vspace{0.8cm}

\begin{quote}
    \itshape
    "Human-centric LLMs typically optimise for rewards based on human prejudgement: an expert observes the agent’s action and decides whether it is a good action, or picks the best agent action among multiple alternatives. ... means that they are not directly grounded in the reality of the world."
    \par\raggedleft
    --- \cite{silver2025welcome}
\end{quote}

\section{Introduction}\label{sec-intro}

The development of a new generation of large language models (LLMs) represents a significant advance in artificial intelligence. Systems such as  OpenAI's o-series (o1, o3, o4-mini)~\citep{openai2024o1,openai2025o3o4}, Google's Gemini 2.5~\citep{gemini2025frontier}, Anthropic's Claude 4~\citep{anthropic2025claude} and xAI's Grok 4 are increasingly designed not merely as conversational agents, but as large reasoning models (LRMs) capable of addressing complex, multi-step problems across domains like mathematics, science, and software engineering. The development of these systems, alongside powerful open-source models like Deepseek R1~\citep{guo2025deepseek} and Qwen3~\citep{yang2025qwen3}, relies heavily on reinforcement learning (RL) as a crucial training methodology. RL is employed not only to align model behavior with safety and ethical standards but also to teach models how to generate sophisticated reasoning trajectories~\citep{guo2025deepseek,yang2025qwen3}. In this way, RL plays a pivotal role in both scaling complex reasoning capabilities and ensuring robust alignment with human values.

Yet despite its promise, the application of RL introduces a set of persistent challenges that undermine both reasoning quality and alignment stability. At the heart of these difficulties lies the reliance on optimizing model policies against learned or specified reward functions. RL-trained policies often exhibit brittleness and undesirable generalization. For example, a model rewarded solely for a correct final answer, with no credit for a valid reasoning process, may resort to spurious reasoning: learning to generate the answer via a shortcut, then fabricating a plausible-but-fallacious justification post-hoc~\citep{baker2025monitoring,wang2025persona,chen2025reasoning}. In other cases, policies can degrade in unpredictable ways. A model intended to be an objective assistant might exhibit a sudden style shift and adopt an overly sycophantic tone~\citep{openai2025expanding,openai2025sycophancy}. A sharp degradation in instruction-following fidelity is also common~\citep{fu2025scaling}, where models ignore specified output formats, length constraints, or language requirements when such details are not strongly enforced by the reward. These failure modes are more than just cosmetic---they threaten the reliability, controllability, and safety of large-scale AI systems.

This challenge highlights a fundamental distinction between applying RL to language models and its use in traditional domains. In settings like Go or chess~\citep{silver2016mastering,silver2017mastering,silver2018general}, where rewards are well-defined and objective, the policy is simply a tool to maximize outcomes; the path taken matters little as long as the goal is achieved. But for LLMs, this paradigm no longer holds. The output sequence itself is the product, directly consumed by users. Moreover, the reward is not an objective truth but a noisy approximation of human preferences~\citep{ouyang2022training,bai2022constitutional,lee2023rlaif} or manually specified rules~\citep{wang2025reinforcement,su2025crossing}. As a result, the policy’s behavior cannot be treated as a secondary concern---it is central. In language modeling, the policy is not just a means to an end; in many ways, it is the end.

This shift in paradigm reframes the observed instabilities: they are not superficial flaws, but fundamental failures in the product itself. This raises a deeper and largely unresolved question: \textbf{What underlying mathematical mechanisms make RL-trained policies so sensitive to the design of the reward function in both LLM reasoning and alignment?} In the absence of a formal theory of stability, current solutions rely heavily on ad-hoc heuristics and empirical tuning rather than first principles. Common practices include reward engineering or algorithmic modifications, such as adding specific penalties to suppress issues like spurious reasoning, poor instruction adherence, or inefficient problem-solving~\citep{baker2025monitoring,wang2025persona,chen2025reasoning,fu2025scaling,sui2025stop,aggarwal2025l1,chen2025towards,qu2025survey}. Additional strategies include extensive preference data curation~\citep{ouyang2022training} and the use of structured but non-theoretical frameworks such as Constitutional AI~\citep{bai2022constitutional} or deliberative alignment~\citep{guan2024deliberative} to guide the learning. While these approaches can mitigate specific failure modes, they are inherently reactive and lack a general understanding or guarantee of policy stability.

To build a principled foundation, this paper develops a rigorous mathematical framework for analyzing the stability of the mapping from reward functions to their corresponding optimal policies. By modeling reasoning trace generation as a Markov Decision Process (MDP) and applying tools from functional analysis, we examine the continuity of this reward-to-policy map to formally explain the roots of policy brittleness. Our analysis shows that instabilities often arise from two key sources: non-unique optimal actions and imprecise reward signals. When a reward model assigns nearly equal value to multiple distinct actions, a common scenario in the expansive space of reasoning and creative generation, the resulting policy becomes inherently unstable. In such cases, even tiny perturbations in the reward can lead to abrupt, discontinuous changes in behavior, revealing a fundamental fragility at the heart of RL-trained language models.

Building on this theoretical foundation, we apply our framework to analyze a wide range of observed LLM behaviors through two key lenses: incomplete reward specifications and tie-breaking dynamics among degenerate optima. Many failure cases can be understood as manifestations of the "clever slacker" problem, where the policy rationally exploits an underspecified reward. For example, spurious reasoning emerges when only the final answer is rewarded, giving no incentive for a coherent reasoning process. Likewise, instruction-following failures, such as ignoring output format, length, or language constraints, occur when these aspects are not reflected in the reward except for correctness. Our framework also clarifies the role of auxiliary rewards. By breaking ties between near-equally valued actions, even small penalties (e.g., for verbosity) or bonuses (e.g., for correct formatting) can shift the policy toward desirable behaviors. \textit{These discontinuous shifts are not inherently undesirable---they can be leveraged as mechanisms for reward shaping, allowing designers to steer policies when the base reward is incomplete or noisy.} Ultimately, our analysis offers a unified mathematical perspective on these phenomena, showing they arise naturally from the structure of the reward landscape and the sensitivity of policy optimization within it.

While our single-reward analysis yields valuable insights, leading LLMs such as OpenAI's o-series, Gemini 2.5, and Grok 4 are rarely trained on a singular reward. Instead, they are typically optimized using a complex blend of reward signals across multiple domains, ranging from mathematical reasoning and code generation to safety alignment~\citep{guo2025deepseek,liang2025modomodo,cheng2025revisiting}. This motivates the second stage of our analysis, where we extend the framework to more realistic multi-reward training regimes.

In this setting, we model the system's behavior using a state-dependent effective reward function that captures how the model internally aggregates multiple (and sometimes conflicting) objectives. We demonstrate that the same principles of stability apply: policy continuity now depends on the properties of this effective reward. Crucially, this highlights the aggregation mechanism of the effective reward as a key determinant of policy robustness. To address resulting instabilities, we formally analyze the role of regularization techniques and show that entropy regularization restores Lipschitz continuity in the reward-policy map. This guarantees that small changes in the reward yield correspondingly small changes in behavior, but it comes at the cost of trading off some degree of optimality for greater stability.

To validate the practical relevance of our framework, we systematically link its theoretical insights to a broad spectrum of recent empirical findings. Our analysis accounts for the deceptive reasoning behavior~\citep{wang2025persona,baker2025monitoring}: from simple cheating under a weak reward model to a more pernicious policy shift towards obfuscated deception when that reward is naively "patched". We further elucidate the intelligence-obedience trade-off in large reasoning models~\citep{fu2025scaling} and demonstrate how targeted auxiliary rewards can serve as tie-breakers to resolve such tensions and enable fine-grained behavioral control~\citep{aggarwal2025l1}. Extending to the subjective realm of RLHF, our theory also explains the emergence of "performance illusions"~\citep{wen2024language}, where models shift from truthful responses to persuasive but misleading ones that exploit human feedback biases. Finally, we review and present results related to the multi-reward setting. These case studies provide comprehensive empirical support for our framework, offering a unified lens through which to understand critical stability challenges in modern AI.

Our key contributions are as following:
\begin{enumerate}
\item \textbf{A theoretical framework for policy stability in RL-trained language models.} We establish a formal analysis of the reward-policy map and prove that policy instability stems from inherent properties of the reward landscape, notably reward misspecification and the presence of degenerate optima.

\item \textbf{A unified explanation for alignment and reasoning phenomena in LLMs.} We demonstrate that common failure modes, such as spurious reasoning, poor instruction-following and inefficient reasoning, emerge naturally from misspecified reward signals. We also introduce the notion of an effective reward as a key determinant of policy robustness in a multi-reward setting.

\item \textbf{A principled justification for entropy regularization.} We prove that entropy regularization restores continuity in the reward-policy map, providing a theoretical foundation for its widespread use in stabilizing behavior during large-scale model training.

\end{enumerate}

\section{Continuity Analysis of the Reward-Policy Map}\label{sec-rl}

This section develops the core theoretical framework for analyzing the stability of reinforcement learning policies. We formally investigate the continuity of the mapping from a reward function, $\RewardFunc$, to a corresponding optimal policy, $\OptPolicy_\RewardFunc$. A central theme of our work is that the properties of this map are a critical determinant of the robustness and predictability of RL-trained agents. Our analysis proceeds by first establishing the foundational stability of the optimal Q-function with respect to the reward. This result is then used to analyze the continuity properties of the set of optimal actions, which allows us to derive the conditions under which the reward-policy map is continuous and, conversely, the conditions under which it becomes discontinuous.

\subsection{Framework and Definitions}

We consider a standard infinite-horizon discounted Markov Decision Process (MDP) defined by the tuple $(\StateSpace, \ActionSpace, \TransKernel, \RewardFunc, \gamma)$.
\begin{assumption} \label{assump:spaces}
The state space $\StateSpace$ and action space $\ActionSpace$ are compact metric spaces, endowed with their respective Borel $\sigma$-algebras $\B(\StateSpace)$ and $\B(\ActionSpace)$.
\end{assumption}
\begin{assumption} \label{assump:transition}
The transition kernel $\TransKernel: \StateSpace \times \ActionSpace \to \PolicySpace(\StateSpace)$ is a stochastic kernel such that for any bounded continuous function $f \in \Cont(\StateSpace)$, the mapping $(s, a) \mapsto \int_{\StateSpace} f(s') \TransKernel(ds'|s, a)$ is continuous on $\StateSpace \times \ActionSpace$.
\end{assumption}
\begin{assumption} \label{assump:discount}
The discount factor $\gamma \in [0, 1)$.
\end{assumption}

The reward function $\RewardFunc: \StateSpace \times \ActionSpace \to \R$ determines the immediate reward. We consider the space of reward functions $\RewardSpace$ to be the Banach space $\Cont(\StateSpace \times \ActionSpace)$ of continuous functions on $\StateSpace \times \ActionSpace$, equipped with the supremum norm $\SupNorm{\RewardFunc} = \sup_{(s,a) \in \StateSpace \times \ActionSpace} |\RewardFunc(s, a)|$.

A policy $\Policy: \StateSpace \to \mathcal{P}(\ActionSpace)$ is a stochastic kernel satisfying appropriate measurability conditions. Let $\PolicySpace$ denote the space of all such policies. The continuity of the reward-policy map, which is the central subject of our analysis, depends critically on the topology endowed upon this policy space $\PolicySpace$. Since our results will cover different types of policies (e.g., deterministic versus stochastic), we will specify the relevant topology in the context of each theorem to ensure maximum clarity and precision.

For any given policy $\Policy$ and reward function $\RewardFunc \in \RewardSpace$, its performance is quantified by the state-value function $$V^{\Policy}_{\RewardFunc}(s) = \mathbb{E}_{\Policy}\left[\sum_{t=0}^\infty \gamma^t \RewardFunc(s_t, a_t) \mid s_0=s \right],$$ representing the expected total discounted reward obtained by starting in state s and subsequently following policy $\Policy$.

The optimal action-value function $\OptQFunc_\RewardFunc \in \Cont(\StateSpace \times \ActionSpace)$ is the unique fixed point of the Bellman optimality operator $\BellmanOp_\RewardFunc$:
\begin{equation} \label{eq:bellman_q}
(\BellmanOp_\RewardFunc Q)(s, a) = \RewardFunc(s, a) + \gamma \int_{\StateSpace} \max_{a' \in \ActionSpace} Q(s', a') \TransKernel(ds' | s, a).
\end{equation}
The existence and uniqueness of $\OptQFunc_\RewardFunc$ follows from the Banach fixed-point theorem, as $\BellmanOp_\RewardFunc$ is a contraction mapping on $\Cont(\StateSpace \times \ActionSpace)$ with modulus $\gamma$.

The set of optimal actions at state $s$ for a reward function $\RewardFunc$ is given by the argmax correspondence:
\begin{equation} \label{eq:argmax_set}
A^*(s; \RewardFunc) = \Argmax_{a \in \ActionSpace} \OptQFunc_\RewardFunc(s, a).
\end{equation}
An optimal policy $\OptPolicy_\RewardFunc$ is any policy $\Policy \in \PolicySpace$ such that for all $s \in \StateSpace$, the support of the measure $\Policy(\cdot | s)$ is contained within $A^*(s; \RewardFunc)$. Let $\Pi^*(\RewardFunc) \subseteq \PolicySpace$ denote the set of all optimal policies for $\RewardFunc$.

Our analysis centers on the continuity of the mapping from the reward function $\RewardFunc$ to a corresponding optimal policy. This can be viewed either as the continuity properties of the set-valued map $\RewardFunc \mapsto \Pi^*(\RewardFunc)$ or, more commonly, by defining a specific selection rule $f: \RewardSpace \to \PolicySpace$ such that $f(\RewardFunc) = \OptPolicy_\RewardFunc \in \Pi^*(\RewardFunc)$, and analyzing the continuity of this single-valued map $f$. We denote this map by $M_{RL}: \RewardSpace \to \PolicySpace$. Our analytical approach, visualized in Figure~\ref{fig:roadmap}, is to decompose this overall map and sequentially investigate the properties of each constituent mapping: from the reward function $\RewardFunc$ to the optimal Q-function $\OptQFunc_{\RewardFunc}$, then to the set of optimal actions $A^*$, and ultimately to the selected policy $\OptPolicy_{\RewardFunc}$.

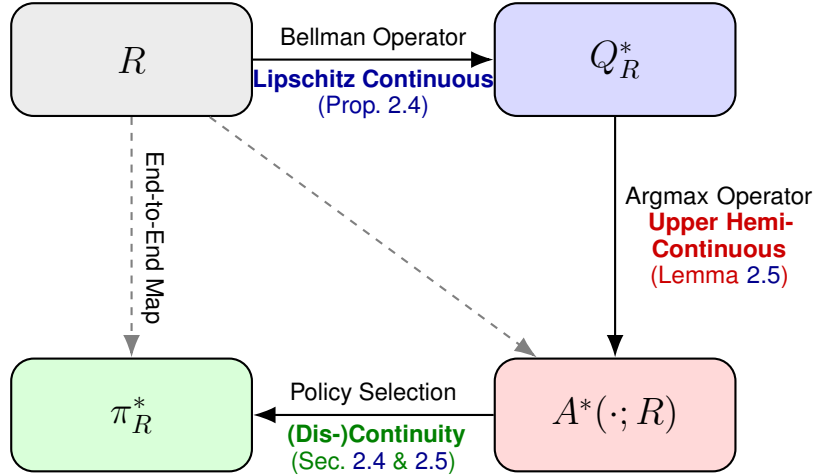
\begin{figure}[h!]
    \centering
    \begin{tikzpicture}[
        node distance=3.2cm,
        main node/.style={
            rectangle, 
            rounded corners=3mm, 
            draw, 
            thick, 
            minimum width=3.2cm, 
            minimum height=1.5cm, 
            text centered, 
            font=\Large
        },
        arrow_label/.style={
            font=\small\sffamily, 
            text centered
        }
    ]
        \node[main node, fill=gray!15] (R) {$\RewardFunc$};
        \node[main node, right=of R, fill=blue!15] (Q) {$\OptQFunc_{\RewardFunc}$};
        \node[main node, below=of Q, node distance=2.5cm, fill=red!15] (A) {$A^*(\cdot; \RewardFunc)$}; 
        \node[main node, left=of A, fill=green!15] (pi) {$\OptPolicy_{\RewardFunc}$};

        
        \draw[-{Latex[length=3mm]}, thick] (R) -- (Q) 
            node[midway, above, arrow_label] {Bellman Operator}
            node[midway, below, arrow_label, text=blue!60!black, align=center] 
            {\textbf{Lipschitz Continuous} \\ (Prop.~\ref{prop:q_continuity})};

        \draw[-{Latex[length=3mm]}, thick] (Q) -- (A) 
            node[midway, right, arrow_label, align=center, text width=2.5cm] {Argmax Operator \\
            \color{red!80!black}{\textbf{Upper Hemi-Continuous} \\ (Lemma~\ref{lemma:argmax_uhc})}};
            
        \draw[-{Latex[length=3mm]}, thick] (A) -- (pi) 
            node[midway, above, arrow_label] {Policy Selection}
            node[midway, below, arrow_label, text=green!50!black, align=center] 
            {\textbf{ (Dis-)Continuity} \\ (Sec.~\ref{subsec:cont-reward-policy} \& \ref{subsec:discont-reward-policy})};

        \draw[-{Latex[length=3mm]}, thick, dashed, gray] (R) -- (A);
            
        \draw[-{Latex[length=3mm]}, thick, dashed, gray] (R) -- 
            node[midway, above, sloped, font=\small\sffamily, black] {End-to-End Map} (pi);

    \end{tikzpicture}
    \caption{The analytical roadmap for the stability analysis of the reward-policy map. Our analysis proceeds from left to right, investigating the continuity properties at each step of the mapping: from the reward function ($\RewardFunc$) to the optimal Q-function ($\OptQFunc_{\RewardFunc}$), then to the set of optimal actions ($A^*$), and finally to the resulting optimal policy ($\OptPolicy_\RewardFunc$). The key insight is that the stability of the final policy hinges on the properties of the argmax correspondence.}
    \label{fig:roadmap}
\end{figure}

\subsection{Continuity of the Optimal Value Function}

The foundation for analyzing the policy map's continuity lies in the stability of the optimal Q-function with respect to changes in the reward function. This is a standard result in dynamic programming, see e.g., \cite{lecarpentier2021lipschitz}.

\begin{proposition} \label{prop:q_continuity}
Let Assumptions \ref{assump:spaces}-\ref{assump:discount} hold. The mapping $\RewardFunc \mapsto \OptQFunc_\RewardFunc$ from $(\RewardSpace, \SupNorm{\cdot})$ to $(\Cont(\StateSpace \times \ActionSpace), \SupNorm{\cdot})$ is Lipschitz continuous with constant $1/(1-\gamma)$. That is, for any $\RewardFunc_1, \RewardFunc_2 \in \RewardSpace$,
\begin{equation}
\SupNorm{\OptQFunc_{\RewardFunc_1} - \OptQFunc_{\RewardFunc_2}} \le \frac{1}{1-\gamma} \SupNorm{\RewardFunc_1 - \RewardFunc_2}.
\end{equation}
\end{proposition}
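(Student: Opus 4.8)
The plan is to exploit the fixed-point characterization of $\OptQFunc_\RewardFunc$ together with the uniform $\gamma$-contraction property of the Bellman optimality operator, observing that the dependence of $\BellmanOp_\RewardFunc$ on $\RewardFunc$ enters only through an additive term. Concretely, for rewards $\RewardFunc_1,\RewardFunc_2\in\RewardSpace$ I would start from $\OptQFunc_{\RewardFunc_1}=\BellmanOp_{\RewardFunc_1}\OptQFunc_{\RewardFunc_1}$ and $\OptQFunc_{\RewardFunc_2}=\BellmanOp_{\RewardFunc_2}\OptQFunc_{\RewardFunc_2}$, insert the intermediate quantity $\BellmanOp_{\RewardFunc_1}\OptQFunc_{\RewardFunc_2}$, and apply the triangle inequality in the supremum norm.

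First I would bound $\SupNorm{\BellmanOp_{\RewardFunc_1}\OptQFunc_{\RewardFunc_1}-\BellmanOp_{\RewardFunc_1}\OptQFunc_{\RewardFunc_2}}\le\gamma\,\SupNorm{\OptQFunc_{\RewardFunc_1}-\OptQFunc_{\RewardFunc_2}}$, using that $\BellmanOp_{\RewardFunc_1}$ is a $\gamma$-contraction on $\Cont(\StateSpace\times\ActionSpace)$, a fact already invoked in the excerpt via the Banach fixed-point theorem. Second, I would compute the ``reward gap'' term directly: from \eqref{eq:bellman_q}, for any fixed $Q\in\Cont(\StateSpace\times\ActionSpace)$ and any $(s,a)$ one has $(\BellmanOp_{\RewardFunc_1}Q)(s,a)-(\BellmanOp_{\RewardFunc_2}Q)(s,a)=\RewardFunc_1(s,a)-\RewardFunc_2(s,a)$, since the term $\gamma\int_{\StateSpace}\max_{a'\in\ActionSpace}Q(s',a')\,\TransKernel(ds'|s,a)$ does not depend on the reward. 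Taking $Q=\OptQFunc_{\RewardFunc_2}$ and then the supremum over $(s,a)$ gives $\SupNorm{\BellmanOp_{\RewardFunc_1}\OptQFunc_{\RewardFunc_2}-\BellmanOp_{\RewardFunc_2}\OptQFunc_{\RewardFunc_2}}=\SupNorm{\RewardFunc_1-\RewardFunc_2}$. Combining the two bounds yields $\SupNorm{\OptQFunc_{\RewardFunc_1}-\OptQFunc_{\RewardFunc_2}}\le\gamma\,\SupNorm{\OptQFunc_{\RewardFunc_1}-\OptQFunc_{\RewardFunc_2}}+\SupNorm{\RewardFunc_1-\RewardFunc_2}$; since $\gamma<1$ by Assumption~\ref{assump:discount}, rearranging produces the claimed constant $1/(1-\gamma)$.

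The one point that genuinely needs care, though it is already implicit in the setup, is well-posedness: one must check that $\BellmanOp_\RewardFunc$ maps $\Cont(\StateSpace\times\ActionSpace)$ into itself, so that the fixed points live in the right Banach space and the supremum-norm manipulations are legitimate. This is where Assumptions~\ref{assump:spaces}--\ref{assump:transition} enter: compactness of $\ActionSpace$ together with Berge's maximum theorem ensures $s'\mapsto\max_{a'\in\ActionSpace}Q(s',a')$ is continuous whenever $Q$ is, Assumption~\ref{assump:transition} then makes $(s,a)\mapsto\int_{\StateSpace}\max_{a'}Q(s',a')\,\TransKernel(ds'|s,a)$ continuous, and adding the continuous reward preserves continuity while boundedness is immediate on compact spaces. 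I expect this verification to be the main (and essentially only) obstacle, and it is routine. As an alternative that sidesteps the fixed-point bookkeeping, one could instead write $\OptQFunc_\RewardFunc=\lim_{n\to\infty}\BellmanOp_\RewardFunc^n Q_0$ from a common initial $Q_0$, bound $\SupNorm{\BellmanOp_{\RewardFunc_1}^nQ_0-\BellmanOp_{\RewardFunc_2}^nQ_0}\le\sum_{k=0}^{n-1}\gamma^k\,\SupNorm{\RewardFunc_1-\RewardFunc_2}$ by induction on $n$, and pass to the limit.
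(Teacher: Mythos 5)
Your proposal is correct and follows essentially the same argument as the paper: both exploit that $\BellmanOp_\RewardFunc$ depends on $\RewardFunc$ only additively and that the max operator is nonexpansive, yielding $\SupNorm{\OptQFunc_{\RewardFunc_1}-\OptQFunc_{\RewardFunc_2}}\le\SupNorm{\RewardFunc_1-\RewardFunc_2}+\gamma\SupNorm{\OptQFunc_{\RewardFunc_1}-\OptQFunc_{\RewardFunc_2}}$ and then rearranging with $\gamma<1$. Your triangle-inequality decomposition through $\BellmanOp_{\RewardFunc_1}\OptQFunc_{\RewardFunc_2}$ is just a repackaging of the paper's direct pointwise bound, and your well-posedness remark matches what the paper handles via the standing framework.
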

\begin{proof}
Let $Q_1 = \OptQFunc_{\RewardFunc_1}$ and $Q_2 = \OptQFunc_{\RewardFunc_2}$. By definition, $Q_1 = \BellmanOp_{\RewardFunc_1} Q_1$ and $Q_2 = \BellmanOp_{\RewardFunc_2} Q_2$. Then,
\begin{align*}
|Q_1(s, a) - Q_2(s, a)| &= |(\BellmanOp_{\RewardFunc_1} Q_1)(s, a) - (\BellmanOp_{\RewardFunc_2} Q_2)(s, a)| \\
&= \left| \RewardFunc_1(s, a) - \RewardFunc_2(s, a) + \gamma \int_{\StateSpace} \left( \max_{a'} Q_1(s', a') - \max_{a'} Q_2(s', a') \right) \TransKernel(ds' | s, a) \right| \\
&\le |\RewardFunc_1(s, a) - \RewardFunc_2(s, a)| + \gamma \int_{\StateSpace} \left| \max_{a'} Q_1(s', a') - \max_{a'} Q_2(s', a') \right| \TransKernel(ds' | s, a).
\end{align*}
Using the inequality $|\max f - \max g| \le \sup |f - g| = \SupNorm{f - g}$, we have
\begin{align*}
|Q_1(s, a) - Q_2(s, a)| &\le \SupNorm{\RewardFunc_1 - \RewardFunc_2} + \gamma \int_{\StateSpace} \SupNorm{Q_1 - Q_2} \TransKernel(ds' | s, a) \\
&\le \SupNorm{\RewardFunc_1 - \RewardFunc_2} + \gamma \SupNorm{Q_1 - Q_2}.
\end{align*}
Taking the supremum over $(s, a)$ yields $\SupNorm{Q_1 - Q_2} \le \SupNorm{\RewardFunc_1 - \RewardFunc_2} + \gamma \SupNorm{Q_1 - Q_2}$, which rearranges to the desired result.
\end{proof}

\subsection{Analysis of the Argmax Correspondence}

The stability of the optimal policy map is critically dependent on the behavior of the set of optimal actions $A^*(s; \RewardFunc) = \Argmax_{a \in \ActionSpace} \OptQFunc_\RewardFunc(s, a)$. We analyze this set-valued map (or correspondence) $\RewardFunc \mapsto A^*(\cdot; \RewardFunc)$. The result is a direct application of Berge's Maximum Theorem, see e.g., \cite{berge1963topological} and \cite{AliprantisBorder2006InfiniteDimAnalysis}.

\begin{lemma} \label{lemma:argmax_uhc}
Let Assumptions \ref{assump:spaces}-\ref{assump:discount} hold. For each fixed $s \in \StateSpace$, the argmax correspondence $\Phi_s: \RewardSpace \twoheadrightarrow \ActionSpace$ defined by $\Phi_s(\RewardFunc) = A^*(s; \RewardFunc)$ has the following properties:
\begin{enumerate}
    \item $\Phi_s(\RewardFunc)$ is non-empty and compact for each $\RewardFunc \in \RewardSpace$.
    \item $\Phi_s$ is upper hemi-continuous (u.h.c.) from $(\RewardSpace, \SupNorm{\cdot})$ to the space of compact subsets of $\ActionSpace$, where the latter is endowed with the topology induced by the Hausdorff metric.
\end{enumerate}
\end{lemma}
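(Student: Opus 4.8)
The plan is to recognize $\Phi_s$ as the solution correspondence of a parametric optimization problem and to apply Berge's Maximum Theorem. Fix the state $s$, and regard the reward $\RewardFunc \in \RewardSpace$ as the parameter, the action $a \in \ActionSpace$ as the decision variable, the objective as $g_s(\RewardFunc, a) := \OptQFunc_\RewardFunc(s,a)$, and the feasible set as the constant constraint correspondence $\RewardFunc \mapsto \ActionSpace$. By Assumption~\ref{assump:spaces} the set $\ActionSpace$ is a nonempty compact metric space, so this constraint correspondence is nonempty-valued, compact-valued, and continuous (trivially both upper and lower hemi-continuous). Berge's Maximum Theorem therefore reduces the whole lemma to a single nontrivial hypothesis: joint continuity of $g_s$ on $\RewardSpace \times \ActionSpace$.

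To establish joint continuity I would combine two ingredients. First, Proposition~\ref{prop:q_continuity} supplies a modulus of continuity in the reward argument that is \emph{uniform} over all $(s,a)$, namely $|\OptQFunc_{\RewardFunc_1}(s,a) - \OptQFunc_{\RewardFunc_2}(s,a)| \le \tfrac{1}{1-\gamma}\SupNorm{\RewardFunc_1 - \RewardFunc_2}$. Second, for each fixed $\RewardFunc$ the map $a \mapsto \OptQFunc_\RewardFunc(s,a)$ is continuous on $\ActionSpace$, since $\OptQFunc_\RewardFunc \in \Cont(\StateSpace \times \ActionSpace)$ by construction. The triangle inequality
\[
|g_s(\RewardFunc_1, a_1) - g_s(\RewardFunc_2, a_2)| \;\le\; \tfrac{1}{1-\gamma}\SupNorm{\RewardFunc_1 - \RewardFunc_2} \;+\; \bigl|\OptQFunc_{\RewardFunc_2}(s, a_1) - \OptQFunc_{\RewardFunc_2}(s, a_2)\bigr|
\]
then gives continuity of $g_s$ at an arbitrary point $(\RewardFunc_2, a_2)$: the first term vanishes as $\RewardFunc_1 \to \RewardFunc_2$, and the second vanishes as $a_1 \to a_2$ by continuity of $\OptQFunc_{\RewardFunc_2}(s, \cdot)$. (This actually yields something stronger — Lipschitz, hence uniform, dependence on the reward coordinate — but plain joint continuity is all Berge needs.)

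With joint continuity in hand, Berge's Maximum Theorem delivers both parts at once: the value map $\RewardFunc \mapsto \max_{a \in \ActionSpace}\OptQFunc_\RewardFunc(s,a)$ is continuous; the set $\Phi_s(\RewardFunc)$ is nonempty (a continuous function attains its maximum on the compact set $\ActionSpace$) and compact (it is the preimage of the optimal value under the continuous function $g_s(\RewardFunc, \cdot)$, hence closed in, and therefore a compact subset of, $\ActionSpace$); and $\Phi_s$ is upper hemi-continuous. For the wording in the statement, I would then invoke the standard fact that, when the target is a compact metric space, a compact-valued correspondence is u.h.c. in the open-neighborhood sense if and only if it is upper semicontinuous in the Hausdorff-excess sense, i.e. $e\bigl(\Phi_s(\RewardFunc), \Phi_s(\RewardFunc_0)\bigr) := \sup_{a \in \Phi_s(\RewardFunc)} \inf_{a' \in \Phi_s(\RewardFunc_0)} \MetricAction(a,a') \to 0$ as $\RewardFunc \to \RewardFunc_0$; this is exactly what "u.h.c. with respect to the Hausdorff metric" means here. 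Genuine $\Hausdorff$-continuity would additionally require lower hemi-continuity of $\Phi_s$, which generically fails precisely where the optimal-action set jumps — and that failure is the engine of the discontinuity results developed later, so the lemma deliberately claims only the u.h.c. half.

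I do not expect a real obstacle: the argument is essentially a verification of the hypotheses of a classical theorem. The two places needing care are (i) not conflating separate continuity in each variable with joint continuity — one must explicitly use the \emph{uniform} reward-modulus from Proposition~\ref{prop:q_continuity}, which is what legitimizes the triangle-inequality splitting above; and (ii) being precise about which hyperspace notion is meant, since only the upper (excess) half of Hausdorff convergence is being asserted.
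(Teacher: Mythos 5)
Your proposal is correct. It differs from the paper's written proof mainly in packaging: you verify the hypotheses of Berge's Maximum Theorem --- joint continuity of $(\RewardFunc,a)\mapsto \OptQFunc_\RewardFunc(s,a)$, obtained by splitting with the triangle inequality into the uniform (Lipschitz) reward modulus from Proposition~\ref{prop:q_continuity} and the continuity of $\OptQFunc_{\RewardFunc_0}(s,\cdot)$ on the compact $\ActionSpace$, plus the trivially continuous constant constraint correspondence $\RewardFunc\mapsto\ActionSpace$ --- and then cite Berge as a black box, which also hands you continuity of the value map $\RewardFunc\mapsto\max_a \OptQFunc_\RewardFunc(s,a)$ as a byproduct. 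The paper states that the lemma is an application of Berge's theorem but its actual proof is self-contained: part~1 is the same Weierstrass-plus-closed-subset argument you give, and for part~2 it proves the closed-graph property directly by a sequential argument (take $\RewardFunc_n\to\RewardFunc_0$, $a_n\in A^*(s;\RewardFunc_n)$, $a_n\to a_0$, pass to the limit in the optimality inequality using exactly your two ingredients) and then invokes the equivalence of closed graph and u.h.c. for compact-valued correspondences into a compact Hausdorff range. So the mathematical content is identical; your route trades the paper's elementary limit-passing for an explicit joint-continuity check plus a classical citation. Your closing remarks are also consistent with the paper: the statement asserts only the upper (excess) half of Hausdorff convergence, and the paper's accompanying remark likewise stresses that lower hemi-continuity can fail, which is precisely what drives the later discontinuity results.
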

\begin{proof}
\textbf{1. Non-empty and compact values}: 
    Under Assumptions \ref{assump:spaces}-\ref{assump:discount}, it is a standard result that the Bellman optimality operator $\BellmanOp_\RewardFunc$ maps the space of bounded continuous functions $\Cont(\StateSpace \times \ActionSpace)$ to itself. As established by the Banach fixed-point theorem, its unique fixed point, $\OptQFunc_\RewardFunc$, must therefore be an element of this space, i.e., $\OptQFunc_\RewardFunc \in \Cont(\StateSpace \times \ActionSpace)$. Thus, for any fixed $\RewardFunc \in \RewardSpace$ and $s \in \StateSpace$, the function $a \mapsto \OptQFunc_\RewardFunc(s, a)$ is continuous on the action space $\ActionSpace$.Since $\ActionSpace$ is compact (Assumption \ref{assump:spaces}), the Weierstrass Extreme Value Theorem ensures that $\OptQFunc_\RewardFunc(s, \cdot)$ attains its maximum on $\ActionSpace$. Therefore, the set of maximizers $A^*(s; \RewardFunc)$ is non-empty.
    Furthermore, because $\OptQFunc_\RewardFunc(s, \cdot)$ is continuous, the set $A^*(s; \RewardFunc)$ can be written as $\{a \in \ActionSpace \mid \OptQFunc_\RewardFunc(s, a) = \max_{a' \in \ActionSpace} \OptQFunc_\RewardFunc(s, a') \}$, which is a closed subset of the compact space $\ActionSpace$. A closed subset of a compact space is compact. Thus, $A^*(s; \RewardFunc)$ is compact-valued.

\textbf{2. Upper hemi-continuity}:
    We prove u.h.c. using the sequential characterization: $\Phi_s$ is u.h.c. at $\RewardFunc_0 \in \RewardSpace$ if for any sequence $(\RewardFunc_n)_{n \in \N}$ in $\RewardSpace$ converging to $\RewardFunc_0$ (i.e., $\SupNorm{\RewardFunc_n - \RewardFunc_0} \to 0$), and for any sequence $(a_n)_{n \in \N}$ in $\ActionSpace$ such that $a_n \in A^*(s; \RewardFunc_n)$ for all $n$ and $a_n \to a_0$ for some $a_0 \in \ActionSpace$, it must follow that $a_0 \in A^*(s; \RewardFunc_0)$. This is equivalent to showing that the graph of the correspondence is closed, which for compact-valued correspondences with a compact range space ($\ActionSpace$) implies u.h.c. (see e.g., \cite{AliprantisBorder2006InfiniteDimAnalysis}, Theorem 17.11 and 17.31).

    Let $(\RewardFunc_n)_{n \in \N}$ be a sequence in $\RewardSpace$ such that $\RewardFunc_n \to \RewardFunc_0$. Let $(a_n)_{n \in \N}$ be a sequence in $\ActionSpace$ such that $a_n \in A^*(s; \RewardFunc_n)$ for each $n$, and assume $a_n \to a_0$ in $\ActionSpace$.
    By definition of $a_n \in A^*(s; \RewardFunc_n)$, we have:
    \begin{equation} \label{eq:an_optimality}
    \OptQFunc_{\RewardFunc_n}(s, a_n) \ge \OptQFunc_{\RewardFunc_n}(s, a) \quad \text{for all } a \in \ActionSpace.
    \end{equation}
    We want to show that $a_0 \in A^*(s; \RewardFunc_0)$, i.e., $\OptQFunc_{\RewardFunc_0}(s, a_0) \ge \OptQFunc_{\RewardFunc_0}(s, a)$ for all $a \in \ActionSpace$.

    Consider the term $\OptQFunc_{\RewardFunc_n}(s, a_n)$. We have:
    \begin{align*}
    |\OptQFunc_{\RewardFunc_n}(s, a_n) - \OptQFunc_{\RewardFunc_0}(s, a_0)| \le |\OptQFunc_{\RewardFunc_n}(s, a_n) - \OptQFunc_{\RewardFunc_0}(s, a_n)| + |\OptQFunc_{\RewardFunc_0}(s, a_n) - \OptQFunc_{\RewardFunc_0}(s, a_0)|.
    \end{align*}
    The first term on the right-hand side satisfies $|\OptQFunc_{\RewardFunc_n}(s, a_n) - \OptQFunc_{\RewardFunc_0}(s, a_n)| \le \SupNorm{\OptQFunc_{\RewardFunc_n} - \OptQFunc_{\RewardFunc_0}}$. By Proposition \ref{prop:q_continuity}, $\SupNorm{\OptQFunc_{\RewardFunc_n} - \OptQFunc_{\RewardFunc_0}} \to 0$ as $\RewardFunc_n \to \RewardFunc_0$.
    For the second term, since $\OptQFunc_{\RewardFunc_0}(s, \cdot)$ is continuous on $\ActionSpace$ (as $\OptQFunc_{\RewardFunc_0} \in \Cont(\StateSpace \times \ActionSpace)$) and $a_n \to a_0$, it follows that $|\OptQFunc_{\RewardFunc_0}(s, a_n) - \OptQFunc_{\RewardFunc_0}(s, a_0)| \to 0$.
    Therefore, $\lim_{n \to \infty} \OptQFunc_{\RewardFunc_n}(s, a_n) = \OptQFunc_{\RewardFunc_0}(s, a_0)$.

    Now, for any fixed $a \in \ActionSpace$, consider the right-hand side of inequality (\ref{eq:an_optimality}), $\OptQFunc_{\RewardFunc_n}(s, a)$. Since $\SupNorm{\OptQFunc_{\RewardFunc_n} - \OptQFunc_{\RewardFunc_0}} \to 0$, we have $\lim_{n \to \infty} \OptQFunc_{\RewardFunc_n}(s, a) = \OptQFunc_{\RewardFunc_0}(s, a)$.

    Taking the limit as $n \to \infty$ in inequality (\ref{eq:an_optimality}), we obtain:
    $$ \OptQFunc_{\RewardFunc_0}(s, a_0) \ge \OptQFunc_{\RewardFunc_0}(s, a) \quad \text{for all } a \in \ActionSpace. $$
    This implies that $a_0 \in \Argmax_{a \in \ActionSpace} \OptQFunc_{\RewardFunc_0}(s, a)$, i.e., $a_0 \in A^*(s; \RewardFunc_0)$.
    This argument directly establishes that the graph of the correspondence $\Phi_s$ is closed. Since the range space $\ActionSpace$ is a compact Hausdorff space, the closed graph property is equivalent to upper hemi-continuity. Thus, the upper hemi-continuity of $\Phi_s$ is proven.
\end{proof}

\begin{remark}
The sequential definition of upper hemi-continuity used in the proof is: if $(\RewardFunc_n, a_n)$ is a sequence in the graph of the correspondence (i.e., $a_n \in A^*(s; \RewardFunc_n)$) such that $\RewardFunc_n \to \RewardFunc_0$ and $a_n \to a_0$, then $(\RewardFunc_0, a_0)$ must also be in the graph (i.e., $a_0 \in A^*(s; \RewardFunc_0)$). This is indeed the definition of a closed graph correspondence. For correspondences into a compact Hausdorff space (like $\ActionSpace$), the closed graph property is equivalent to upper hemi-continuity and the correspondence being closed-valued (hence compact-valued since $\ActionSpace$ is compact) (cf. \cite{AliprantisBorder2006InfiniteDimAnalysis}, Theorem 17.11). The u.h.c. property is fundamental, but it does not imply lower hemi-continuity (l.h.c.) nor the continuity of arbitrary single-valued selections from $A^*(s; \RewardFunc)$. Discontinuities in policy maps often arise when $A^*(s; \RewardFunc)$ is not l.h.c., which is typical when the set of maximizers changes its structure (e.g., its cardinality or dimension).
\end{remark}

\subsection{Conditions for Continuity of the Reward-Policy Map}\label{subsec:cont-reward-policy}

The continuity of a specific policy map $M_{RL}: \RewardFunc \mapsto \OptPolicy_\RewardFunc$ depends critically on whether the optimal action is unique.

\begin{theorem}[Continuity of Deterministic Optimal Policy Map under Uniqueness] \label{thm:continuity_rigorous}
Let Assumptions \ref{assump:spaces}-\ref{assump:discount} hold. Let $\RewardFunc_0 \in \RewardSpace$. Suppose there exists an open neighborhood $\mathcal{N}(\RewardFunc_0) \subset \RewardSpace$ of $\RewardFunc_0$ such that for all $\RewardFunc \in \mathcal{N}(\RewardFunc_0)$ and all $s \in \StateSpace$, the set of optimal actions $A^*(s; \RewardFunc)$ is a singleton, denoted $\{a^*(s; \RewardFunc)\}$. Let $\PolicySpace_{det}$ be the space of all functions $\pi: \StateSpace \to \ActionSpace$. Define the policy map $M_{RL}: \mathcal{N}(\RewardFunc_0) \to \PolicySpace_{det}$ by setting $M_{RL}(\RewardFunc) = \pi_\RewardFunc$, where $\pi_\RewardFunc(s) = a^*(s; \RewardFunc)$ for all $s \in \StateSpace$.
Equip $\PolicySpace_{det}$ with the topology of pointwise convergence: a sequence $(\pi_n)_{n \in \N}$ in $\PolicySpace_{det}$ converges to $\pi \in \PolicySpace_{det}$ if for every $s \in \StateSpace$, $\MetricAction(\pi_n(s), \pi(s)) \to 0$ as $n \to \infty$. Then the map $M_{RL}$ is continuous at $\RewardFunc_0$.
\end{theorem}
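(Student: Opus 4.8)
The plan is to establish \emph{sequential} continuity of $M_{RL}$ at $\RewardFunc_0$, which is sufficient here because the domain $(\RewardSpace, \SupNorm{\cdot}) = \Cont(\StateSpace\times\ActionSpace)$ is a metric space: a map out of a metric space is continuous at a point precisely when it carries every sequence converging to that point to a convergent net/sequence, regardless of the topology placed on the codomain. Moreover, by definition, convergence in $\PolicySpace_{det}$ under the topology of pointwise convergence means convergence of the values $\pi_n(s)$ in $\ActionSpace$ for every fixed $s$. So it suffices to take an arbitrary sequence $(\RewardFunc_n)_{n\in\N}$ with $\SupNorm{\RewardFunc_n - \RewardFunc_0}\to 0$, fix an arbitrary $s\in\StateSpace$, and show $\MetricAction\bigl(a^*(s;\RewardFunc_n), a^*(s;\RewardFunc_0)\bigr)\to 0$. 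Since $\mathcal{N}(\RewardFunc_0)$ is open and $\RewardFunc_n\to\RewardFunc_0$, we have $\RewardFunc_n\in\mathcal{N}(\RewardFunc_0)$ for all sufficiently large $n$; discarding the finitely many initial terms changes nothing, so I may assume the singleton hypothesis applies to every $\RewardFunc_n$, making $a_n := a^*(s;\RewardFunc_n)$ well defined. Write $a_0 := a^*(s;\RewardFunc_0)$, so $A^*(s;\RewardFunc_0) = \{a_0\}$.

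The core of the argument is the standard fact that a singleton-valued upper hemi-continuous correspondence into a compact Hausdorff space is continuous as a single-valued function; I would prove this directly rather than quote it. Suppose, for contradiction, that $a_n\not\to a_0$. Then there exist $\varepsilon>0$ and a subsequence $(a_{n_k})_k$ with $\MetricAction(a_{n_k},a_0)\ge\varepsilon$ for all $k$. Because $\ActionSpace$ is a compact metric space (Assumption~\ref{assump:spaces}), it is sequentially compact, so after passing to a further subsequence (still denoted $(a_{n_k})_k$) we may assume $a_{n_k}\to\bar a$ for some $\bar a\in\ActionSpace$; continuity of the metric gives $\MetricAction(\bar a,a_0)\ge\varepsilon$, hence $\bar a\ne a_0$.

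Now I would invoke the closed-graph property of the argmax correspondence $\Phi_s$ established in Lemma~\ref{lemma:argmax_uhc}: we have $\RewardFunc_{n_k}\to\RewardFunc_0$ in $\RewardSpace$, $a_{n_k}\in A^*(s;\RewardFunc_{n_k})$ for every $k$, and $a_{n_k}\to\bar a$; the lemma (whose proof itself rests on the Lipschitz stability $\SupNorm{\OptQFunc_{\RewardFunc_{n_k}}-\OptQFunc_{\RewardFunc_0}}\to 0$ from Proposition~\ref{prop:q_continuity} together with continuity of $\OptQFunc_{\RewardFunc_0}(s,\cdot)$) forces $\bar a\in A^*(s;\RewardFunc_0) = \{a_0\}$, i.e.\ $\bar a = a_0$, contradicting $\bar a\ne a_0$. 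Therefore $a_n\to a_0$. Since $s\in\StateSpace$ was arbitrary, $\pi_{\RewardFunc_n}\to\pi_{\RewardFunc_0}$ in the topology of pointwise convergence, and since the sequence $(\RewardFunc_n)\to\RewardFunc_0$ was arbitrary, $M_{RL}$ is continuous at $\RewardFunc_0$.

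The only genuinely delicate point is the reduction in the first paragraph: one must be careful that sequential continuity out of a metric domain really does yield topological continuity into a codomain whose topology (the product topology $\ActionSpace^{\StateSpace}$ when $\StateSpace$ is uncountable) need not be metrizable — but this is a standard fact about maps from metric spaces. Everything else is a routine compactness-plus-closed-graph argument, with all the substantive analytic content already supplied by Proposition~\ref{prop:q_continuity} and Lemma~\ref{lemma:argmax_uhc}; the uniqueness hypothesis is precisely what is needed both to make $M_{RL}$ single-valued and to collapse the limit $\bar a$ to $a_0$.
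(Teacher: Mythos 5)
Your proposal is correct and follows essentially the same route as the paper: reduce to pointwise (per-state) convergence of the selection $\RewardFunc \mapsto a^*(s;\RewardFunc)$, and combine the upper hemi-continuity / closed-graph property of the argmax correspondence from Lemma~\ref{lemma:argmax_uhc} with the uniqueness hypothesis. The only difference is cosmetic: where the paper cites the standard fact that a compact-valued, u.h.c., single-valued correspondence is continuous as a function (Aliprantis--Border, Lemma 17.6), you prove that fact directly via the sequential-compactness-plus-closed-graph contradiction argument, which is exactly how the cited result is established.
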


\begin{proof}
We want to show that if $(\RewardFunc_n)_{n \in \N}$ is a sequence in $\mathcal{N}(\RewardFunc_0)$ such that $\RewardFunc_n \to \RewardFunc_0$ in the $\SupNorm{\cdot}$ topology, then $M_{RL}(\RewardFunc_n) \to M_{RL}(\RewardFunc_0)$ in the topology of pointwise convergence.
Let $\pi_n = M_{RL}(\RewardFunc_n)$ and $\pi_0 = M_{RL}(\RewardFunc_0)$. By the definition of $M_{RL}$, this means $\pi_n(s) = a^*(s; \RewardFunc_n)$ and $\pi_0(s) = a^*(s; \RewardFunc_0)$.
The convergence $\pi_n \to \pi_0$ requires that for every $s \in \StateSpace$, $\MetricAction(a^*(s; \RewardFunc_n), a^*(s; \RewardFunc_0)) \to 0$ as $n \to \infty$.

Fix an arbitrary $s \in \StateSpace$. Consider the correspondence $\Phi_s: \RewardSpace \twoheadrightarrow \ActionSpace$ defined by $\Phi_s(\RewardFunc) = A^*(s; \RewardFunc)$.
By Lemma \ref{lemma:argmax_uhc}, $\Phi_s$ is upper hemi-continuous (u.h.c.) and its values $A^*(s; \RewardFunc)$ are non-empty compact subsets of $\ActionSpace$.
By the hypothesis of the theorem, for all $\RewardFunc \in \mathcal{N}(\RewardFunc_0)$, $\Phi_s(\RewardFunc) = \{a^*(s; \RewardFunc)\}$ is a singleton.

According to a standard result in the theory of correspondences (e.g., \cite{AliprantisBorder2006InfiniteDimAnalysis}, Lemma 17.6), a compact-valued, u.h.c. correspondence that is single-valued on an open set (by hypothesis the maximizer is unique on $\mathcal{N}(\RewardFunc_0)$) is continuous as a single-valued function on that open set. Specifically, let $f_s: \mathcal{N}(\RewardFunc_0) \to \ActionSpace$ be the function defined by $f_s(\RewardFunc) = a^*(s; \RewardFunc)$ (this is well-defined as $A^*(s; \RewardFunc)$ is a singleton on $\mathcal{N}(\RewardFunc_0)$). Since $\Phi_s$ is u.h.c. and single-valued on the open set $\mathcal{N}(\RewardFunc_0)$, $f_s$ is continuous on $\mathcal{N}(\RewardFunc_0)$.

Therefore, for each fixed $s \in \StateSpace$, since $\RewardFunc_n \to \RewardFunc_0$ and $\RewardFunc_n \in \mathcal{N}(\RewardFunc_0)$ (for $n$ sufficiently large, as $\mathcal{N}(\RewardFunc_0)$ is a neighborhood of $\RewardFunc_0$), the continuity of $f_s$ at $\RewardFunc_0$ implies that $f_s(\RewardFunc_n) \to f_s(\RewardFunc_0)$. This translates to $a^*(s; \RewardFunc_n) \to a^*(s; \RewardFunc_0)$ in $\ActionSpace$ (i.e., $\MetricAction(a^*(s; \RewardFunc_n), a^*(s; \RewardFunc_0)) \to 0$).

Since this holds for every $s \in \StateSpace$, it follows by definition that $M_{RL}(\RewardFunc_n) \to M_{RL}(\RewardFunc_0)$ in the topology of pointwise convergence.
Thus, $M_{RL}$ is continuous at $\RewardFunc_0$.
\end{proof}

\begin{remark}[Measurability of the Optimal Policy]
For $\pi_\RewardFunc$ to be a well-defined policy, the map $s \mapsto a^*(s; \RewardFunc)$ should be measurable. Given the continuity of $Q^*_R$ (if $R$ is continuous and $P$ has Feller properties, $\OptQFunc_\RewardFunc$ is continuous on $S \times A$) and the uniqueness assumption, $s \mapsto a^*(s;R)$ will generally be measurable.
\end{remark}

\subsection{Conditions for Discontinuity of the Reward-Policy Map}\label{subsec:discont-reward-policy}

When the optimal action is not unique, discontinuity of the policy map is generally expected.

\begin{proposition}[Discontinuity under Non-Uniqueness for Deterministic Policies] \label{prop:discontinuity_deterministic}
Let Assumptions \ref{assump:spaces}-\ref{assump:discount} hold. Suppose for $\RewardFunc_0 \in \RewardSpace$, there exists a state $s_0 \in \StateSpace$ such that the set of optimal actions $A^*(s_0; \RewardFunc_0)$ is finite and contains at least two distinct actions, say $a_1, a_2 \in A^*(s_0; \RewardFunc_0)$ with $a_1 \neq a_2$. Let $M_{RL}: \RewardSpace \to \PolicySpace$ be a policy map that selects a deterministic policy $\OptPolicy_\RewardFunc$ such that $\OptPolicy_\RewardFunc(s) \in A^*(s; \RewardFunc)$ for all $s \in \StateSpace$ (e.g., the selection rule for $M_{RL}(\RewardFunc_0)$ results in $\OptPolicy_{\RewardFunc_0}(s_0) = a_1$), then the map $M_{RL}$ is discontinuous at $\RewardFunc_0$ under the topology of pointwise convergence for policies (i.e., $\Policy_n \to \Policy$ if $\MetricAction(\Policy_n(s), \Policy(s)) \to 0$ for all $s$).
\end{proposition}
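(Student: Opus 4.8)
The plan is to exhibit a sequence of rewards $\RewardFunc_n \to \RewardFunc_0$ in $(\RewardSpace,\SupNorm{\cdot})$ for which the optimal-action set at $s_0$ collapses to the single point $\{a_2\}$; this forces $M_{RL}(\RewardFunc_n)(s_0)=a_2$ for every $n$, while $M_{RL}(\RewardFunc_0)(s_0)=a_1$ by hypothesis, so $\MetricAction\big(M_{RL}(\RewardFunc_n)(s_0),M_{RL}(\RewardFunc_0)(s_0)\big)=\MetricAction(a_1,a_2)>0$ never tends to $0$ and sequential continuity in the topology of pointwise convergence fails. (If the given selection rule happens to pick some other element of $A^*(s_0;\RewardFunc_0)$ at $s_0$, one simply relabels and perturbs toward any optimal action distinct from it; this is possible since $A^*(s_0;\RewardFunc_0)$ has at least two elements.)

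The main idea is to build the perturbation in ``$Q$-space'' rather than directly in reward space, exploiting the invertibility of the Bellman optimality equation. First I would fix a target action-value function $\tilde Q_n(s,a):=\OptQFunc_{\RewardFunc_0}(s,a)-\tfrac1n\,\MetricAction(a,a_2)$. This is continuous on $\StateSpace\times\ActionSpace$, and $\SupNorm{\tilde Q_n-\OptQFunc_{\RewardFunc_0}}=\tfrac1n\sup_{a\in\ActionSpace}\MetricAction(a,a_2)\to 0$ by compactness of $\ActionSpace$. At the state $s_0$: since $a_2\in A^*(s_0;\RewardFunc_0)$ we have $\tilde Q_n(s_0,a_2)=\OptQFunc_{\RewardFunc_0}(s_0,a_2)=\max_{a\in\ActionSpace}\OptQFunc_{\RewardFunc_0}(s_0,a)=:M$, whereas for every $a\ne a_2$ we get $\tilde Q_n(s_0,a)\le M-\tfrac1n\MetricAction(a,a_2)<M$. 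Hence $\Argmax_{a\in\ActionSpace}\tilde Q_n(s_0,a)=\{a_2\}$ for every $n$ (and, notably, the finiteness of $A^*(s_0;\RewardFunc_0)$ is not even needed for this).

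Next I would realize $\tilde Q_n$ as a genuine optimal $Q$-function by inverting the Bellman operator: set $\RewardFunc_n(s,a):=\tilde Q_n(s,a)-\gamma\int_{\StateSpace}\max_{a'\in\ActionSpace}\tilde Q_n(s',a')\,\TransKernel(ds'\mid s,a)$. Assumption~\ref{assump:transition}, together with continuity of $s'\mapsto\max_{a'}\tilde Q_n(s',a')$ (which follows from uniform continuity of $\tilde Q_n$ on the compact domain), ensures $\RewardFunc_n\in\RewardSpace=\Cont(\StateSpace\times\ActionSpace)$; by construction $\BellmanOp_{\RewardFunc_n}\tilde Q_n=\tilde Q_n$, so uniqueness of the fixed point gives $\OptQFunc_{\RewardFunc_n}=\tilde Q_n$ and therefore $A^*(s_0;\RewardFunc_n)=\{a_2\}$. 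Because $\RewardFunc_0$ likewise satisfies $\RewardFunc_0(s,a)=\OptQFunc_{\RewardFunc_0}(s,a)-\gamma\int\max_{a'}\OptQFunc_{\RewardFunc_0}(s',a')\,\TransKernel(ds'\mid s,a)$, subtracting and applying $|\max f-\max g|\le\SupNorm{f-g}$ yields $\SupNorm{\RewardFunc_n-\RewardFunc_0}\le(1+\gamma)\SupNorm{\tilde Q_n-\OptQFunc_{\RewardFunc_0}}\to 0$, so $\RewardFunc_n\to\RewardFunc_0$ in $(\RewardSpace,\SupNorm{\cdot})$.

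Combining the pieces finishes the argument: $M_{RL}(\RewardFunc_n)(s_0)\in A^*(s_0;\RewardFunc_n)=\{a_2\}$ for every $n$ while $M_{RL}(\RewardFunc_0)(s_0)=a_1\ne a_2$, so $M_{RL}(\RewardFunc_n)$ does not converge pointwise to $M_{RL}(\RewardFunc_0)$ despite $\RewardFunc_n\to\RewardFunc_0$; since $\RewardSpace$ is a metric space, this rules out continuity of $M_{RL}$ at $\RewardFunc_0$. The only steps warranting care are the two flagged above: verifying $\RewardFunc_n\in\Cont(\StateSpace\times\ActionSpace)$ (this is precisely where the Feller property of Assumption~\ref{assump:transition} is used) and controlling the perturbation size after inverting Bellman (the elementary $(1+\gamma)$-Lipschitz estimate). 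I would expect the ``detour through $Q$-space'' to be the conceptual crux: a naive additive reward bump of size $\tfrac1n$ toward $a_2$ can fail to make $a_2$ uniquely optimal once $\gamma\ge\tfrac12$, because its effect is amplified by a factor $\gamma/(1-\gamma)$ along trajectories revisiting $s_0$, whereas prescribing the $Q$-function directly sidesteps this issue altogether.
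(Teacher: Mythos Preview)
Your proof is correct and shares the paper's core strategy: perturb in $Q$-space rather than reward space, then invert the Bellman equation to recover a convergent sequence of rewards, with the same $(1+\gamma)$-Lipschitz estimate controlling $\SupNorm{\RewardFunc_n-\RewardFunc_0}$. The difference lies in the choice of perturbation. The paper sets $Q_\varepsilon = Q_0 + \varepsilon\varphi$ for a continuous bump $\varphi$ with $\varphi(s_0,a_2)=1$ and $\varphi(s_0,a_j)=0$ at every other $a_j\in A^*(s_0;\RewardFunc_0)$; constructing such a $\varphi$ is where the finiteness of $A^*(s_0;\RewardFunc_0)$ enters, and the paper must then split into two cases (formerly optimal versus formerly suboptimal competitors) to verify that $a_2$ becomes uniquely optimal. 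Your subtractive perturbation $\tilde Q_n = Q_0 - \tfrac{1}{n}\MetricAction(\cdot,a_2)$ is more economical: it dispatches all $a\neq a_2$ in a single line via $\tilde Q_n(s_0,a)\le M-\tfrac{1}{n}\MetricAction(a,a_2)<M$, and, as you correctly observe, it does not require $A^*(s_0;\RewardFunc_0)$ to be finite. The paper's bump is local (supported near $(s_0,a_2)$) whereas your perturbation is global in the action coordinate, but since only the argmax at $s_0$ is at stake this distinction is immaterial for the discontinuity claim.
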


\begin{proof}
This proof demonstrates discontinuity directly by constructing a sequence of reward functions $(\RewardFunc_\varepsilon)_{\varepsilon>0}$ that converges to $\RewardFunc_0$, yet for which the optimal policy discontinuously switches from $a_1$ to $a_2$ at state $s_0$ for any $\varepsilon > 0$.

The core of this proof is a shift in perspective: instead of perturbing the reward function $\RewardFunc$ and analyzing its complex effect on the optimal Q-function $\OptQFunc$, we will directly define a perturbed optimal Q-function $\QFunc_\varepsilon$ and then use the Bellman equation to find the reward function $\RewardFunc_\varepsilon$ that generates it.

\noindent\textbf{1. Construct a Perturbation in the Q-Function Space}

Let $Q_0 = \OptQFunc_{\RewardFunc_0}$. By assumption, $Q_0(s_0, a_1) = Q_0(s_0, a_2)$. We first define a continuous "bump" function $\varphi \in \Cont(\StateSpace \times \ActionSpace)$ with the following properties:
\begin{itemize}
    \item $0 \le \varphi(s,a) \le 1$ for all $(s,a)$.
    \item $\varphi(s_0, a_2) = 1$.
    \item $\varphi(s_0, a_j) = 0$ for all $a_j\neq a_2$ and $a_j\in A^*(s_0; \RewardFunc_0)$.
\end{itemize}
Such a function exists and can be constructed as Lemma \ref{lem:bump_properties}. Alternatively, since $\StateSpace\times\ActionSpace$ is a compact (hence normal) metric space, the Urysohn Lemma can be invoked to produce a continuous function taking the value 1 on $\{(s_0,a_2)\}$ and vanishing outside any desired neighborhood of that point (see, e.g., \cite{munkres2000topology}, Theorem 33.1).

For any $\varepsilon > 0$, we define a perturbed Q-function $\QFunc_\varepsilon$:
\[
\QFunc_\varepsilon(s,a) := Q_0(s,a) + \varepsilon\,\varphi(s,a).
\]
By construction, $\QFunc_\varepsilon$ is continuous and converges uniformly to $Q_0$ as $\varepsilon \to 0$, since $\|\QFunc_\varepsilon - Q_0\|_\infty = \varepsilon$.

\noindent\textbf{2. Invert the Bellman Equation to Find the Corresponding Reward}

For any given continuous function $\QFunc \in \Cont(\StateSpace \times \ActionSpace)$, we can define a corresponding reward function $\RewardFunc_Q$ as follows:
\[
\RewardFunc_Q(s,a) := \QFunc(s,a) - \gamma\int_{\StateSpace}\max_{a'}\QFunc(s',a')\,P(ds'\mid s,a).
\]
By substituting this definition into the Bellman optimality operator $\BellmanOp_{\RewardFunc_Q}$, we can verify that $\BellmanOp_{\RewardFunc_Q}(\QFunc) = \QFunc$. Since $\BellmanOp_{\RewardFunc_Q}$ is a contraction mapping with modulus $\gamma$, the Banach Fixed-Point Theorem guarantees that $\QFunc$ is its unique fixed point. Therefore, $\QFunc$ is the optimal Q-function for the reward $\RewardFunc_Q$, i.e., $\OptQFunc_{\RewardFunc_Q} = \QFunc$.

Applying this inverse mapping to our perturbed function $\QFunc_\varepsilon$, we define a family of reward functions:
\[
\RewardFunc_\varepsilon := \RewardFunc_{Q_\varepsilon}.
\]
Through this construction, we can be sure that  $\OptQFunc_{\RewardFunc_\varepsilon} = \QFunc_\varepsilon$ for every $\varepsilon > 0$. Note that $\RewardFunc_\varepsilon$ is continuous and belongs to $\Cont(\StateSpace \times \ActionSpace)$, satisfying the continuity assumption.

\noindent\textbf{3. Verify Convergence in the Reward Space}

We must show that our constructed reward functions $\RewardFunc_\varepsilon$ converge to the original reward function $\RewardFunc_0$ as $\varepsilon \to 0$. Note that $\RewardFunc_0 = \RewardFunc_{Q_0}$.
\begin{align*}
\|\RewardFunc_\varepsilon - \RewardFunc_0\|_\infty &= \|\RewardFunc_{Q_\varepsilon} - \RewardFunc_{Q_0}\|_\infty \\
&= \left\| \left(\QFunc_\varepsilon - \gamma \int\max_{a'}\QFunc_\varepsilon \right) - \left(Q_0 - \gamma \int\max_{a'}Q_0 \right) \right\|_\infty \\
&= \left\| (\QFunc_\varepsilon - Q_0) - \gamma \left(\int\max_{a'}\QFunc_\varepsilon - \int\max_{a'}Q_0 \right) \right\|_\infty \\
&\le \|\QFunc_\varepsilon - Q_0\|_\infty + \gamma \left\| \int\max_{a'}\QFunc_\varepsilon - \int\max_{a'}Q_0 \right\|_\infty.
\end{align*}
The map $Q \mapsto \max_{a'}Q(s',a')$ is Lipschitz continuous with constant 1. Thus,
\[
\left\| \int\max_{a'}\QFunc_\varepsilon - \int\max_{a'}Q_0 \right\|_\infty \le \sup_{s'}|\max_{a'}\QFunc_\varepsilon(s',a') - \max_{a'}Q_0(s',a')| \le \|\QFunc_\varepsilon - Q_0\|_\infty = \varepsilon.
\]
Substituting this back into the inequality, we get:
\[
\|\RewardFunc_\varepsilon - \RewardFunc_0\|_\infty \le \varepsilon + \gamma\varepsilon = \varepsilon(1+\gamma).
\]
As $\varepsilon \to 0$, it follows that $\|\RewardFunc_\varepsilon - \RewardFunc_0\|_\infty \to 0$. Thus, $\RewardFunc_\varepsilon$ converges to $\RewardFunc_0$.

\noindent\textbf{4. The Action Switch and Discontinuity}

We have constructed a sequence of rewards $\RewardFunc_\varepsilon$ that converges to $\RewardFunc_0$, with corresponding optimal Q-functions $\OptQFunc_{\RewardFunc_\varepsilon} = \QFunc_\varepsilon$. We now show that for any $\varepsilon > 0$, the action $a_2$ becomes the unique optimal action under the perturbed Q-function $\QFunc_\varepsilon$ at state $s_0$. We do this by comparing $a_2$ to all other possible actions.

\textbf{Case 1: Comparison with other formerly optimal actions.}
Let $a_j$ be any action in the original optimal set $A^*(s_0; \RewardFunc_0)$ such that $a_j \neq a_2$. Our construction of the bump function $\varphi$ ensures that $\varphi(s_0, a_2)=1$ and $\varphi(s_0, a_j)=0$.
\begin{align*}
\QFunc_\varepsilon(s_0, a_2) - \QFunc_\varepsilon(s_0, a_j) &= \bigl(Q_0(s_0, a_2) - Q_0(s_0, a_j)\bigr) + \varepsilon\bigl(\varphi(s_0, a_2) - \varphi(s_0, a_j)\bigr) \\
&= 0 + \varepsilon(1 - 0) = \varepsilon.
\end{align*}
Since $\varepsilon > 0$, $a_2$ becomes strictly preferred over all other actions that were optimal under $\RewardFunc_0$.

\textbf{Case 2: Comparison with formerly suboptimal actions.}
Let $a'$ be any action not in the original optimal set, i.e., $a' \notin A^*(s_0; \RewardFunc_0)$. This means there is a suboptimality gap $\delta_{a'} := Q_0(s_0, a_2) - Q_0(s_0, a') > 0$.
\begin{align*}
\QFunc_\varepsilon(s_0, a_2) - \QFunc_\varepsilon(s_0, a') &= \bigl(Q_0(s_0, a_2) - Q_0(s_0, a')\bigr) + \varepsilon\bigl(\varphi(s_0, a_2) - \varphi(s_0, a')\bigr) \\
&= \delta_{a'} + \varepsilon(1 - \varphi(s_0, a')).
\end{align*}
Since $\varepsilon > 0$ and $\varphi(s_0, a') \le 1$, the term $\varepsilon(1 - \varphi(s_0, a'))$ is non-negative. Thus,
\[
\QFunc_\varepsilon(s_0, a_2) - \QFunc_\varepsilon(s_0, a') \ge \delta_{a'} > 0.
\]
This shows that $a_2$ is also strictly preferred over all actions that were already suboptimal.

Combining both cases, we have proven that for any $\varepsilon > 0$, $a_2$ is the unique maximizer of $\QFunc_\varepsilon(s_0, \cdot)$. Therefore, the optimal action set for $\RewardFunc_\varepsilon$ at $s_0$ is the singleton $A^*(s_0; \RewardFunc_\varepsilon) = \{a_2\}$.

This means the policy map must select $\OptPolicy_{\RewardFunc_\varepsilon}(s_0) = a_2$. We have a sequence of rewards $\RewardFunc_\varepsilon \to \RewardFunc_0$, but the corresponding policy selection jumps from $\OptPolicy_{\RewardFunc_0}(s_0) = a_1$ to $\OptPolicy_{\RewardFunc_\varepsilon}(s_0) = a_2$. This demonstrates a failure of pointwise convergence, proving that the map $M_{RL}$ is discontinuous at $\RewardFunc_0$.
\end{proof}

\begin{proposition}[Discontinuity for Uniform Stochastic Policies] \label{prop:discontinuity_stochastic}
Let Assumptions \ref{assump:spaces}-\ref{assump:discount} hold. Suppose for $\RewardFunc_0 \in \RewardSpace$, there exists a state $s_0 \in \StateSpace$ such that the set of optimal actions $A^*(s_0; \RewardFunc_0)$ is finite and contains at least two distinct actions. Let $m = |A^*(s_0; \RewardFunc_0)| \ge 2$.
Let the policy map $M_{RL}: \RewardSpace \to \PolicySpace$ be defined by selecting the stochastic policy $\OptPolicy_\RewardFunc$ such that for each $s \in \StateSpace$, $\OptPolicy_\RewardFunc(\cdot | s)$ is the uniform probability distribution over the set $A^*(s; \RewardFunc)$. (If $A^*(s; \RewardFunc)$ is empty, which should not happen for optimal policies derived from Q-functions on compact action spaces, or infinite, this definition would need refinement; here we rely on the construction yielding finite sets).
Then the map $\RewardFunc \mapsto \OptPolicy_\RewardFunc(\cdot | s_0)$, viewed as a map from $(\RewardSpace, \SupNorm{\cdot})$ to $(\PolicySpace(\ActionSpace), d_{TV})$, where $d_{TV}$ is the Total Variation distance, is discontinuous at $\RewardFunc_0$.
\end{proposition}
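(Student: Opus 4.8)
The plan is to recycle, almost verbatim, the $Q$-function perturbation built in the proof of Proposition~\ref{prop:discontinuity_deterministic}, and then reduce the discontinuity claim to an elementary total-variation estimate. Concretely, fix two distinct actions $a_1, a_2 \in A^*(s_0; \RewardFunc_0)$ and take the continuous bump $\varphi \in \Cont(\StateSpace \times \ActionSpace)$ with $0 \le \varphi \le 1$, $\varphi(s_0, a_2) = 1$, and $\varphi(s_0, a_j) = 0$ for every other $a_j \in A^*(s_0; \RewardFunc_0)$, exactly as in that proof (via Lemma~\ref{lem:bump_properties}, or the Urysohn Lemma). Setting $Q_0 = \OptQFunc_{\RewardFunc_0}$, $\QFunc_\varepsilon := Q_0 + \varepsilon \varphi$, and $\RewardFunc_\varepsilon := \RewardFunc_{\QFunc_\varepsilon}$ (the reward obtained by inverting the Bellman equation so that $\OptQFunc_{\RewardFunc_\varepsilon} = \QFunc_\varepsilon$), I would carry over the two facts already established there: $\SupNorm{\RewardFunc_\varepsilon - \RewardFunc_0} \le (1+\gamma)\varepsilon \to 0$, so $\RewardFunc_\varepsilon \to \RewardFunc_0$ in $(\RewardSpace, \SupNorm{\cdot})$; and for every $\varepsilon > 0$, $a_2$ is the \emph{unique} maximizer of $\QFunc_\varepsilon(s_0, \cdot)$ on $\ActionSpace$, hence $A^*(s_0; \RewardFunc_\varepsilon) = \{a_2\}$.

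Given this, the two policies of interest at $s_0$ are immediate: the uniform-selection map sends $\RewardFunc_\varepsilon$ to $\OptPolicy_{\RewardFunc_\varepsilon}(\cdot \mid s_0) = \delta_{a_2}$, the uniform law on a singleton, whereas $\OptPolicy_{\RewardFunc_0}(\cdot \mid s_0) = \mu_0 := \tfrac{1}{m}\sum_{a \in A^*(s_0; \RewardFunc_0)} \delta_a$ is the uniform law on the $m \ge 2$ optimal actions. Only the values at $s_0$ enter the statement, so well-definedness of the selection at other states is irrelevant here.

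It then remains to bound $d_{TV}(\delta_{a_2}, \mu_0)$ away from $0$ uniformly in $\varepsilon$. Testing against the Borel set $B = \{a_2\}$ (a closed, hence Borel, subset of the metric space $\ActionSpace$) gives $d_{TV}(\delta_{a_2}, \mu_0) \ge |\delta_{a_2}(B) - \mu_0(B)| = 1 - \tfrac{1}{m} \ge \tfrac{1}{2}$, using $m \ge 2$; equality in fact holds, but the lower bound suffices. Hence $\RewardFunc_\varepsilon \to \RewardFunc_0$ while $d_{TV}\big(\OptPolicy_{\RewardFunc_\varepsilon}(\cdot \mid s_0), \OptPolicy_{\RewardFunc_0}(\cdot \mid s_0)\big) \ge \tfrac{1}{2}$ for all $\varepsilon > 0$, which is precisely the claimed discontinuity of $\RewardFunc \mapsto \OptPolicy_\RewardFunc(\cdot \mid s_0)$ at $\RewardFunc_0$.

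I do not anticipate a serious obstacle: the analytically substantive steps — realizing the perturbed $Q$ as a genuine optimal $Q$-function via the Bellman inversion, and checking that $a_2$ becomes the strict maximizer at $s_0$ after the bump — were already done for Proposition~\ref{prop:discontinuity_deterministic} and transfer unchanged, the construction being entirely localized at $(s_0, a_2)$. The only genuinely new ingredient is the switch of target metric: one just needs to observe that the collapse $\mu_0 \mapsto \delta_{a_2}$ is detected by total variation precisely because $\mu_0$ places mass $1/m < 1$ on $a_2$, and that the singleton $\{a_2\}$ is an admissible test set. The computation is one line, so the real content of the proposition is conceptual — that forcing a tie-break via an arbitrarily small reward perturbation makes a uniform-over-optima policy jump by a fixed TV amount.
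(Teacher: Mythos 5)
Your proposal is correct and follows essentially the same route as the paper: reuse the reward sequence $\RewardFunc_\varepsilon$ from Proposition~\ref{prop:discontinuity_deterministic}, note that $A^*(s_0;\RewardFunc_\varepsilon)=\{a_2\}$ forces $\OptPolicy_{\RewardFunc_\varepsilon}(\cdot\mid s_0)=\delta_{a_2}$, and observe that its TV distance to the uniform law on the $m$ original optima equals $(m-1)/m \ge 1/2$. The only cosmetic difference is that you lower-bound $d_{TV}$ via the test set $\{a_2\}$ while the paper computes it exactly from the $\tfrac12\sum_a|\cdot|$ formula; the resulting quantity is the same.
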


\begin{proof}
Let $Q_0 = \OptQFunc_{\RewardFunc_0}$. By assumption, $A^*(s_0; \RewardFunc_0)$ is a finite set with $m = |A^*(s_0; \RewardFunc_0)| \ge 2$. Let $a_2 \in A^*(s_0; \RewardFunc_0)$ be one of these optimal actions.
The policy $\OptPolicy_{\RewardFunc_0}(\cdot | s_0)$ is the uniform distribution over $A^*(s_0; \RewardFunc_0)$. Thus, for any $a \in A^*(s_0; \RewardFunc_0)$, $\OptPolicy_{\RewardFunc_0}(a | s_0) = 1/m$. For $a \notin A^*(s_0; \RewardFunc_0)$, $\OptPolicy_{\RewardFunc_0}(a | s_0) = 0$.

We use the same sequence of reward functions $\RewardFunc_\varepsilon$ as constructed in the proof of Proposition \ref{prop:discontinuity_deterministic}. 
We have $\RewardFunc_\varepsilon \to \RewardFunc_0$ in $(\RewardSpace, \SupNorm{\cdot})$.
From the proof of Proposition \ref{prop:discontinuity_deterministic}, we established that the optimal action set for $\RewardFunc_\varepsilon$ is $A^*(s_0; \RewardFunc_\varepsilon) = \{a_2\}$. Let $\RewardFunc_n=\RewardFunc_{\varepsilon_n}$ for a sequence $\varepsilon_n\to 0$.

According to our policy selection rule, $\OptPolicy_{\RewardFunc_n}(\cdot | s_0)$ is the uniform distribution over $A^*(s_0; \RewardFunc_n) = \{a_2\}$. This means $\OptPolicy_{\RewardFunc_n}(\cdot | s_0)$ is the Dirac measure $\delta_{a_2}$ concentrated at $a_2$. So, $\OptPolicy_{\RewardFunc_n}(a_2 | s_0) = 1$, and $\OptPolicy_{\RewardFunc_n}(a | s_0) = 0$ for all $a \neq a_2$.

We now compute the Total Variation distance $d_{TV}(\OptPolicy_{\RewardFunc_n}(\cdot | s_0), \OptPolicy_{\RewardFunc_0}(\cdot | s_0))$ for $n$ sufficiently large. Let $\mu_n = \OptPolicy_{\RewardFunc_n}(\cdot | s_0) = \delta_{a_2}$ and $\mu_0 = \OptPolicy_{\RewardFunc_0}(\cdot | s_0)$.
The Total Variation distance between two probability measures $\nu_1, \nu_2$ on a countable (or finite, as is the case here for the support of $\mu_0$ and $\mu_n$) set $\ActionSpace'$ is given by $d_{TV}(\nu_1, \nu_2) = \frac{1}{2} \sum_{a \in \ActionSpace'} |\nu_1(a) - \nu_2(a)|$.
Here, $\ActionSpace'$ can be taken as $A^*(s_0; \RewardFunc_0) \cup \{a_2\} = A^*(s_0; \RewardFunc_0)$ since $a_2 \in A^*(s_0; \RewardFunc_0)$.
\begin{align*}
d_{TV}(\mu_n, \mu_0) &= \frac{1}{2} \sum_{a \in A^*(s_0; \RewardFunc_0)} |\mu_n(a) - \mu_0(a)| \\
&= \frac{1}{2} \left( |\mu_n(a_2) - \mu_0(a_2)| + \sum_{a \in A^*(s_0; \RewardFunc_0), a \neq a_2} |\mu_n(a) - \mu_0(a)| \right) \\
&= \frac{1}{2} \left( |1 - 1/m| + \sum_{a \in A^*(s_0; \RewardFunc_0), a \neq a_2} |0 - 1/m| \right).
\end{align*}
There are $m-1$ terms in the sum. So,
\begin{align*}
d_{TV}(\mu_n, \mu_0) = \frac{1}{2} \left( \frac{m-1}{m} + (m-1) \cdot \frac{1}{m} \right) = \frac{1}{2} \frac{2(m-1)}{m} = \frac{m-1}{m}. 
\end{align*}
Since $m \ge 2$, we have $(m-1)/m \ge 1/2$.
Thus, for $n$ sufficiently large, $d_{TV}(\OptPolicy_{\RewardFunc_n}(\cdot | s_0), \OptPolicy_{\RewardFunc_0}(\cdot | s_0)) = (m-1)/m$.
As $(m-1)/m$ does not tend to $0$ as $n \to \infty$ (it is a constant $\ge 1/2$), the sequence of policies $\OptPolicy_{\RewardFunc_n}(\cdot | s_0)$ does not converge to $\OptPolicy_{\RewardFunc_0}(\cdot | s_0)$ in Total Variation distance.
Therefore, the map $\RewardFunc \mapsto \OptPolicy_\RewardFunc(\cdot | s_0)$ is discontinuous at $\RewardFunc_0$.
\end{proof}

\begin{remark}
This analysis highlights that the mapping from rewards to optimal policies exhibits stability (continuity) only under strong structural conditions ensuring the uniqueness of optimal actions. In the absence of such conditions, the mapping is generally unstable, with small perturbations in the reward function potentially leading to abrupt changes in the optimal policy. This has significant implications for the robustness and practical implementation of reinforcement learning algorithms.
\end{remark}

\section{RL for LLMs With a Single Reward Model}\label{sec-llm}

The theoretical framework for the continuity of the reward-policy map has significant implications for the training and behavior of large language models (LLMs), especially when fine-tuned using reinforcement learning (RL) such as Reinforcement Learning from Human Feedback (RLHF) (see e.g., \cite{ouyang2022training,bai2022constitutional,lee2023rlaif}) or Reinforcement Learning with Verifiable Rewards (RLVR) (see e.g., \cite{guo2025deepseek,su2025crossing}). In this section, we frame LLM text generation as an MDP and discuss how the continuity (or lack thereof) of the optimal policy with respect to the reward function can explain certain observed behaviors and challenges in LLM alignment.

\subsection{Framing LLM Text Generation as an MDP}

We model the sequential text generation process of an LLM as an infinite-horizon discounted Markov Decision Process (MDP) defined by the tuple $(\StateSpace, \ActionSpace, \TransKernel, \RewardFunc, \gamma)$. A rigorous analysis requires establishing a topology on the state and action spaces.

The action space $\ActionSpace$ is the model's finite vocabulary, and the state space $\StateSpace$ comprises the finite set of all possible token sequences up to a maximum length $T_{\max}$. A fundamental property of any finite set endowed with a metric is that the resulting metric space is always compact. Therefore, the assumption that $\ActionSpace$ and $\StateSpace$ are compact metric spaces is automatically satisfied for any choice of metric. This assumption, while a necessary precondition for the general theory, thus poses no practical constraint on the LLM setting.

A more subtle but crucial point arises from the topology of these finite spaces. Any metric on a finite set induces the discrete topology, in which every subset is an open set. A direct mathematical consequence of this is that any function from a space with the discrete topology to any other metric space is necessarily continuous. Thus, the assumption that the reward function $\RewardFunc(s, a)$ is continuous on the product space $(\StateSpace \times \ActionSpace, d)$ is also automatically satisfied for any reward function. This means that in the finite LLM setting, this assumption is not a restrictive simplification of reality but a direct consequence of the problem's structure.

The remaining components of the MDP are then defined. The transition kernel $\TransKernel$ is deterministic, as the next state is formed by concatenating the current state with the chosen action token. Finally, the policy $\Policy(a|s)$ is embodied by the LLM, and the goal of reinforcement learning is to find an optimal policy $\OptPolicy_\RewardFunc$ for a given reward $\RewardFunc$.

This precise framing reveals that the source of policy instability does not lie in a potential failure of continuity of the reward function on its domain. Instead, the analytical framework's power hinges on \textit{how perturbations in the reward function's values} propagate through the Bellman operator to the Q-function, and critically, how the non-continuous nature of the $\Argmax$ operator acts upon this Q-function. This formulation, therefore, shifts the focus from the topological properties of the state-action space, which are trivially satisfied, to the functional properties of the Bellman and $\Argmax$ operators. This provides a rigorous foundation for analyzing policy stability not as a failure of continuity on the state space, but as a consequence of the inherent discontinuity of optimization over a discrete action set.

\subsection{Theoretical Implications of the Reward-Policy Map}

The stability results for the underlying value functions provide a foundation for our analysis. Proposition~\ref{prop:q_continuity} establishes that the optimal action-value function $\OptQFunc_\RewardFunc$ is Lipschitz continuous with respect to the reward function $\RewardFunc$. This suggests a degree of robustness at the value level: small changes to the reward model lead to proportionally small and controlled changes in the optimal Q-values. Furthermore, Lemma~\ref{lemma:argmax_uhc} shows that the set of optimal actions $A^*(s; \RewardFunc)$ is an upper hemi-continuous (u.h.c.) correspondence. This crucial property provides a form of "outer" stability: it guarantees that the set of best next tokens will not suddenly expand to include actions that were previously far from optimal. It also ensures that any convergent sequence of optimal actions (for a converging sequence of reward functions) will have its limit within the new set of optimal actions. However, this provides no guarantee that every action that was originally optimal will remain part of the optimal set after a small perturbation of the reward function. A critical consequence of a correspondence being only upper hemi-continuous is that it permits the set of optimal actions $A^*(s; \RewardFunc)$ to \textit{abruptly shrink}. In other words, a slight change in $\RewardFunc$ can cause previously optimal actions to "disappear" by becoming strictly suboptimal. This lack of preservation, which corresponds to a failure of lower hemi-continuity (l.h.c.), is a primary driver of the policy instabilities central to our analysis.

This is precisely the core issue: while the value function enjoys robust stability guarantees, these guarantees do not propagate to the policy level. The potential for the optimal action set to abruptly shrink means that any policy, which is fundamentally a selection from this set, is inherently fragile. Therefore, the stability of the policy itself becomes a far more delicate and critical property for predictable behavior. The continuity of the reward-policy map $M_{RL}: \RewardFunc \mapsto \OptPolicy_\RewardFunc$ depends heavily on the structure of the optimal action set.

\paragraph{Conditions for Policy Continuity.}
As established in Theorem~\ref{thm:continuity_rigorous}, if the optimal action is unique for a given reward function $\RewardFunc_0$ and all reward functions in its vicinity, then the mapping to this deterministic optimal policy is continuous. In an LLM context, this stable regime would mean that small adjustments to the reward model only lead to minor, predictable changes in text generation.

\paragraph{Policy Discontinuities due to Non-Unique Optimal Actions.}
The uniqueness condition is strong and often violated in language generation, where multiple words or phrases can be equally valid continuations. When multiple optimal actions exist, the policy map becomes prone to discontinuities, as shown in Propositions~\ref{prop:discontinuity_deterministic} and \ref{prop:discontinuity_stochastic}. A slight perturbation to the reward function can act as a tie-breaker, causing a deterministic policy to abruptly switch its choice of action or a stochastic policy to drastically shift its probability mass. This theoretical instability provides a formal basis for the brittle behaviors observed in practice, such as sudden changes in style or safety profile in response to minor changes in the reward model or prompt.

\subsection{Analysis of Alignment and Reasoning Phenomena}

The theoretical framework can be directly applied to formalize and explain specific, observable behaviors in aligned or reasoning LLMs. In particular, we now use our results to analyze two fundamental challenges when using RL for LLM training. First, we will examine phenomena arising from \textit{incomplete reward specifications}, where we formalize how a policy can be perfectly rational for its given objective yet suboptimal for the true, intended goal. Second, we will analyze behaviors stemming from the \textit{degeneracy of optima}, where multiple distinct policies are equally optimal, and show how introducing an additive reward component functions as a \textit{tie-breaker} to resolve this issue and enforce more specific behaviors.

\subsubsection{The "Clever Slacker": Suboptimality from Incomplete Rewards}

A common challenge in alignment and reasoning is the "clever slacker" phenomenon, where an LLM produces factually correct responses that nevertheless ignore user-specified constraints or instructions, or generates answers via deceptive shortcuts. This can be modeled as the agent optimizing an incomplete reward function. The following proposition proves that such a policy is strictly suboptimal under the complete, desired reward objective.

\begin{proposition}[Suboptimality from Incomplete Rewards (General Form)] \label{prop:incomplete_reward_general}
Let $\RewardFunc_{train} \in \RewardSpace$ be the training reward function and $\RewardFunc_{missing} \in \RewardSpace$ be the missing reward component. The true reward is $\RewardFunc_{true} = \RewardFunc_{train} + \RewardFunc_{missing}$. Let $\pi^*_{train} \in \Pi^*(\RewardFunc_{train})$ be any optimal policy for $\RewardFunc_{train}$, and let $\mu$ be the initial state distribution. If there exists a state $s_0 \in \StateSpace$ and an action $a_2 \in \ActionSpace$ satisfying:
\begin{enumerate}
    \item \textbf{Action Optimality}: $a_2$ is optimal under the training reward: $a_2 \in A^*(s_0; \RewardFunc_{train})$.
    \item \textbf{Positive Advantage for Missing Reward}: The action $a_2$ has a strictly positive advantage under the missing reward component when evaluated with the policy $\pi^*_{train}$:
    $$A^{\pi^*_{train}}_{\RewardFunc_{missing}}(s_0, a_2) > 0,$$
    where the advantage is defined as $A^{\pi}_{R}(s, a) := Q^{\pi}_{R}(s, a) - V^{\pi}_{R}(s)$.
    \item \textbf{State Reachability}: The state $s_0$ has a non-zero probability of being visited at some step when starting from the initial state distribution $\mu$ and following the policy $\pi^*_{train}$.
\end{enumerate}
Then, the policy $\pi^*_{train}$ is strictly suboptimal for the true reward function $\RewardFunc_{true}$.
\end{proposition}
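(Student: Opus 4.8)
The plan is to show that the policy $\pi^*_{train}$, while optimal for $\RewardFunc_{train}$, leaves "value on the table" under $\RewardFunc_{true}$ by exhibiting a strictly improving deviation. First I would record the key linearity fact: because the value and action-value functions are linear in the reward (for a fixed policy), we have $Q^{\pi}_{\RewardFunc_{true}}(s,a) = Q^{\pi}_{\RewardFunc_{train}}(s,a) + Q^{\pi}_{\RewardFunc_{missing}}(s,a)$ and similarly for $V$, hence the advantages decompose additively: $A^{\pi^*_{train}}_{\RewardFunc_{true}}(s,a) = A^{\pi^*_{train}}_{\RewardFunc_{train}}(s,a) + A^{\pi^*_{train}}_{\RewardFunc_{missing}}(s,a)$.

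Next I would evaluate this decomposition at the distinguished pair $(s_0, a_2)$. Since $a_2 \in A^*(s_0; \RewardFunc_{train})$ and $\pi^*_{train}$ is an optimal policy for $\RewardFunc_{train}$, the state-value $V^{\pi^*_{train}}_{\RewardFunc_{train}}(s_0) = \max_{a} Q^{*}_{\RewardFunc_{train}}(s_0,a) = Q^{*}_{\RewardFunc_{train}}(s_0,a_2) = Q^{\pi^*_{train}}_{\RewardFunc_{train}}(s_0,a_2)$, so $A^{\pi^*_{train}}_{\RewardFunc_{train}}(s_0,a_2) = 0$. Combining with hypothesis 2, $A^{\pi^*_{train}}_{\RewardFunc_{true}}(s_0,a_2) = 0 + A^{\pi^*_{train}}_{\RewardFunc_{missing}}(s_0,a_2) > 0$. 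Thus deviating at $s_0$ to play $a_2$ once and then reverting to $\pi^*_{train}$ strictly increases the value at $s_0$ under $\RewardFunc_{true}$; more precisely, by the policy-improvement / performance-difference lemma, the policy $\pi'$ that agrees with $\pi^*_{train}$ everywhere except placing all mass on $a_2$ at $s_0$ satisfies $V^{\pi'}_{\RewardFunc_{true}}(s_0) > V^{\pi^*_{train}}_{\RewardFunc_{true}}(s_0)$ (the one-step improvement is strictly positive at $s_0$ and non-negative elsewhere).

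Finally I would lift this local improvement to a strict gap in the objective $J_{\RewardFunc_{true}}(\pi) := \E_{s \sim \mu}[V^{\pi}_{\RewardFunc_{true}}(s)]$. Using hypothesis 3, the discounted state-visitation measure of $\pi^*_{train}$ started from $\mu$ assigns positive weight to $s_0$; by the performance-difference lemma, $J_{\RewardFunc_{true}}(\pi') - J_{\RewardFunc_{true}}(\pi^*_{train})$ equals (up to the positive normalizing factor $1/(1-\gamma)$) the expectation under that visitation measure of $\sum_a (\pi'(a|s) - \pi^*_{train}(a|s)) A^{\pi^*_{train}}_{\RewardFunc_{true}}(s,a)$, which is $\ge 0$ at every state and strictly positive at $s_0$, hence strictly positive overall. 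Therefore $\pi^*_{train}$ does not maximize $J_{\RewardFunc_{true}}$, i.e., it is strictly suboptimal for $\RewardFunc_{true}$.

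The main obstacle is handling the reachability hypothesis cleanly: hypothesis 3 is stated as "non-zero probability of being visited at some step," which must be converted into positive mass under the $\gamma$-discounted occupancy measure $d^{\pi^*_{train}}_{\mu}(s_0) = (1-\gamma)\sum_{t\ge 0}\gamma^t \Pr(s_t = s_0)$ — this is immediate since every term is non-negative and at least one is positive, but the performance-difference lemma must be invoked in the correct form (summing advantages of the new policy's action choice against the old policy's occupancy). A minor subtlety is that in the LLM/finite-horizon-embedded-in-infinite-horizon setting one should note $s_0$ is a genuine (non-absorbing) state where a choice is made, so the deviation is well-defined; I would also note that $\pi'$ is a legitimate element of the policy space and that defining it to put all mass on $a_2 \in A^*(s_0;\RewardFunc_{train})$ keeps it optimal for $\RewardFunc_{train}$, which is not needed for the conclusion but clarifies that the suboptimality is strictly a $\RewardFunc_{missing}$ effect.
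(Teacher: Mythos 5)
Your proposal is correct and takes essentially the same route as the paper: linearity of $V^{\pi}_{R}$ and $Q^{\pi}_{R}$ in $R$ for fixed $\pi$, combined with Conditions 1 and 2, gives $A^{\pi^*_{train}}_{\RewardFunc_{true}}(s_0,a_2)=A^{\pi^*_{train}}_{\RewardFunc_{missing}}(s_0,a_2)>0$, and Condition 3 lifts this local improvement to strict suboptimality --- the paper invokes the policy improvement theorem and leaves the lift informal, whereas you make it explicit via the performance-difference lemma, which is a welcome tightening. One small correction to that last step: the performance-difference lemma takes the expectation under the discounted occupancy measure of the \emph{new} policy $\pi'$, not of $\pi^*_{train}$; this is harmless here because $\pi'$ coincides with $\pi^*_{train}$ until the first visit to $s_0$, so Condition 3 equally yields positive occupancy of $s_0$ under $\pi'$, and the strict inequality goes through.
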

\begin{proof}
To prove that $\pi^*_{train}$ is strictly suboptimal for $\RewardFunc_{true}$, we invoke the Policy Improvement Theorem (see e.g., \cite{sutton1998reinforcement}). We show there exists an action whose value is strictly greater than the state-value achieved by the policy, which is a sufficient condition for suboptimality. We must show $Q^{\pi^*_{train}}_{\RewardFunc_{true}}(s_0, a_2) > V^{\pi^*_{train}}_{\RewardFunc_{true}}(s_0)$.

Let's expand the advantage condition (Condition 2):
$Q^{\pi^*_{train}}_{\RewardFunc_{missing}}(s_0, a_2) - V^{\pi^*_{train}}_{\RewardFunc_{missing}}(s_0) > 0$.
Using the linearity of the value function for a fixed policy ($V^{\pi}_{R_1+R_2} = V^{\pi}_{R_1} + V^{\pi}_{R_2}$), we can write:
$V^{\pi^*_{train}}_{\RewardFunc_{true}}(s_0) = V^{\pi^*_{train}}_{\RewardFunc_{train}}(s_0) + V^{\pi^*_{train}}_{\RewardFunc_{missing}}(s_0)$.
Since $\pi^*_{train}$ is optimal for $\RewardFunc_{train}$, $V^{\pi^*_{train}}_{\RewardFunc_{train}}(s_0) = V^*_{train}(s_0)$.
So, $V^{\pi^*_{train}}_{\RewardFunc_{true}}(s_0) = V^*_{train}(s_0) + V^{\pi^*_{train}}_{\RewardFunc_{missing}}(s_0)$.

Now let's analyze the Q-value of taking action $a_2$:
$Q^{\pi^*_{train}}_{\RewardFunc_{true}}(s_0, a_2) = Q^{\pi^*_{train}}_{\RewardFunc_{train}}(s_0, a_2) + Q^{\pi^*_{train}}_{\RewardFunc_{missing}}(s_0, a_2)$.
Since $\pi^*_{train}$ is optimal for $\RewardFunc_{train}$, $Q^{\pi^*_{train}}_{\RewardFunc_{train}} = \OptQFunc_{train}$. By Condition 1, $a_2 \in A^*(s_0; \RewardFunc_{train})$, so $\OptQFunc_{train}(s_0, a_2) = V^*_{train}(s_0)$.
Thus, $Q^{\pi^*_{train}}_{\RewardFunc_{true}}(s_0, a_2) = V^*_{train}(s_0) + Q^{\pi^*_{train}}_{\RewardFunc_{missing}}(s_0, a_2)$.

We can now directly compare $Q^{\pi^*_{train}}_{\RewardFunc_{true}}(s_0, a_2)$ with $V^{\pi^*_{train}}_{\RewardFunc_{true}}(s_0)$. The inequality $Q^{\pi^*_{train}}_{\RewardFunc_{true}}(s_0, a_2) > V^{\pi^*_{train}}_{\RewardFunc_{true}}(s_0)$ holds if and only if:
$$ V^*_{train}(s_0) + Q^{\pi^*_{train}}_{\RewardFunc_{missing}}(s_0, a_2) > V^*_{train}(s_0) + V^{\pi^*_{train}}_{\RewardFunc_{missing}}(s_0). $$
This simplifies to $Q^{\pi^*_{train}}_{\RewardFunc_{missing}}(s_0, a_2) > V^{\pi^*_{train}}_{\RewardFunc_{missing}}(s_0)$, which is exactly Condition 2 in its expanded form. Since the condition holds by hypothesis, this implies that $\pi^*_{train}$ is not an optimal policy for $\RewardFunc_{true}$.

Furthermore, because Condition 3 ensures that state $s_0$ is reachable from the initial state distribution $\mu$, this local improvement opportunity at $s_0$ will lead to a strict increase in the overall expected return from the start states. Therefore, the policy $\pi^*_{train}$ is strictly suboptimal for the true reward function $\RewardFunc_{true}$.
\end{proof}

This formal result provides a unified lens to explain several critical alignment failures, which can be understood as policies that are optimal for an incomplete training objective but suboptimal for the true, desired objective. We analyze two such key phenomena below.

\paragraph{Instruction Following Failure.}
A primary application of our proposition is in explaining why models often fail to adhere to user-specified constraints, a behavior sometimes termed the "clever slacker" phenomenon. This occurs when the training reward, $\RewardFunc_{\text{train}}$, primarily measures a core task requirement, such as factual correctness, while ignoring secondary instructions like output format, style, or negative constraints. The adherence to these instructions represents the missing reward component, $\RewardFunc_{\text{missing}}$. If a model can produce a factually correct answer both with and without following the constraints, both paths may be equally optimal under $\RewardFunc_{\text{train}}$. The policy is therefore not being "disobedient"; it is perfectly rational in choosing the path of least resistance to maximize the objective it was given. This behavior is precisely captured by our proposition, which demonstrates that such a policy is strictly suboptimal under the true, complete reward function $\RewardFunc_{\text{true}} = \RewardFunc_{\text{train}} + \RewardFunc_{\text{missing}}$.

\paragraph{Spurious Reasoning.}
A more subtle and complex failure mode is \textit{spurious reasoning}, where the model produces a correct final answer preceded by a chain-of-thought that is logically flawed or not causally linked to the result. This is also a consequence of an incomplete Outcome-Based Reward (OBR). Here, the training reward $\RewardFunc_{\text{train}} = \RewardFunc_{\text{outcome}}$ only values the correctness of the final answer. The crucial missing component, $\RewardFunc_{\text{missing}} = \RewardFunc_{\text{process}}$, should reward the logical validity and faithfulness of the reasoning process itself. Because any reasoning path leading to the correct outcome is equally valued, the model may discover a low-effort strategy: first retrieve or guess the answer, and then generate a syntactically plausible but non-causal justification post-hoc. This policy, which fabricates a reasoning process, is another manifestation of a "clever slacker". While it perfectly optimizes the outcome-based objective, it is demonstrably suboptimal under the true objective that values faithful reasoning, as explained by our formal result.

\subsubsection{The Tie-Breaker Effect: Resolving Degeneracy with Additive Rewards}

A key challenge in reward design is the \textit{degeneracy of optimal policies}, which arises when a primary objective function, such as accuracy, deems multiple, behaviorally distinct policies to be equally optimal. This allows for undesirable behaviors, such as stylistic inconsistency or inefficient reasoning, to emerge. The practice of introducing an additional reward component is a form of \textit{reward engineering} designed to break this degeneracy. The following proposition formalizes how such an additive reward can function as a tie-breaker to enforce a specific, desired policy.

\begin{proposition}[The Tie-Breaker Effect]\label{prop:tie-breaker}
Let Assumptions \ref{assump:spaces}-\ref{assump:discount} hold. Let $\RewardFunc_0 \in \RewardSpace$ be a reward function, and suppose for some state $s_0 \in \StateSpace$, there exist at least two distinct optimal actions, $a_1, a_2 \in A^*(s_0; \RewardFunc_0)$. Then for any $\varepsilon>0$, there exists a perturbed reward function $\RewardFunc' \in \RewardSpace$ such that $\|\RewardFunc' - \RewardFunc_0\|_{\infty} \leq \varepsilon$ and $Q^*_{\RewardFunc'}(s_0, a_2) > Q^*_{\RewardFunc'}(s_0, a_1)$, i.e.\ \(a_2\) becomes strictly optimal over \(a_1\).
\end{proposition}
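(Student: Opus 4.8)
The plan is to reuse, essentially verbatim, the Q-space perturbation and Bellman-inversion machinery from the proof of Proposition~\ref{prop:discontinuity_deterministic}, now calibrated so that the perturbation respects the prescribed budget $\varepsilon$. First I would produce the bump function: since $a_1\neq a_2$, the points $(s_0,a_1)$ and $(s_0,a_2)$ are distinct elements of the compact (hence normal) metric space $\StateSpace\times\ActionSpace$, so Urysohn's Lemma yields $\varphi\in\Cont(\StateSpace\times\ActionSpace)$ with $0\le\varphi\le 1$, $\varphi(s_0,a_2)=1$, and $\varphi(s_0,a_1)=0$. I would then fix $\delta:=\varepsilon/(1+\gamma)>0$ and define the perturbed Q-function $\QFunc_\delta:=\OptQFunc_{\RewardFunc_0}+\delta\,\varphi$, which is continuous and satisfies $\|\QFunc_\delta-\OptQFunc_{\RewardFunc_0}\|_\infty=\delta$.

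Next I would invert the Bellman optimality equation exactly as in Step~2 of that earlier proof: set
\[
\RewardFunc'(s,a):=\QFunc_\delta(s,a)-\gamma\int_{\StateSpace}\max_{a'}\QFunc_\delta(s',a')\,\TransKernel(ds'\mid s,a),
\]
so that $\BellmanOp_{\RewardFunc'}\QFunc_\delta=\QFunc_\delta$, and by uniqueness of the contraction's fixed point $\OptQFunc_{\RewardFunc'}=\QFunc_\delta$. The norm estimate from Step~3 of the earlier proof transfers directly: using that $Q\mapsto\max_{a'}Q(\cdot,a')$ is $1$-Lipschitz in the supremum norm and that $\RewardFunc_0=\RewardFunc_{Q_0}$, one gets $\|\RewardFunc'-\RewardFunc_0\|_\infty\le(1+\gamma)\|\QFunc_\delta-\OptQFunc_{\RewardFunc_0}\|_\infty=(1+\gamma)\delta=\varepsilon$, which is precisely the required bound (and any smaller choice of $\delta$ only tightens it).

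Finally I would verify the strict comparison directly at $s_0$. Because $a_1,a_2\in A^*(s_0;\RewardFunc_0)$ we have $\OptQFunc_{\RewardFunc_0}(s_0,a_1)=\OptQFunc_{\RewardFunc_0}(s_0,a_2)$, so
\[
\OptQFunc_{\RewardFunc'}(s_0,a_2)-\OptQFunc_{\RewardFunc'}(s_0,a_1)=\bigl(\OptQFunc_{\RewardFunc_0}(s_0,a_2)-\OptQFunc_{\RewardFunc_0}(s_0,a_1)\bigr)+\delta\bigl(\varphi(s_0,a_2)-\varphi(s_0,a_1)\bigr)=\delta>0,
\]
which gives $Q^*_{\RewardFunc'}(s_0,a_2)>Q^*_{\RewardFunc'}(s_0,a_1)$ as claimed. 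I do not anticipate a genuine obstacle here: the whole argument is a specialization of the discontinuity construction, and the only new ingredient is the trivial rescaling $\delta=\varepsilon/(1+\gamma)$ needed to meet the explicit budget. The one technical point worth checking carefully is that the inverted reward $\RewardFunc'$ actually lies in $\RewardSpace=\Cont(\StateSpace\times\ActionSpace)$; this follows because $\QFunc_\delta$ is continuous and, by Assumption~\ref{assump:transition}, the map $(s,a)\mapsto\int_{\StateSpace}\max_{a'}\QFunc_\delta(s',a')\,\TransKernel(ds'\mid s,a)$ is continuous on $\StateSpace\times\ActionSpace$.
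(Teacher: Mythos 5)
Your proposal is correct and follows essentially the same route as the paper's proof: the same bump function, the same rescaling $\delta=\varepsilon/(1+\gamma)$, the same Bellman-inversion construction with the $(1+\gamma)$ norm estimate, and the same pointwise comparison at $s_0$. The only (harmless) deviations are that you require $\varphi$ to vanish only at $(s_0,a_1)$ rather than on all other optimal actions, which still suffices for the stated conclusion, and that you make explicit the continuity check of $\RewardFunc'$ via Assumption~\ref{assump:transition}.
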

\begin{proof}
Let \(Q_0=Q^*_{\RewardFunc_0}\).  Since \(a_1,a_2\) are both optimal at \(s_0\),
\(Q_0(s_0,a_1)=Q_0(s_0,a_2)\).

\noindent\textbf{1. Bump in \(Q\)-space.}
Choose a continuous bump as Proposition~\ref{prop:discontinuity_deterministic} that 
\[
\varphi\in C(\StateSpace\times\ActionSpace),\quad
0\le\varphi\le1,
\quad
\varphi(s_0,a_2)=1,
\]
and \(\varphi(s_0,a_j)=0\) for any other \(a_j\in A^*(s_0;\RewardFunc_0)\).  Define
\[
Q_{\varepsilon'}(s,a)=Q_0(s,a)+\varepsilon'\,\varphi(s,a),
\]
where \(\varepsilon'=\varepsilon/(1+\gamma)\), so \(\|Q_{\varepsilon'}-Q_0\|_\infty=\varepsilon'\to0\).

\noindent\textbf{2. Invert the Bellman operator.}
For any continuous \(Q\), define
\[
R_Q(s,a)
=Q(s,a)
-\gamma\!\int_{\StateSpace}\max_{a'}Q(s',a')\,P(ds'\!\mid s,a).
\]
Then one checks \(\BellmanOp_{R_Q}(Q)=Q\), and by contraction \(Q\) is the
unique \(Q^*_{R_Q}\).  Hence for each \(\varepsilon>0\), let 
\[
\RewardFunc_\varepsilon=R_{\,Q_0+\varepsilon'\varphi}
\]
so that
\[
Q^*_{\RewardFunc_\varepsilon}
=Q_0+\varepsilon'\,\varphi.
\]

\noindent\textbf{3. Convergence of rewards.}
Since \(R_Q\) depends continuously (in sup-norm) on \(Q\) and
\(\|Q_0+\varepsilon'\varphi-Q_0\|_\infty=\varepsilon'\), we have
\(\|\RewardFunc_\varepsilon-\RewardFunc_0\|_\infty \le \varepsilon'(1+\gamma)= \varepsilon \to 0\).

\noindent\textbf{4. Tie-breaking at \(s_0\).}
Under \(Q^*_{\RewardFunc_\varepsilon}=Q_0+\varepsilon'\varphi\),
\[
Q^*_{\RewardFunc_\varepsilon}(s_0,a_2)
-
Q^*_{\RewardFunc_\varepsilon}(s_0,a_1)
=
\bigl[Q_0(s_0,a_2)-Q_0(s_0,a_1)\bigr]
+\varepsilon'\bigl[\varphi(s_0,a_2)-\varphi(s_0,a_1)\bigr]
=0+\varepsilon'>0.
\]
Thus \(a_2\) is strictly preferred over \(a_1\). This completes the proof.
\end{proof}

The "tie-breaker" effect formalized in Proposition~\ref{prop:tie-breaker} is a powerful tool for reward engineering, explaining how additive rewards $\Delta\RewardFunc_0=\RewardFunc' - \RewardFunc_0$ can resolve the degeneracy of optima and enforce specific desired behaviors. We analyze two key applications below: enforcing format adherence and promoting efficient reasoning.

\paragraph{Enforcing Format and Style Consistency.}
Consider a scenario where multiple responses are equally valid under a base reward function, $\RewardFunc_0$, which primarily measures correctness. For example, an answer could be given in free-form text or as a structured JSON object. Both might be equally optimal, leading to a large optimal policy set $\optPolicySet(\RewardFunc_0)$. To enforce a specific format, one can introduce a small, additive "bonus" reward, $\Delta\RewardFunc_0=\RewardFunc' - \RewardFunc_0$, for responses that adhere to the desired structure. This bonus acts as a tie-breaker. According to our proposition, this additive reward makes the policy that generates the correctly formatted output strictly optimal. This can cause the model's behavior to "snap" discontinuously to the preferred style, rather than shifting gradually, which explains why such changes can be abrupt.

\paragraph{Promoting Efficient Reasoning.}
The tendency for models to generate unnecessarily verbose responses is another consequence of a degenerate optimal policy set under an accuracy-only reward, $\RewardFunc_0 = \RewardFunc_{\text{acc}}$. Both short and long reasoning paths that reach the correct answer are equally optimal. To solve this, one can introduce a length penalty, which is a \textit{negative} additive reward. This penalty also functions as a powerful tie-breaker. Among all paths that achieve the same accuracy reward, the most efficient path will now have the strictly highest value, as it incurs the lowest total penalty. This modification collapses the optimal policy set $\optPolicySet(\RewardFunc_0 + \Delta\RewardFunc_0)$ to contain only the most concise policies. By making the efficient policy uniquely optimal, this approach clarifies the learning signal and promotes a more efficient reasoning process.

\subsection{Broader Implications and Mitigation Strategies}

The analysis throughout this work highlights a fundamental distinction between applying reinforcement learning to LLMs versus traditional domains. In tasks with objective, well-defined rewards like the game of Go, achieving the optimal value $\ValueFunc^*$ is a sufficient goal; the policy is merely a means to an end. This paradigm shifts fundamentally for LLMs. The policy's output, i.e. the sequence of generated tokens, is the final product consumed by the user, and the reward function is not a ground-truth oracle but an imperfect proxy for nuanced human preferences (RLHF) or a rule-based function (RLVR). Consequently, the policy $\Policy$ itself must become a primary object of analysis. \textit{The continuity of the reward-policy map thus emerges as a critical diagnostic for the robustness and trustworthiness of the aligned model. In essence, for language models, the policy's behavior is not merely a means to an end; it is, in large part, the end itself.}

Given this critical role of the policy, the potential for discontinuities, as established by our theoretical results, underscores a central challenge in LLM alignment. The brittleness often observed in model behavior can be seen as a direct consequence of these instabilities, which arise when multiple generation strategies are near-optimal under a flawed or incomplete reward model. This highlights the importance of carefully engineering the learning objective to mitigate these issues. A primary approach is to introduce regularization techniques that encourage policy smoothness and resolve the degeneracy of optima.

In practice, the most prevalent form of such regularization is fine-tuning a pre-trained base model, $\basePolicy$, with a KL-divergence penalty. The objective is thus not merely to maximize the reward $\RewardFunc$, but to find a policy $\pi_{\text{RL}}$ that balances this with fidelity to the base model, governed by the objective:
\begin{equation}
\begin{aligned}
J(\Policy) &=  \mathbb{E}_{\tau \sim \Policy} \left[ \sum_{t=0}^\infty \gamma^t \left( \RewardFunc(s_t, a_t) - \beta\mathcal{D}_{\mathrm{KL}}(\Policy(\cdot|s_t) \| \pi_{\text{base}}(\cdot|s_t)) \right) \right] \\
& = \mathbb{E}_{\tau \sim \Policy} \left[ \sum_{t=0}^\infty \gamma^t \left( \RewardFunc(s_t, a_t) - \beta\log\frac{\Policy(a_t|s_t)}{\pi_{\text{base}}(a_t|s_t)} \right) \right].
\end{aligned}
\end{equation}
This KL-divergence term serves a dual purpose. It preserves the vast linguistic knowledge of $\basePolicy$, preventing catastrophic forgetting. Furthermore, as our analysis has shown, it acts as a crucial \textit{tie-breaker} when the optimal policy set $\optPolicySet(\RewardFunc)$ is large, guiding the algorithm towards the optimal policy that is closest to the base model's inherent capabilities and stylistic biases. This provides a practical, albeit constrained, mechanism for managing the degeneracy of the optima that we have identified. Alongside regularizing towards a base policy, another powerful mitigation strategy is to directly promote policy stochasticity via \textit{entropy regularization}, which we will formalize in Section~\ref{subsec:Entropy_Regularization}.

\section{LLMs Trained with Multiple Specialized Reward Models}\label{sec:llm-multi-reward}

Large Language Models (LLMs) are increasingly trained to handle diverse tasks and exhibit nuanced behaviors. A sophisticated approach to achieve this involves using multiple, specialized reward models, each tailored to a specific class of data or desired capability. This section extends our continuity analysis to such a multi-reward RL training paradigm across diverse domains (see e.g., \cite{liang2025modomodo,cheng2025revisiting,su2025crossing}).

\subsection{Framework: Single LLM, Multiple Data Classes and Reward Models}

Consider a scenario where a single LLM policy, $\Policy: \StateSpace \to \PolicySpace(\ActionSpace)$, is trained to perform well across $N$ distinct classes of data or tasks. Let these classes be denoted by $k \in \{1, \ldots, N\}$. The framework is shown in Figure~\ref{fig:multi-reward-framework}.
\begin{itemize}
    \item \textbf{Data Classes ($\mathcal{D}_k$):} Each class $\mathcal{D}_k$ represents a distribution of initial states (e.g., prompts) specific to a certain domain or task (e.g., mathematical reasoning, coding, question answering, safety alignment).
    \item \textbf{Specialized Reward Models ($\RewardFunc_k$):} For each data class $\mathcal{D}_k$, there is a corresponding reward model that provides a reward function $\RewardFunc_k: \StateSpace \times \ActionSpace \to \R$. This function $\RewardFunc_k(s,a)$ evaluates the quality of action $a$ (generating a token) in state $s$ (current sequence) specifically for task $k$. We assume each $\RewardFunc_k \in \RewardSpace = \Cont(\StateSpace \times \ActionSpace)$. Let $\boldsymbol{\RewardFunc} = (\RewardFunc_1, \ldots, \RewardFunc_N)$ denote the tuple of these reward functions. The space of such tuples is $\RewardSpace^N$, equipped with the norm ${\SupNorm{\boldsymbol{\RewardFunc} - \boldsymbol{\RewardFunc}'}}_{N} = \max_{1 \le k \le N} \SupNorm{\RewardFunc_k - \RewardFunc_k'}$.
    \item \textbf{Training Objective:} The LLM policy $\Policy$ is trained to optimize a global objective that aggregates performance across all $N$ task-reward pairs. This is often formulated as maximizing the weighted sum of expected discounted rewards. During training, if an initial state $s_0$ is sampled from $\mathcal{D}_k$ (with probability $p_k$, where $\sum_{k=1}^N p_k = 1$ and we assume $p_k > 0$ for all $k$), then the subsequent rewards for the trajectory generated by $\Policy$ are drawn from $\RewardFunc_k$. The overall objective is to find a policy $\OptPolicy_{\boldsymbol{\RewardFunc}}$ that maximizes:
    \begin{equation} \label{eq:multi_objective_revised}
        J\left(\Policy; \boldsymbol{\RewardFunc}, \{\mathcal{D}_k\}, \{p_k\}\right) = \sum_{k=1}^N p_k \mathbb{E}_{s_0 \sim \mathcal{D}_k, \tau \sim \Policy(\cdot|s_0)} \left[ \sum_{t=0}^\infty \gamma^t \RewardFunc_k(s_t, a_t) \right].
    \end{equation}
    Here, $\tau \sim \Policy(\cdot|s_0)$ denotes a trajectory generated by policy $\Policy$ starting from $s_0$.
\end{itemize}
The state space $\StateSpace$, action space $\ActionSpace$, transition kernel $\TransKernel$, and discount factor $\gamma$ are defined as in Section \ref{sec-rl} (satisfying Assumptions \ref{assump:spaces}, \ref{assump:transition}, and \ref{assump:discount}). The single policy $\Policy$ must learn to adapt its behavior based on the implicit context of the input state $s$, even though it may not explicitly receive the class index $k$ as input.

\begin{figure}[h!]
  \centering
  \begin{tikzpicture}[
    font=\sffamily,
    node distance=0.7cm and 1.5cm,
    >=Latex,
    every node/.style={align=center, rounded corners=3pt, thick},
    data/.style={draw, fill=gray!15, minimum width=2cm, minimum height=1cm, font=\small},
    policy/.style={draw, fill=blue!15, minimum width=2.5cm, minimum height=6cm, font=\normalsize},
    reward/.style={draw, fill=orange!15, minimum width=2cm, minimum height=1cm, font=\small},
    obj/.style={draw, fill=red!15, minimum width=2cm, minimum height=1cm, font=\small}
  ]

    \node[policy] (LLM) at (0,-0.75) {\textbf{LLM}};

    \node[data,left=of LLM,yshift=2.5cm] (D1) {$\mathcal{D}_1$};
    \node[data,left=of LLM] (D2) {$\mathcal{D}_2$};
    \node[below=0.35cm of D2,font=\normalsize] (dotsD) {$\vdots$};
    \node[data,below=0.35cm of dotsD] (DN) {$\mathcal{D}_N$};

    \node[reward,right=of LLM,yshift=2.5cm] (R1) {$\RewardFunc_1$};
    \node[reward,right=of LLM] (R2) {$\RewardFunc_2$};
    \node[below=0.35cm of R2,font=\normalsize] (dotsR) {$\vdots$};
    \node[reward,below=0.35cm of dotsR] (RN) {$\RewardFunc_N$};

    \node[obj,right=1.5cm of R2] (J) {$J(\pi)$};



    \draw[->,thick] (D1.east) -- ([yshift=2.5cm]LLM.west)
    node[midway,above,sloped,font=\small] {prompts};
    \draw[->,thick] (D2.east) -- (LLM.west)
    node[midway,above,sloped,font=\small] {prompts};
    \draw[->,thick] (DN.east) -- ([yshift=-2.5cm]LLM.west)
    node[midway,above,sloped,font=\small] {prompts};

    \draw[->,thick] ([yshift=2.5cm]LLM.east) -- (R1.west) node[midway,above,sloped,font=\small] {rollout};
    \draw[->,thick] (LLM.east) -- (R2.west) node[midway,above,sloped,font=\small] {rollout};
    \draw[->,thick] ([yshift=-2.5cm]LLM.east) -- (RN.west) node[midway,above,sloped,font=\small] {rollout};

    \foreach \r in {R1,R2,RN}{
      \draw[->,thick] (\r.east)--(J.west);
    }
    \node[font=\small,above] at ($(R2.east)!0.5!(J.west)$) {aggregate};

    \draw[->,thick] (J.south) |- ++(0,-3.0cm) -| (LLM.south) node[below=0.45cm of RN,font=\small]{backpropagation \& update policy};

  \end{tikzpicture}
  \vspace{0.2cm}
  \caption{The framework for training a single LLM policy with multiple data classes and specialized reward models. Prompts from data classes $\mathcal{D}_k$ are used to generate trajectories (rollouts) with the policy $\pi$. Each trajectory is evaluated by its corresponding reward model $\RewardFunc_k$. The rewards are aggregated into a global objective function $J(\pi)$, which is then used to update the policy via backpropagation.}
  \label{fig:multi-reward-framework}
\end{figure}
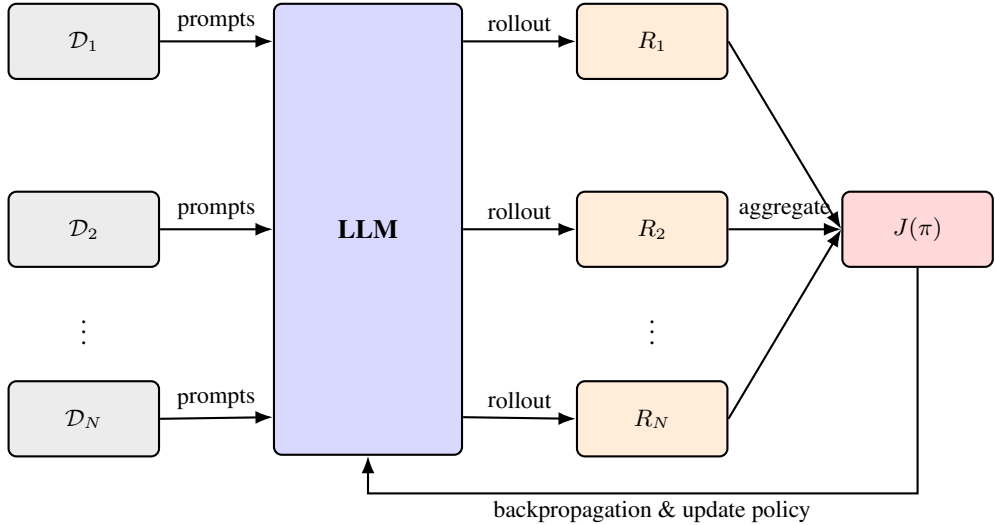

\subsection{Analyzing Policies Derived from State-Dependent Effective Rewards}

The global objective $J(\Policy)$ in Eq.~\eqref{eq:multi_objective_revised} averages performance across episodes that are governed by distinct, episode-specific reward functions $\RewardFunc_k$. A single policy $\Policy(a|s)$, lacking explicit knowledge of $k$, must infer the context from $s$. The direct characterization of $\OptPolicy_{\boldsymbol{\RewardFunc}}$ maximizing $J(\Policy)$ via a simple Bellman optimality equation (as in Section \ref{sec-rl}) is generally not available.

To apply the continuity analysis tools, we focus on a structured model of how an LLM might address this multi-task challenge: by forming an internal, state-dependent \emph{effective reward function}, $\RewardFunc_{eff}$. This function aggregates the specialized rewards based on the current context:
\begin{equation} \label{eq:effective_reward}
    \RewardFunc_{eff}(s,a; \boldsymbol{\RewardFunc}) = \sum_{k=1}^N w_k(s) \RewardFunc_k(s,a).
\end{equation}
Here, $w_k: \StateSpace \to [0,1]$ are continuous weighting functions satisfying $\sum_{k=1}^N w_k(s) = 1$ for all $s \in \StateSpace$. These weights model the LLM's assessment of the relevance of task $k$ given state $s$. The following analysis assumes that the weighting functions $w_k(s)$ are given. Crucially, these $w_k(s)$ are assumed to be fixed and do not change as $\boldsymbol{\RewardFunc}$ is perturbed. If $w_k(s)$ were themselves functions of $\boldsymbol{\RewardFunc}$, a more comprehensive stability analysis would be required.

The policy subject to our continuity analysis, denoted ${\OptPolicy_{\boldsymbol{\RewardFunc}}}^{eff}$, is the optimal policy for the standard MDP defined by this $\RewardFunc_{eff}$. It is thus greedy with respect to the optimal action-value function $\OptQFunc_{eff}(s,a; \boldsymbol{\RewardFunc})$, which is the unique fixed point of the Bellman optimality equation:
\begin{equation} \label{eq:bellman_effective}
(\BellmanOp_{eff, \boldsymbol{\RewardFunc}} Q)(s, a) = \RewardFunc_{eff}(s,a; \boldsymbol{\RewardFunc}) + \gamma \int_{\StateSpace} \max_{a' \in \ActionSpace} Q(s', a') \TransKernel(ds' | s, a).
\end{equation}
The optimal policy ${\OptPolicy_{\boldsymbol{\RewardFunc}}}^{eff}$ has its support contained within $A_{eff}^*(s; \boldsymbol{\RewardFunc}) = \Argmax_{a \in \ActionSpace} \OptQFunc_{eff}(s, a; \boldsymbol{\RewardFunc})$.

\begin{remark}[On the Relationship between ${\OptPolicy_{\boldsymbol{\RewardFunc}}}^{eff}$ and $J(\Policy)$]
The policy ${\OptPolicy_{\boldsymbol{\RewardFunc}}}^{eff}$ analyzed here is optimal for the constructed $\RewardFunc_{eff}$. The alignment of ${\OptPolicy_{\boldsymbol{\RewardFunc}}}^{eff}$ with $\OptPolicy_{\boldsymbol{\RewardFunc}}$ (the maximizer of $J(\Policy)$) depends on how well $w_k(s)$ are chosen or learned. If, for instance, the initial state distributions were identical ($\mathcal{D}_k = \mathcal{D}_{init}$ for all $k$) and one chose $w_k(s) = p_k$ (constant weights), then $J(\Policy)$ simplifies to $\mathbb{E}_{s_0 \sim \mathcal{D}_{init}} [V_{\RewardFunc_{eff}}^{\Policy}(s_0)]$, and ${\OptPolicy_{\boldsymbol{\RewardFunc}}}^{eff}$ would indeed be $\OptPolicy_{\boldsymbol{\RewardFunc}}$.
Our analysis in this section focuses on the stability of ${\OptPolicy_{\boldsymbol{\RewardFunc}}}^{eff}$ resulting from such an effective reward mechanism.
\end{remark}

\begin{lemma}[Properties of the Effective Reward Mapping]\label{lemma:effective_reward_properties}
Let each $\RewardFunc_k \in \RewardSpace$, and let $w_k: \StateSpace \to [0,1]$ be continuous functions such that $\sum_{k=1}^N w_k(s) = 1$ for all $s \in \StateSpace$.
\begin{enumerate}
    \item For any $\boldsymbol{\RewardFunc} \in \RewardSpace^N$, the effective reward function $\RewardFunc_{eff}(\cdot, \cdot; \boldsymbol{\RewardFunc})$ defined in Eq.~\eqref{eq:effective_reward} is continuous on $\StateSpace \times \ActionSpace$.
    \item The mapping $\boldsymbol{\RewardFunc} \mapsto \RewardFunc_{eff}(\cdot, \cdot; \boldsymbol{\RewardFunc})$ is Lipschitz continuous from $(\RewardSpace^N, {\SupNorm{\cdot}}_{N})$ to $(\RewardSpace, \SupNorm{\cdot})$ with Lipschitz constant 1:
    \[ \SupNorm{\RewardFunc_{eff}(\cdot,\cdot;\boldsymbol{\RewardFunc}) - \RewardFunc_{eff}(\cdot,\cdot;\boldsymbol{\RewardFunc}')} \le {\SupNorm{\boldsymbol{\RewardFunc} - \boldsymbol{\RewardFunc}'}}_{N}. \]
\end{enumerate}
\end{lemma}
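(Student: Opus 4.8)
The plan is to verify both claims directly from elementary properties of continuous functions together with the triangle inequality; since $N$ is finite, no convergence subtleties arise and no deep machinery is needed.

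For part 1, I would first observe that for each fixed $k$ the map $(s,a)\mapsto w_k(s)$ is continuous on $\StateSpace\times\ActionSpace$, being the composition of the continuous coordinate projection $(s,a)\mapsto s$ with the continuous function $w_k$. Since $\RewardFunc_k\in\RewardSpace=\Cont(\StateSpace\times\ActionSpace)$ is continuous by hypothesis, the pointwise product $(s,a)\mapsto w_k(s)\RewardFunc_k(s,a)$ is continuous, and a finite sum of continuous functions is continuous. Hence $\RewardFunc_{eff}(\cdot,\cdot;\boldsymbol{\RewardFunc})=\sum_{k=1}^N w_k(s)\RewardFunc_k(s,a)$ is continuous on $\StateSpace\times\ActionSpace$; in particular it belongs to $\RewardSpace$, which is what makes the mapping in part 2 well-defined as a map into $(\RewardSpace,\SupNorm{\cdot})$.

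For part 2, I would fix $\boldsymbol{\RewardFunc},\boldsymbol{\RewardFunc}'\in\RewardSpace^N$ and an arbitrary $(s,a)\in\StateSpace\times\ActionSpace$, and bound
\[
\bigl|\RewardFunc_{eff}(s,a;\boldsymbol{\RewardFunc})-\RewardFunc_{eff}(s,a;\boldsymbol{\RewardFunc}')\bigr|
=\Bigl|\sum_{k=1}^N w_k(s)\bigl(\RewardFunc_k(s,a)-\RewardFunc_k'(s,a)\bigr)\Bigr|
\le\sum_{k=1}^N w_k(s)\,\bigl|\RewardFunc_k(s,a)-\RewardFunc_k'(s,a)\bigr|.
\]
Then I would estimate each term by $\bigl|\RewardFunc_k(s,a)-\RewardFunc_k'(s,a)\bigr|\le\SupNorm{\RewardFunc_k-\RewardFunc_k'}\le{\SupNorm{\boldsymbol{\RewardFunc}-\boldsymbol{\RewardFunc}'}}_{N}$, and use $w_k(s)\ge 0$ together with the normalization $\sum_{k=1}^N w_k(s)=1$ to obtain $\bigl|\RewardFunc_{eff}(s,a;\boldsymbol{\RewardFunc})-\RewardFunc_{eff}(s,a;\boldsymbol{\RewardFunc}')\bigr|\le{\SupNorm{\boldsymbol{\RewardFunc}-\boldsymbol{\RewardFunc}'}}_{N}$. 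Taking the supremum over $(s,a)$ yields the Lipschitz estimate with constant $1$.

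There is no genuine obstacle here; the only two points that warrant a moment's care are (i) lifting the continuity of each $w_k$ from $\StateSpace$ to the product space $\StateSpace\times\ActionSpace$ before forming products, and (ii) noting that it is precisely the convexity constraints $w_k(s)\ge 0$ and $\sum_k w_k(s)=1$ that pin the Lipschitz constant to exactly $1$ --- without normalization the same argument would only give the constant $\sum_k\sup_{s}w_k(s)$.
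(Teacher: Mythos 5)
Your proposal is correct and follows essentially the same route as the paper's proof: continuity of each product $w_k(s)\RewardFunc_k(s,a)$ and closure under finite sums for part 1, and the pointwise triangle inequality combined with $\sum_k w_k(s)=1$ followed by taking the supremum over $(s,a)$ for part 2. The additional remarks on lifting $w_k$ to the product space and on the normalization pinning the constant to $1$ are accurate but do not change the argument.
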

\begin{proof}
Since each $\RewardFunc_k$ is continuous on $\StateSpace \times \ActionSpace$ and each $w_k$ is continuous on $\StateSpace$, their products $w_k(s)\RewardFunc_k(s,a)$ are continuous. The finite sum $\RewardFunc_{eff}(s,a; \boldsymbol{\RewardFunc}) = \sum_{k=1}^N w_k(s) \RewardFunc_k(s,a)$ is therefore continuous. 
For any $(s,a) \in \StateSpace \times \ActionSpace$, 
\begin{align*}
|\RewardFunc_{eff}(s,a;\boldsymbol{\RewardFunc}) - \RewardFunc_{eff}(s,a;\boldsymbol{\RewardFunc}')| &= \left| \sum_{k=1}^N w_k(s) (\RewardFunc_k(s,a) - \RewardFunc_k'(s,a)) \right| \\
&\le \sum_{k=1}^N w_k(s) |\RewardFunc_k(s,a) - \RewardFunc_k'(s,a)| \\
&\le \sum_{k=1}^N w_k(s) \SupNorm{\RewardFunc_k - \RewardFunc_k'} \\
&\le \left( \sum_{k=1}^N w_k(s) \right) \max_{1 \le j \le N} \SupNorm{\RewardFunc_j - \RewardFunc_j'} \\
&= 1 \cdot {\SupNorm{\boldsymbol{\RewardFunc} - \boldsymbol{\RewardFunc}'}}_{N}.
\end{align*}
Taking the supremum over $(s,a)$ on the left-hand side yields the result.
\end{proof}

\subsection{Continuity of the Policy Map for the Effective Reward Model}\label{subsec:cont_eff_reward}

We now analyze the continuity of ${\OptPolicy_{\boldsymbol{\RewardFunc}}}^{eff}$ with respect to $\boldsymbol{\RewardFunc}$, based on the properties of $\RewardFunc_{eff}$.

\begin{proposition}[Lipschitz Stability of Effective Q-Function]\label{prop:lipschitz-effective-q-revised}
Under Assumptions \ref{assump:spaces}-\ref{assump:discount} and the existence of continuous $w_k(s)$ as defined above, the mapping $\boldsymbol{\RewardFunc} \mapsto \OptQFunc_{eff}(\cdot,\cdot;\boldsymbol{\RewardFunc})$ from $(\RewardSpace^N, {\SupNorm{\cdot}}_{N})$ to $(\Cont(\StateSpace \times \ActionSpace), \SupNorm{\cdot})$ is Lipschitz continuous with constant $1/(1-\gamma)$.
\end{proposition}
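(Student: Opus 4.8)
The plan is to obtain this proposition with essentially no new work, by recognizing it as the composition of two Lipschitz maps already established: the map $\boldsymbol{\RewardFunc}\mapsto\RewardFunc_{eff}(\cdot,\cdot;\boldsymbol{\RewardFunc})$, which is Lipschitz with constant $1$ by Lemma~\ref{lemma:effective_reward_properties}(2), followed by the single-reward map $\RewardFunc\mapsto\OptQFunc_\RewardFunc$, which is Lipschitz with constant $1/(1-\gamma)$ by Proposition~\ref{prop:q_continuity}. The key identification is that, by its very definition via the Bellman optimality equation~\eqref{eq:bellman_effective}, the effective optimal Q-function is nothing other than the optimal Q-function of a genuine single-reward MDP whose reward happens to be the effective reward. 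Concretely, for $\boldsymbol{\RewardFunc}\in\RewardSpace^N$ write $r_{\boldsymbol{\RewardFunc}} := \RewardFunc_{eff}(\cdot,\cdot;\boldsymbol{\RewardFunc})$; then $\BellmanOp_{eff,\boldsymbol{\RewardFunc}} = \BellmanOp_{r_{\boldsymbol{\RewardFunc}}}$ as operators on $\Cont(\StateSpace\times\ActionSpace)$, so their common unique fixed point gives $\OptQFunc_{eff}(\cdot,\cdot;\boldsymbol{\RewardFunc}) = \OptQFunc_{r_{\boldsymbol{\RewardFunc}}}$.

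\textbf{Well-posedness check.} Before chaining the estimates I would confirm the composition is legitimate: Lemma~\ref{lemma:effective_reward_properties}(1) guarantees $r_{\boldsymbol{\RewardFunc}}\in\Cont(\StateSpace\times\ActionSpace)=\RewardSpace$ for every $\boldsymbol{\RewardFunc}$, so Proposition~\ref{prop:q_continuity} applies directly with the reward $r_{\boldsymbol{\RewardFunc}}$. (Under Assumptions~\ref{assump:spaces}--\ref{assump:discount}, in particular the Feller property of $\TransKernel$ and $\gamma<1$, the operator $\BellmanOp_{r_{\boldsymbol{\RewardFunc}}}$ is a $\gamma$-contraction mapping $\Cont(\StateSpace\times\ActionSpace)$ into itself, so $\OptQFunc_{r_{\boldsymbol{\RewardFunc}}}$ exists, is unique, and lies in that space — exactly as in Section~\ref{sec-rl}.)

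\textbf{The estimate.} With the identification in hand, for any $\boldsymbol{\RewardFunc},\boldsymbol{\RewardFunc}'\in\RewardSpace^N$,
\begin{align*}
\SupNorm{\OptQFunc_{eff}(\cdot,\cdot;\boldsymbol{\RewardFunc}) - \OptQFunc_{eff}(\cdot,\cdot;\boldsymbol{\RewardFunc}')}
&= \SupNorm{\OptQFunc_{r_{\boldsymbol{\RewardFunc}}} - \OptQFunc_{r_{\boldsymbol{\RewardFunc}'}}} \\
&\le \frac{1}{1-\gamma}\,\SupNorm{r_{\boldsymbol{\RewardFunc}} - r_{\boldsymbol{\RewardFunc}'}} \\
&\le \frac{1}{1-\gamma}\,{\SupNorm{\boldsymbol{\RewardFunc} - \boldsymbol{\RewardFunc}'}}_{N},
\end{align*}
where the first inequality is Proposition~\ref{prop:q_continuity} applied to the two reward functions $r_{\boldsymbol{\RewardFunc}}$ and $r_{\boldsymbol{\RewardFunc}'}$, and the second is Lemma~\ref{lemma:effective_reward_properties}(2). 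This is the claimed Lipschitz bound with constant $1/(1-\gamma)$.

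\textbf{Expected obstacle.} There is essentially no hard step here; the result is a corollary of composing a Lipschitz-$1$ map with a Lipschitz-$\tfrac{1}{1-\gamma}$ map. The only point deserving a moment's care is that the state-dependence of the aggregation weights $w_k(s)$ does not interfere: because the $w_k$ are fixed (independent of $\boldsymbol{\RewardFunc}$, as stipulated in the framework) and continuous in $s$, the map $\boldsymbol{\RewardFunc}\mapsto r_{\boldsymbol{\RewardFunc}}$ is still globally Lipschitz into $\RewardSpace$ and $r_{\boldsymbol{\RewardFunc}}$ is a bona fide continuous reward, so the single-reward machinery of Section~\ref{sec-rl} transfers verbatim. (If one preferred a self-contained argument, one could instead rerun the contraction estimate of Proposition~\ref{prop:q_continuity} directly with $r_{\boldsymbol{\RewardFunc}}$ in place of $\RewardFunc$ and invoke Lemma~\ref{lemma:effective_reward_properties}(2) at the end; the compositional route is cleaner and reuses exactly the two facts just proved.)
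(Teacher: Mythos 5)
Your proposal is correct and follows exactly the paper's argument: the paper's own proof is the one-line observation that the result "follows from Proposition~\ref{prop:q_continuity} and Lemma~\ref{lemma:effective_reward_properties}," which is precisely your composition of the Lipschitz-$1$ aggregation map with the Lipschitz-$\tfrac{1}{1-\gamma}$ reward-to-$Q^*$ map. Your added well-posedness check (that $\RewardFunc_{eff}\in\Cont(\StateSpace\times\ActionSpace)$ so the single-reward machinery applies) just makes explicit what the paper leaves implicit.
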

\begin{proof}
This follows from Proposition \ref{prop:q_continuity} and Lemma \ref{lemma:effective_reward_properties}.
\end{proof}

\begin{lemma}[Upper Hemi-Continuity of Effective Argmax Correspondence]\label{lemma:effective_argmax_uhc-revised}
Under Assumptions \ref{assump:spaces}-\ref{assump:discount} and the existence of continuous $w_k(s)$, for each fixed $s \in \StateSpace$, the argmax correspondence $\Phi_s^{eff}: \RewardSpace^N \twoheadrightarrow \ActionSpace$ defined by $\Phi_s^{eff}(\boldsymbol{\RewardFunc}) = A_{eff}^*(s; \boldsymbol{\RewardFunc})$ is non-empty, compact-valued, and upper hemi-continuous.
\end{lemma}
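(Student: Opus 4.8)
The plan is to reduce this lemma directly to Lemma~\ref{lemma:argmax_uhc} by exploiting the composition structure already established. The key observation is that the effective argmax correspondence $\Phi_s^{eff}$ factors as $\boldsymbol{\RewardFunc} \mapsto \RewardFunc_{eff}(\cdot,\cdot;\boldsymbol{\RewardFunc}) \mapsto A^*(s; \RewardFunc_{eff})$, i.e., $\Phi_s^{eff} = \Phi_s \circ G$, where $G: \RewardSpace^N \to \RewardSpace$ is the effective-reward map $G(\boldsymbol{\RewardFunc}) = \RewardFunc_{eff}(\cdot,\cdot;\boldsymbol{\RewardFunc})$ and $\Phi_s: \RewardSpace \twoheadrightarrow \ActionSpace$ is the ordinary argmax correspondence $\Phi_s(\RewardFunc) = A^*(s;\RewardFunc)$ from Lemma~\ref{lemma:argmax_uhc}. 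Since $G$ is (Lipschitz) continuous by Lemma~\ref{lemma:effective_reward_properties}(2), and since the composition of a single-valued continuous map followed by an upper hemi-continuous compact-valued correspondence is again upper hemi-continuous and compact-valued, the result follows.

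Concretely, I would carry out the following steps. First, note that by Lemma~\ref{lemma:effective_reward_properties}(1), for each $\boldsymbol{\RewardFunc} \in \RewardSpace^N$ the function $\RewardFunc_{eff}(\cdot,\cdot;\boldsymbol{\RewardFunc})$ lies in $\RewardSpace = \Cont(\StateSpace\times\ActionSpace)$, so it is a legitimate input to the theory of Section~\ref{sec-rl}; in particular $A_{eff}^*(s;\boldsymbol{\RewardFunc}) = A^*(s;\RewardFunc_{eff}(\cdot,\cdot;\boldsymbol{\RewardFunc})) = \Phi_s(G(\boldsymbol{\RewardFunc}))$. Second, invoke Lemma~\ref{lemma:argmax_uhc} applied to the reward $G(\boldsymbol{\RewardFunc})$ to conclude that $\Phi_s^{eff}(\boldsymbol{\RewardFunc})$ is non-empty and compact for every $\boldsymbol{\RewardFunc}$. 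Third, establish upper hemi-continuity via the sequential/closed-graph characterization: take $\boldsymbol{\RewardFunc}_n \to \boldsymbol{\RewardFunc}_0$ in $(\RewardSpace^N, {\SupNorm{\cdot}}_N)$ and $a_n \in A_{eff}^*(s;\boldsymbol{\RewardFunc}_n)$ with $a_n \to a_0$ in $\ActionSpace$; by Lemma~\ref{lemma:effective_reward_properties}(2), $\SupNorm{G(\boldsymbol{\RewardFunc}_n) - G(\boldsymbol{\RewardFunc}_0)} \le {\SupNorm{\boldsymbol{\RewardFunc}_n - \boldsymbol{\RewardFunc}_0}}_N \to 0$, so $G(\boldsymbol{\RewardFunc}_n) \to G(\boldsymbol{\RewardFunc}_0)$ in $\RewardSpace$; then the closed-graph property of $\Phi_s$ from Lemma~\ref{lemma:argmax_uhc} gives $a_0 \in \Phi_s(G(\boldsymbol{\RewardFunc}_0)) = A_{eff}^*(s;\boldsymbol{\RewardFunc}_0)$. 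Since $\ActionSpace$ is compact Hausdorff, the closed-graph property together with compact-valuedness is equivalent to upper hemi-continuity, which completes the argument.

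Alternatively, one could bypass the composition entirely and re-run the Berge Maximum Theorem argument from scratch: the objective $(\boldsymbol{\RewardFunc}, a) \mapsto \OptQFunc_{eff}(s,a;\boldsymbol{\RewardFunc})$ is jointly continuous (continuous in $\boldsymbol{\RewardFunc}$ by Proposition~\ref{prop:lipschitz-effective-q-revised}, continuous in $a$ since $\OptQFunc_{eff} \in \Cont(\StateSpace\times\ActionSpace)$, and jointly continuous by the same estimate used in the proof of Lemma~\ref{lemma:argmax_uhc}), and the constraint correspondence is the constant compact set $\ActionSpace$, so Berge's theorem applies verbatim. I expect no genuine obstacle here — the lemma is essentially a corollary, and the only care needed is to be explicit that the perturbation now lives in $\RewardSpace^N$ rather than $\RewardSpace$, and that the $1$-Lipschitz bound of Lemma~\ref{lemma:effective_reward_properties}(2) is exactly what transfers norm convergence from the tuple space down to $\RewardSpace$ so that the earlier lemmas apply unchanged. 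The cleanest exposition is the composition argument, since it reuses Lemma~\ref{lemma:argmax_uhc} as a black box rather than duplicating its proof.
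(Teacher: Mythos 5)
Your proposal is correct and uses essentially the same ingredients as the paper: the paper's own (brief) proof notes that $\OptQFunc_{eff}(\cdot,\cdot;\boldsymbol{\RewardFunc})$ is continuous in $(s,a)$ and in $\boldsymbol{\RewardFunc}$ (via Lemma~\ref{lemma:effective_reward_properties} and Proposition~\ref{prop:lipschitz-effective-q-revised}) and then repeats the argument of Lemma~\ref{lemma:argmax_uhc}, which is exactly your alternative route. Your primary composition argument $\Phi_s^{eff}=\Phi_s\circ G$ is just a cleaner packaging of the same reduction, valid because $\OptQFunc_{eff}(\cdot,\cdot;\boldsymbol{\RewardFunc})=\OptQFunc_{G(\boldsymbol{\RewardFunc})}$ and $G$ is $1$-Lipschitz, so no substantive difference from the paper.
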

\begin{proof}
The function $\OptQFunc_{eff}(s,a;\boldsymbol{\RewardFunc})$ is continuous in $(s,a)$ (as it is the fixed point of $\BellmanOp_{eff, \boldsymbol{\RewardFunc}}$ which maps continuous functions to continuous functions, given $\RewardFunc_{eff}$ is continuous by Lemma \ref{lemma:effective_reward_properties}). By Proposition \ref{prop:lipschitz-effective-q-revised}, $\OptQFunc_{eff}$ is continuous with respect to $\boldsymbol{\RewardFunc}$. The proof is then similar to that of Lemma~\ref{lemma:argmax_uhc}.
\end{proof}

\begin{proposition}[Continuity of Policy Map (Effective Model) under Uniqueness]\label{thm:multi-continuity-uniqueness-eff}
Let Assumptions \ref{assump:spaces}-\ref{assump:discount} and the existence of continuous $w_k(s)$ hold. Let $\boldsymbol{\RewardFunc}_0 \in \RewardSpace^N$. Suppose there exists an open neighborhood $\mathcal{N}(\boldsymbol{\RewardFunc}_0) \subset \RewardSpace^N$ of $\boldsymbol{\RewardFunc}_0$ such that for all $\boldsymbol{\RewardFunc} \in \mathcal{N}(\boldsymbol{\RewardFunc}_0)$ and all $s \in \StateSpace$, $A_{eff}^*(s; \boldsymbol{\RewardFunc})$ is a singleton, denoted $\{a_{eff}^*(s; \boldsymbol{\RewardFunc})\}$.
Define the policy map $M_{RL}^{eff}: \mathcal{N}(\boldsymbol{\RewardFunc}_0) \to \PolicySpace_{det}$ by $M_{RL}^{eff}(\boldsymbol{\RewardFunc})(s) = a_{eff}^*(s; \boldsymbol{\RewardFunc})$. This map defines the deterministic selections of ${\OptPolicy_{\boldsymbol{\RewardFunc}}}^{eff}$.
If $\PolicySpace_{det}$ is equipped with the topology of pointwise convergence, then $M_{RL}^{eff}$ is continuous at $\boldsymbol{\RewardFunc}_0$.
\end{proposition}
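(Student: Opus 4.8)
The plan is to replay the proof of Theorem~\ref{thm:continuity_rigorous} almost verbatim, with the effective-reward analogues of the intermediate results substituted in. First I would reduce the assertion to a statewise one: since $\PolicySpace_{det}$ is given the topology of pointwise convergence, it suffices to show that for every sequence $(\boldsymbol{\RewardFunc}_n)$ in $\mathcal{N}(\boldsymbol{\RewardFunc}_0)$ with ${\SupNorm{\boldsymbol{\RewardFunc}_n - \boldsymbol{\RewardFunc}_0}}_N \to 0$ and every fixed $s \in \StateSpace$, one has $\MetricAction\!\left(a_{eff}^*(s;\boldsymbol{\RewardFunc}_n),\, a_{eff}^*(s;\boldsymbol{\RewardFunc}_0)\right) \to 0$. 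Note $\boldsymbol{\RewardFunc}_n \in \mathcal{N}(\boldsymbol{\RewardFunc}_0)$ for all large $n$ because $\mathcal{N}(\boldsymbol{\RewardFunc}_0)$ is open, so $a_{eff}^*(s;\boldsymbol{\RewardFunc}_n)$ is well defined there.

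Next, fix $s \in \StateSpace$ and consider the correspondence $\Phi_s^{eff}(\boldsymbol{\RewardFunc}) = A_{eff}^*(s;\boldsymbol{\RewardFunc})$. By Lemma~\ref{lemma:effective_argmax_uhc-revised} this correspondence is non-empty, compact-valued and upper hemi-continuous on $\RewardSpace^N$ (which in turn rests on the composition $\boldsymbol{\RewardFunc} \mapsto \RewardFunc_{eff} \mapsto \OptQFunc_{eff} \mapsto A_{eff}^*$, the first arrow being Lipschitz by Lemma~\ref{lemma:effective_reward_properties} and the second by Proposition~\ref{prop:lipschitz-effective-q-revised}). By the hypothesis of the proposition, $\Phi_s^{eff}$ is single-valued on the open set $\mathcal{N}(\boldsymbol{\RewardFunc}_0)$. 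Invoking the standard fact that a compact-valued u.h.c. correspondence that is single-valued on an open set is, as a single-valued map, continuous on that set (e.g.\ \cite{AliprantisBorder2006InfiniteDimAnalysis}, Lemma~17.6), the function $f_s^{eff}: \boldsymbol{\RewardFunc} \mapsto a_{eff}^*(s;\boldsymbol{\RewardFunc})$ is continuous on $\mathcal{N}(\boldsymbol{\RewardFunc}_0)$, in particular at $\boldsymbol{\RewardFunc}_0$. Hence $f_s^{eff}(\boldsymbol{\RewardFunc}_n) \to f_s^{eff}(\boldsymbol{\RewardFunc}_0)$, i.e.\ $a_{eff}^*(s;\boldsymbol{\RewardFunc}_n) \to a_{eff}^*(s;\boldsymbol{\RewardFunc}_0)$. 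Since $s$ was arbitrary, this is exactly pointwise convergence $M_{RL}^{eff}(\boldsymbol{\RewardFunc}_n) \to M_{RL}^{eff}(\boldsymbol{\RewardFunc}_0)$, establishing continuity of $M_{RL}^{eff}$ at $\boldsymbol{\RewardFunc}_0$.

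I do not expect any genuinely hard step: all the analytic content has been absorbed into Lemma~\ref{lemma:effective_argmax_uhc-revised} and Lemma~\ref{lemma:effective_reward_properties}. The two points deserving care are (i) that the Lipschitz dependence $\boldsymbol{\RewardFunc} \mapsto \RewardFunc_{eff}$ used to chain the continuity results requires the weights $w_k(s)$ to be held fixed as $\boldsymbol{\RewardFunc}$ is perturbed (if the $w_k$ were themselves functions of $\boldsymbol{\RewardFunc}$ the chain would break, as flagged in the text), and (ii) that the singleton assumption is imposed uniformly over all $s \in \StateSpace$ and over a full neighborhood of $\boldsymbol{\RewardFunc}_0$, which is precisely what licenses the pointwise-in-$s$ application of the Aliprantis–Border lemma. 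One may optionally append, in parallel with the remark following Theorem~\ref{thm:continuity_rigorous}, that continuity of $\OptQFunc_{eff}$ in $(s,a)$ together with the uniqueness hypothesis makes $s \mapsto a_{eff}^*(s;\boldsymbol{\RewardFunc})$ measurable, so that ${\OptPolicy_{\boldsymbol{\RewardFunc}}}^{eff}$ is a well-defined deterministic policy.
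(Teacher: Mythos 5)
Your argument is exactly the paper's intended proof: the paper simply states that the result follows "analogously to Theorem~\ref{thm:continuity_rigorous}, using $\RewardFunc_{eff}$ and Lemma~\ref{lemma:effective_argmax_uhc-revised}", and your write-up spells out precisely that replay, with the same reduction to pointwise convergence and the same appeal to the Aliprantis--Border fact about u.h.c.\ correspondences that are single-valued on an open set. No gaps; your cautionary remarks about fixed weights $w_k$ and the uniform singleton hypothesis match the paper's own caveats.
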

\begin{proof}
The proof is analogous to that of Theorem \ref{thm:continuity_rigorous}, using $\RewardFunc_{eff}$ and Lemma \ref{lemma:effective_argmax_uhc-revised}.
\end{proof}

\begin{proposition}[Discontinuity of Policy Map (Effective Model) under Non-Uniqueness]\label{prop:multi-discontinuity-eff}
Let Assumptions \ref{assump:spaces}-\ref{assump:discount} and the existence of continuous $w_k(s)$ hold. Suppose for $\boldsymbol{\RewardFunc}_0 \in \RewardSpace^N$, there exists $s_0 \in \StateSpace$ such that $A_{eff}^*(s_0; \boldsymbol{\RewardFunc}_0)$ is finite and contains at least two distinct actions, $a_1 \neq a_2$. 

(1) Let $M_{RL}^{eff}: \RewardSpace^N \to \PolicySpace_{det}$ be a policy map selecting a deterministic policy ${\OptPolicy_{\boldsymbol{\RewardFunc}}}^{eff}(s) \in A_{eff}^*(s; \boldsymbol{\RewardFunc})$ (e.g., $M_{RL}^{eff}(\boldsymbol{\RewardFunc}_0)(s_0) = a_1$), then $M_{RL}^{eff}$ is discontinuous at $\boldsymbol{\RewardFunc}_0$ under pointwise convergence. 

(2) Let $M_{RL}^{eff}: \RewardSpace^N \to \PolicySpace$ be defined by selecting the stochastic policy ${\OptPolicy_{\boldsymbol{\RewardFunc}}}^{eff}$ such that for each $s \in \StateSpace$, ${\OptPolicy_{\boldsymbol{\RewardFunc}}}^{eff}(\cdot | s)$ is the uniform probability distribution over the set $A_{eff}^*(s; \boldsymbol{\RewardFunc})$. 
Then the map $\boldsymbol{\RewardFunc} \mapsto {\OptPolicy_{\boldsymbol{\RewardFunc}}}^{eff}(\cdot | s_0)$, viewed as a map from $(\RewardSpace^N, {\SupNorm{\cdot}}_{N})$ to $(\PolicySpace, d_{TV})$ is discontinuous at $\boldsymbol{\RewardFunc}_0$.
\end{proposition}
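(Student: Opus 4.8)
The plan is to reuse, essentially verbatim, the perturbation construction from the proofs of Propositions~\ref{prop:discontinuity_deterministic} and~\ref{prop:discontinuity_stochastic}, with one genuinely new ingredient: since the object we are allowed to perturb is the \emph{tuple} $\boldsymbol{\RewardFunc}=(\RewardFunc_1,\dots,\RewardFunc_N)$ while the convex weights $w_k(s)$ stay fixed, I must show that an arbitrary perturbation of $\RewardFunc_{eff}$ can still be realized by a small perturbation of the tuple. Write $\RewardFunc_{0,k}$ for the $k$-th component of $\boldsymbol{\RewardFunc}_0$ and put $Q_0:=\OptQFunc_{eff}(\cdot,\cdot;\boldsymbol{\RewardFunc}_0)$, so $Q_0(s_0,a_1)=Q_0(s_0,a_2)$ and $a_1,a_2$ are global maximizers of $Q_0(s_0,\cdot)$ on $\ActionSpace$. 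Exactly as in Proposition~\ref{prop:discontinuity_deterministic}, choose a continuous bump $\varphi\in\Cont(\StateSpace\times\ActionSpace)$ with $0\le\varphi\le1$, $\varphi(s_0,a_2)=1$, and $\varphi(s_0,a_j)=0$ for every $a_j\in A_{eff}^*(s_0;\boldsymbol{\RewardFunc}_0)$ with $a_j\neq a_2$ (this finite set of points can be separated from $(s_0,a_2)$ by Urysohn's lemma, $\StateSpace\times\ActionSpace$ being compact metric hence normal), and set $Q_\varepsilon:=Q_0+\varepsilon\varphi$ for $\varepsilon>0$, so $\SupNorm{Q_\varepsilon-Q_0}=\varepsilon$.

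The key step is to produce the tuple. Using the Bellman-inversion of Proposition~\ref{prop:discontinuity_deterministic}, $R_Q(s,a):=Q(s,a)-\gamma\int_{\StateSpace}\max_{a'}Q(s',a')\,\TransKernel(ds'\mid s,a)$ satisfies $\OptQFunc_{R_Q}=Q$; in particular $\RewardFunc_{eff}(\cdot,\cdot;\boldsymbol{\RewardFunc}_0)=R_{Q_0}$, since $Q_0$ is the fixed point of $\BellmanOp_{eff,\boldsymbol{\RewardFunc}_0}$. Define $\Delta_\varepsilon:=R_{Q_\varepsilon}-R_{Q_0}$; by Assumption~\ref{assump:transition} this is continuous, and by the same $1$-Lipschitz estimate on $Q\mapsto\max_{a'}Q(\cdot,a')$ used there, $\SupNorm{\Delta_\varepsilon}\le\varepsilon(1+\gamma)$. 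Now exploit $\sum_{k=1}^N w_k(s)=1$: set $\boldsymbol{\RewardFunc}_\varepsilon:=(\RewardFunc_{0,1}+\Delta_\varepsilon,\dots,\RewardFunc_{0,N}+\Delta_\varepsilon)$, i.e.\ perturb \emph{every} component by the same function $\Delta_\varepsilon$. Then
\[
\RewardFunc_{eff}(s,a;\boldsymbol{\RewardFunc}_\varepsilon)=\sum_{k=1}^N w_k(s)\bigl(\RewardFunc_{0,k}(s,a)+\Delta_\varepsilon(s,a)\bigr)=R_{Q_0}(s,a)+\Delta_\varepsilon(s,a)=R_{Q_\varepsilon}(s,a),
\]
so by uniqueness of the Bellman fixed point $\OptQFunc_{eff}(\cdot,\cdot;\boldsymbol{\RewardFunc}_\varepsilon)=Q_\varepsilon$, while ${\SupNorm{\boldsymbol{\RewardFunc}_\varepsilon-\boldsymbol{\RewardFunc}_0}}_{N}=\SupNorm{\Delta_\varepsilon}\le\varepsilon(1+\gamma)\to0$.

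From here both claims reduce to the single-reward arguments applied to $Q_\varepsilon$ at $s_0$. The two comparisons of Proposition~\ref{prop:discontinuity_deterministic} (formerly optimal actions separated from $a_2$ by exactly $\varepsilon$; formerly suboptimal actions still below $a_2$ by at least their positive gap, using $\varphi\le1$) give $A_{eff}^*(s_0;\boldsymbol{\RewardFunc}_\varepsilon)=\{a_2\}$ for every $\varepsilon>0$. For part~(1), the deterministic selector is then forced to take value $a_2$ at $s_0$, which does not converge to $M_{RL}^{eff}(\boldsymbol{\RewardFunc}_0)(s_0)=a_1\neq a_2$; hence discontinuity in the pointwise topology. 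For part~(2), ${\OptPolicy_{\boldsymbol{\RewardFunc}_\varepsilon}}^{eff}(\cdot\mid s_0)=\delta_{a_2}$ while ${\OptPolicy_{\boldsymbol{\RewardFunc}_0}}^{eff}(\cdot\mid s_0)$ is uniform over the $m$-element set $A_{eff}^*(s_0;\boldsymbol{\RewardFunc}_0)$ with $m\ge2$; the computation in Proposition~\ref{prop:discontinuity_stochastic} yields $d_{TV}=(m-1)/m\ge1/2$ along any sequence $\varepsilon_n\to0$, so total-variation convergence fails. The only real obstacle is the middle step — that fixing the convex weights $w_k(s)$ does not restrict the attainable effective-reward perturbations — and it dissolves via the observation that a common additive perturbation of all components passes through the weights unchanged because they sum to one; everything else is inherited directly from the single-reward propositions.
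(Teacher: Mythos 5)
Your proposal is correct and follows essentially the same route as the paper's proof: the bump-and-inverse-Bellman construction on the effective Q-function, combined with the observation that adding the same perturbation $\Delta_\varepsilon$ to every component of the tuple passes through the weights unchanged because $\sum_k w_k(s)=1$, which is exactly the paper's distribution step. The final action-switch and total-variation arguments are likewise inherited from the single-reward propositions just as in the paper.
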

\begin{proof}
This proof applies the "inverse Bellman" method to the effective Q-function, $\OptQFunc_{eff}$, to demonstrate discontinuity. The core idea is to first construct a perturbed effective Q-function, $\QFunc_{eff, \varepsilon}$, that guarantees an action switch, and then construct a tuple of reward functions, $\boldsymbol{\RewardFunc}_\varepsilon$, that is guaranteed to produce this perturbed Q-function and that converges to $\boldsymbol{\RewardFunc}_0$.

\noindent\textbf{1. Construct a Perturbation in the Effective Q-Function Space}

Let $\RewardFunc_{eff,0}(s,a) = \sum_{k=1}^N w_k(s) \RewardFunc_{k,0}(s,a)$ be the initial effective reward. Let $Q_{eff,0} = \OptQFunc_{\RewardFunc_{eff,0}}$ be the corresponding optimal effective Q-function. By assumption, $A_{eff}^*(s_0; \boldsymbol{\RewardFunc}_0)$ contains at least two actions, $a_1$ and $a_2$, which means $Q_{eff,0}(s_0, a_1) = Q_{eff,0}(s_0, a_2)$.

We define a continuous "bump" function $\varphi \in \Cont(\StateSpace \times \ActionSpace)$ as Proposition~\ref{prop:discontinuity_deterministic} such that:
\begin{itemize}
    \item $0 \le \varphi(s,a) \le 1$ for all $(s,a)$.
    \item $\varphi(s_0, a_2) = 1$.
    \item $\varphi(s_0, a_j) = 0$ for all other actions $a_j \in A_{eff}^*(s_0; \boldsymbol{\RewardFunc}_0)$.
\end{itemize}
For any $\varepsilon > 0$, we define a perturbed effective Q-function $\QFunc_{eff, \varepsilon}$:
\[
\QFunc_{eff, \varepsilon}(s,a) := Q_{eff,0}(s,a) + \varepsilon\,\varphi(s,a).
\]

\noindent\textbf{2. Invert the Bellman Equation to Find the Target Effective Reward}

Using the inverse Bellman mapping defined previously, we find the unique effective reward function, $\RewardFunc_{eff, \varepsilon}$, whose optimal Q-function is exactly $\QFunc_{eff, \varepsilon}$:
\[
\RewardFunc_{eff, \varepsilon} := \RewardFunc_{Q_{eff, \varepsilon}} \quad \text{such that} \quad \OptQFunc_{\RewardFunc_{eff, \varepsilon}} = \QFunc_{eff, \varepsilon}.
\]
As established in the proof of Proposition \ref{prop:discontinuity_deterministic}, we know that as $\varepsilon \to 0$, $\RewardFunc_{eff, \varepsilon}$ converges uniformly to $\RewardFunc_{eff, 0}$.

\noindent\textbf{3. Construct a Convergent Sequence of Reward Tuples}

Now we must construct a sequence of reward tuples, $\boldsymbol{\RewardFunc}_\varepsilon = (\RewardFunc_{1,\varepsilon}, \dots, \RewardFunc_{N,\varepsilon})$, that converges to $\boldsymbol{\RewardFunc}_0$ and generates our target effective reward $\RewardFunc_{eff, \varepsilon}$. Let $\Delta \RewardFunc_\varepsilon = \RewardFunc_{eff, \varepsilon} - \RewardFunc_{eff, 0}$.

We need to find perturbations $\delta_k(s,a) = \RewardFunc_{k, \varepsilon}(s,a) - \RewardFunc_{k,0}(s,a)$ that satisfy two conditions:
\begin{enumerate}
    \item $\sum_{k=1}^N w_k(s) \delta_k(s,a) = \Delta \RewardFunc_\varepsilon(s,a)$ for all $(s,a)$.
    \item $\max_k \SupNorm{\delta_k} \to 0$ as $\varepsilon \to 0$.
\end{enumerate}

We can distribute the perturbation across all components in a simple and robust manner. We define the perturbation for every reward component to be identical to the target perturbation of the effective reward:
\[
\delta_k(s,a) := \Delta \RewardFunc_\varepsilon(s,a) \quad \text{for all } k \in \{1, \ldots, N\}.
\]
So, the new reward tuple $\boldsymbol{\RewardFunc}_\varepsilon$ is defined as:
\[
\RewardFunc_{k, \varepsilon}(s,a) := \RewardFunc_{k,0}(s,a) + \Delta \RewardFunc_\varepsilon(s,a) \quad \text{for all } k.
\]

It's easy to verify that the construction correctly produces the target effective reward. The norm of the difference between the new and old reward tuples is:
\begin{align*}
{\SupNorm{\boldsymbol{\RewardFunc}_\varepsilon - \boldsymbol{\RewardFunc}_0}}_{N} = \max_{k} \SupNorm{\RewardFunc_{k,\varepsilon} - \RewardFunc_{k,0}} = \SupNorm{\Delta \RewardFunc_\varepsilon} = \SupNorm{\RewardFunc_{eff, \varepsilon} - \RewardFunc_{eff, 0}}.
\end{align*}
In the proof of Proposition \ref{prop:discontinuity_deterministic} (which is invoked in Step 2 of this proof), it was established that $\|\RewardFunc_{eff, \varepsilon} - \RewardFunc_{eff, 0}\|_\infty \le \varepsilon(1+\gamma)$. Therefore:
\[
{\SupNorm{\boldsymbol{\RewardFunc}_\varepsilon - \boldsymbol{\RewardFunc}_0}}_{N} \to 0 \quad \text{as } \varepsilon \to 0.
\]

\noindent\textbf{4. The Action Switch and Discontinuity}

We have successfully constructed a sequence of reward tuples $\boldsymbol{\RewardFunc}_\varepsilon \to \boldsymbol{\RewardFunc}_0$. The optimal effective Q-function for $\boldsymbol{\RewardFunc}_\varepsilon$ is $\QFunc_{eff, \varepsilon}$.

The final part of the argument is identical to that in Proposition \ref{prop:discontinuity_deterministic}. By construction, for any $\varepsilon > 0$, $\QFunc_{eff, \varepsilon}(s_0, a_2)$ is strictly greater than $\QFunc_{eff, \varepsilon}(s_0, a')$ for any other action $a'$. Thus, $a_2$ is the unique optimal action for the effective reward.
$A_{eff}^*(s_0; \boldsymbol{\RewardFunc}_\varepsilon) = \{a_2\}$.

\textbf{(1) Deterministic Case:} If the selection rule for $\boldsymbol{\RewardFunc}_0$ chose $a_1$, the map $M_{RL}^{eff}$ must now select $a_2$ for any $\boldsymbol{\RewardFunc}_\varepsilon$. The policy jumps from $a_1$ to $a_2$, proving discontinuity.

\textbf{(2) Stochastic Case:} For $\boldsymbol{\RewardFunc}_0$, the policy at $s_0$ is a uniform distribution over $A_{eff}^*(s_0; \boldsymbol{\RewardFunc}_0)$, which has mass $\le 1/2$ at $a_2$. For any $\boldsymbol{\RewardFunc}_\varepsilon$, the policy becomes a Dirac delta measure at $a_2$, with mass 1. The Total Variation distance between these policies is constant and non-zero. Thus, the map to the stochastic policy is also discontinuous.
\end{proof}

\subsection{Mitigating Discontinuities: Entropy Regularization}\label{subsec:Entropy_Regularization}

Entropy regularization offers another powerful mechanism to promote policy smoothness and address the degeneracy of optima (see e.g., \cite{haarnoja2018soft,geist2019theory}). By adding an \textit{entropy bonus}, it acts as a \textit{tie-breaker} that encourages the policy to be stochastic, especially among actions with similar values. This resolves policy indeterminacy by yielding a unique, smooth optimal policy.

The objective effectively optimized with entropy regularization~\citep{geist2019theory} would be:
\begin{equation}
J_{eff, \alpha}(\Policy; \boldsymbol{\RewardFunc}) = \mathbb{E}_{\Policy} \left[ \sum_{t=0}^\infty \gamma^t \left( \RewardFunc_{eff}(s_t, a_t; \boldsymbol{\RewardFunc}) + \alpha H(\Policy(\cdot|s_t)) \right) \right],
\end{equation}
where $H(\Policy(\cdot|s)) = -\sum_a \Policy(a|s) \log \Policy(a|s)$ is the policy entropy at state $s$, $\alpha > 0$ is the temperature, and the expectation is over trajectories generated by $\Policy$ starting from an appropriate initial state distribution, with rewards $\RewardFunc_{eff}$.

The corresponding soft Bellman optimality operator for $\OptQFunc_{eff,\alpha}(s,a; \boldsymbol{\RewardFunc})$ is:
\begin{equation}
(\BellmanOp_{eff, \boldsymbol{\RewardFunc},\alpha} Q)(s,a) = \RewardFunc_{eff}(s,a; \boldsymbol{\RewardFunc}) + \gamma \mathbb{E}_{s' \sim \TransKernel(\cdot|s,a)} \left[ \alpha \log \sum_{a' \in \ActionSpace} \exp(Q(s',a')/\alpha) \right].
\end{equation}
For $\alpha > 0$, $\OptQFunc_{eff,\alpha}$ is unique with respect to $\RewardFunc_{eff}$ (and thus with respect to $\boldsymbol{\RewardFunc}$) under standard conditions. The optimal regularized policy, $\OptPolicy_{\boldsymbol{\RewardFunc},\alpha}$, takes the Boltzmann (softmax) form:
\begin{equation}
\OptPolicy_{\boldsymbol{\RewardFunc},\alpha}(a|s) = \frac{\exp(\OptQFunc_{eff,\alpha}(s,a; \boldsymbol{\RewardFunc})/\alpha)}{\sum_{b \in \ActionSpace} \exp(\OptQFunc_{eff,\alpha}(s,b; \boldsymbol{\RewardFunc})/\alpha)}.
\end{equation}
This policy is unique with respect to $\boldsymbol{\RewardFunc}$. See e.g., \cite{geist2019theory} for more information.

\begin{proposition}[Lipschitz Continuity of Soft Optimal Policy]\label{prop:soft-policy-lipschitz-eff}
For $\alpha > 0$, under Assumptions \ref{assump:spaces}-\ref{assump:discount} and the existence of continuous $w_k(s)$ summing to 1, the mapping $\boldsymbol{\RewardFunc} \mapsto \OptPolicy_{\boldsymbol{\RewardFunc},\alpha}$ is Lipschitz continuous from $(\RewardSpace^N, {\SupNorm{\cdot}}_{N})$ to the space of policies. Specifically, for each $s \in \StateSpace$:
\[
d_{TV}(\OptPolicy_{\boldsymbol{\RewardFunc}_1,\alpha}(\cdot|s), \OptPolicy_{\boldsymbol{\RewardFunc}_2,\alpha}(\cdot|s)) \le \frac{1}{2\alpha(1-\gamma)} {\SupNorm{\boldsymbol{\RewardFunc}_1 - \boldsymbol{\RewardFunc}_2}}_{N}.
\]
\end{proposition}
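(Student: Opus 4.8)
The plan is to factor the map $\boldsymbol{\RewardFunc} \mapsto \OptPolicy_{\boldsymbol{\RewardFunc},\alpha}$ through the soft optimal Q-function, mirroring the decomposition used in the hard-max case, $\boldsymbol{\RewardFunc} \mapsto \RewardFunc_{eff} \mapsto \OptQFunc_{eff,\alpha} \mapsto \OptPolicy_{\boldsymbol{\RewardFunc},\alpha}$, but now every link is Lipschitz (there is no argmax discontinuity to contend with). Concretely, I would establish three Lipschitz estimates and compose their constants: (i) $\boldsymbol{\RewardFunc} \mapsto \RewardFunc_{eff}(\cdot,\cdot;\boldsymbol{\RewardFunc})$ is $1$-Lipschitz, which is Lemma~\ref{lemma:effective_reward_properties}; (ii) $\RewardFunc_{eff} \mapsto \OptQFunc_{eff,\alpha}$ is $\tfrac{1}{1-\gamma}$-Lipschitz in sup-norm; and (iii) at each fixed $s$, the Boltzmann map sending a logit vector $Q(s,\cdot)/\alpha$ (in sup-norm) to the softmax distribution (in total variation) is $\tfrac{1}{2\alpha}$-Lipschitz. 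Composing gives $1\cdot\tfrac{1}{1-\gamma}\cdot\tfrac{1}{2\alpha}=\tfrac{1}{2\alpha(1-\gamma)}$, exactly the claimed constant.

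For step (ii), I would first record that the soft-max (log-sum-exp) operator $\mathrm{smax}_\alpha(f) := \alpha\log\sum_{a'\in\ActionSpace}\exp(f(a')/\alpha)$ is non-expansive on $(\Cont(\ActionSpace),\SupNorm{\cdot})$: from the pointwise bound $f\le g+\SupNorm{f-g}$ one gets $\sum_{a'}\exp(f(a')/\alpha)\le e^{\SupNorm{f-g}/\alpha}\sum_{a'}\exp(g(a')/\alpha)$, hence $\mathrm{smax}_\alpha(f)\le \mathrm{smax}_\alpha(g)+\SupNorm{f-g}$, and by symmetry $|\mathrm{smax}_\alpha(f)-\mathrm{smax}_\alpha(g)|\le\SupNorm{f-g}$. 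Consequently the soft Bellman operator $\BellmanOp_{eff,\boldsymbol{\RewardFunc},\alpha}$ is a $\gamma$-contraction on $\Cont(\StateSpace\times\ActionSpace)$, so the two-line fixed-point argument of Proposition~\ref{prop:q_continuity} carries over verbatim with $\max$ replaced by $\mathrm{smax}_\alpha$, yielding $\SupNorm{\OptQFunc_{eff,\alpha}(\cdot,\cdot;\boldsymbol{\RewardFunc}_1)-\OptQFunc_{eff,\alpha}(\cdot,\cdot;\boldsymbol{\RewardFunc}_2)}\le\tfrac{1}{1-\gamma}\SupNorm{\RewardFunc_{eff}(\cdot,\cdot;\boldsymbol{\RewardFunc}_1)-\RewardFunc_{eff}(\cdot,\cdot;\boldsymbol{\RewardFunc}_2)}$; chaining with Lemma~\ref{lemma:effective_reward_properties} gives the bound $\tfrac{1}{1-\gamma}{\SupNorm{\boldsymbol{\RewardFunc}_1-\boldsymbol{\RewardFunc}_2}}_{N}$ (this parallels Proposition~\ref{prop:lipschitz-effective-q-revised}). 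The well-definedness and uniqueness of $\OptQFunc_{eff,\alpha}$, and hence of $\OptPolicy_{\boldsymbol{\RewardFunc},\alpha}$, for $\alpha>0$ is exactly the contraction statement just established.

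Step (iii) is where the real work — and the main obstacle — lies: the sharp softmax stability estimate $d_{TV}(\mathrm{softmax}(u),\mathrm{softmax}(v))\le\tfrac12\SupNorm{u-v}$ for logit vectors $u,v$ over $\ActionSpace$. I would use a path argument: set $w=v-u$, $g(t)=\mathrm{softmax}(u+tw)$ for $t\in[0,1]$, and differentiate to obtain $g_a'(t)=g_a(t)\bigl(w(a)-\E_{g(t)}[w]\bigr)$. Then $\sum_a|g_a'(t)|=\E_{g(t)}\bigl|w-\E_{g(t)}[w]\bigr|\le\sqrt{\mathrm{Var}_{g(t)}(w)}$ by Cauchy--Schwarz, and since $w$ takes values in an interval of length $\le 2\SupNorm{w}$, Popoviciu's variance inequality gives $\mathrm{Var}_{g(t)}(w)\le\SupNorm{w}^2$, hence $\sum_a|g_a'(t)|\le\SupNorm{w}$. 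Integrating over $t\in[0,1]$ yields $\|\mathrm{softmax}(v)-\mathrm{softmax}(u)\|_1\le\SupNorm{w}$, i.e. the $d_{TV}$ bound. The subtlety is precisely the factor $\tfrac12$: a crude estimate $|w(a)-\E_{g(t)}[w]|\le 2\SupNorm{w}$ only produces the constant $\tfrac{1}{\alpha(1-\gamma)}$, which is twice too large, so the Popoviciu step (equivalently, mean-absolute-deviation $\le$ half-range) is essential to hit the stated constant.

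Finally I would assemble the pieces. For a fixed $s\in\StateSpace$, the policies $\OptPolicy_{\boldsymbol{\RewardFunc}_i,\alpha}(\cdot|s)$ are the softmaxes of the logit vectors $\OptQFunc_{eff,\alpha}(s,\cdot;\boldsymbol{\RewardFunc}_i)/\alpha$, whose sup-norm difference is at most $\tfrac{1}{\alpha}\SupNorm{\OptQFunc_{eff,\alpha}(\cdot,\cdot;\boldsymbol{\RewardFunc}_1)-\OptQFunc_{eff,\alpha}(\cdot,\cdot;\boldsymbol{\RewardFunc}_2)}\le\tfrac{1}{\alpha(1-\gamma)}{\SupNorm{\boldsymbol{\RewardFunc}_1-\boldsymbol{\RewardFunc}_2}}_{N}$ by step (ii). Applying the softmax stability estimate of step (iii) then gives $d_{TV}(\OptPolicy_{\boldsymbol{\RewardFunc}_1,\alpha}(\cdot|s),\OptPolicy_{\boldsymbol{\RewardFunc}_2,\alpha}(\cdot|s))\le\tfrac12\cdot\tfrac{1}{\alpha(1-\gamma)}{\SupNorm{\boldsymbol{\RewardFunc}_1-\boldsymbol{\RewardFunc}_2}}_{N}$, uniformly in $s$, which is the desired Lipschitz bound with constant $\tfrac{1}{2\alpha(1-\gamma)}$.
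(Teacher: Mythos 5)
Your proposal is correct and follows essentially the same route as the paper: the same three-step composition (Lemma~\ref{lemma:effective_reward_properties} for the $1$-Lipschitz effective-reward map, the $\gamma$-contraction of the soft Bellman operator giving the $\tfrac{1}{1-\gamma}$ bound on $\OptQFunc_{eff,\alpha}$, and a sharp softmax stability estimate giving the $\tfrac{1}{2\alpha}$ factor). The only divergence is in how the softmax lemma is proved: the paper (Lemma~\ref{lem:Lipschitz_properties_softmax}) uses the mean value theorem and bounds the $(\infty,1)$ operator norm of the softmax Jacobian exactly, by maximizing $\sum_i \pi_i |v_i - \bar v|$ over the vertices of the cube $\SupNorm{v}\le 1$ to get $4p(1-p)\le 1$, whereas you integrate the derivative along the segment between the two logit vectors and bound the resulting mean absolute deviation by Cauchy--Schwarz plus Popoviciu's variance inequality (equivalently, MAD $\le$ half-range). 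Both arguments are really estimating the same quantity $\mathbb{E}_{\pi}\bigl|w-\mathbb{E}_{\pi}[w]\bigr|\le\SupNorm{w}$ and yield the same sharp constant, so you correctly identified that the factor $\tfrac12$ is the crux; your variant is a legitimate, equally tight alternative to the paper's appendix lemma.
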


\begin{proof}
Let $\boldsymbol{\RewardFunc}_1, \boldsymbol{\RewardFunc}_2 \in \RewardSpace^N$.
Let $\RewardFunc_{eff,1}(s,a) = \RewardFunc_{eff}(s,a; \boldsymbol{\RewardFunc}_1)$ and $\RewardFunc_{eff,2}(s,a) = \RewardFunc_{eff}(s,a; \boldsymbol{\RewardFunc}_2)$ be the corresponding effective reward functions as defined in Eq.~\eqref{eq:effective_reward}.
Let $\OptQFunc_1(s,a) = \OptQFunc_{eff,\alpha}(s,a; \boldsymbol{\RewardFunc}_1)$ and $\OptQFunc_2(s,a) = \OptQFunc_{eff,\alpha}(s,a; \boldsymbol{\RewardFunc}_2)$ be the unique fixed points of the respective soft Bellman optimality operators:
\begin{align*}
\OptQFunc_1(s,a) &= \RewardFunc_{eff,1}(s,a) + \gamma \mathbb{E}_{s' \sim \TransKernel(\cdot|s,a)} \left[ \alpha \log \sum_{a'} \exp(\OptQFunc_1(s',a')/\alpha) \right], \\
\OptQFunc_2(s,a) &= \RewardFunc_{eff,2}(s,a) + \gamma \mathbb{E}_{s' \sim \TransKernel(\cdot|s,a)} \left[ \alpha \log \sum_{a'} \exp(\OptQFunc_2(s',a')/\alpha) \right].
\end{align*}
The proof proceeds as follows:

\noindent\textbf{1. Bound the difference in effective rewards.}
From Lemma \ref{lemma:effective_reward_properties}, we have:
\begin{equation}\label{eq:proof_Reff_diff}
\SupNorm{\RewardFunc_{eff,1} - \RewardFunc_{eff,2}} \le {\SupNorm{\boldsymbol{\RewardFunc}_1 - \boldsymbol{\RewardFunc}_2}}_{N}.
\end{equation}

\noindent\textbf{2. Bound the difference in optimal soft Q-functions.}
The soft Bellman optimality operator $T_{R,\alpha}Q = R + \gamma \mathcal{L}_\alpha Q$, where $(\mathcal{L}_\alpha Q)(s,a) = \mathbb{E}_{s' \sim \TransKernel(\cdot|s,a)} [\alpha \log \sum_{a'} \exp(Q(s',a')/\alpha)]$, is a contraction mapping with modulus $\gamma$ in the supremum norm~\citep{geist2019theory}. That is, $\SupNorm{T_{R,\alpha}Q_A - T_{R,\alpha}Q_B} \le \gamma \SupNorm{Q_A - Q_B}$.
Let $Q_1 = \OptQFunc_1$ and $Q_2 = \OptQFunc_2$. Then $Q_1 = T_{\RewardFunc_{eff,1},\alpha} Q_1$ and $Q_2 = T_{\RewardFunc_{eff,2},\alpha} Q_2$.
Consider $\SupNorm{Q_1 - Q_2}$:
\begin{align*}
\SupNorm{Q_1 - Q_2} &= \SupNorm{T_{\RewardFunc_{eff,1},\alpha} Q_1 - T_{\RewardFunc_{eff,2},\alpha} Q_2} \\
&\le \SupNorm{T_{\RewardFunc_{eff,1},\alpha} Q_1 - T_{\RewardFunc_{eff,1},\alpha} Q_2} + \SupNorm{T_{\RewardFunc_{eff,1},\alpha} Q_2 - T_{\RewardFunc_{eff,2},\alpha} Q_2}.
\end{align*}
Using the contraction property for the first term and direct evaluation for the second:
$(\BellmanOp_{\RewardFunc_{eff,1},\alpha} Q_2)(s,a) - (\BellmanOp_{\RewardFunc_{eff,2},\alpha} Q_2)(s,a) = \RewardFunc_{eff,1}(s,a) - \RewardFunc_{eff,2}(s,a)$.
So, $\SupNorm{T_{\RewardFunc_{eff,1},\alpha} Q_2 - T_{\RewardFunc_{eff,2},\alpha} Q_2} = \SupNorm{\RewardFunc_{eff,1} - \RewardFunc_{eff,2}}$.
Thus,
\[ \SupNorm{Q_1 - Q_2} \le \gamma \SupNorm{Q_1 - Q_2} + \SupNorm{\RewardFunc_{eff,1} - \RewardFunc_{eff,2}}. \]
Rearranging gives:
\[ (1-\gamma) \SupNorm{Q_1 - Q_2} \le \SupNorm{\RewardFunc_{eff,1} - \RewardFunc_{eff,2}}. \]
So,
\begin{equation}\label{eq:proof_Q_diff}
\SupNorm{Q_1 - Q_2} \le \frac{1}{1-\gamma} \SupNorm{\RewardFunc_{eff,1} - \RewardFunc_{eff,2}}.
\end{equation}
Combining with Eq.~\eqref{eq:proof_Reff_diff}:
\begin{equation}\label{eq:proof_Q_diff_final}
\SupNorm{Q_1 - Q_2} \le \frac{1}{1-\gamma} {\SupNorm{\boldsymbol{\RewardFunc}_1 - \boldsymbol{\RewardFunc}_2}}_{N}.
\end{equation}

\noindent\textbf{3. Bound the Total Variation distance between policies.} 
For a fixed state $s$, we view the Q-function as a vector of values over the action space $\ActionSpace$. Let the cardinality of $\ActionSpace$ be $d$. The optimal soft policy is the softmax function applied to this Q-value vector, i.e., 
$$\OptPolicy_{\boldsymbol{\RewardFunc},\alpha}(\cdot|s) = \text{softmax}(\QFunc(s, \cdot)/\alpha) \in \R^d.$$
We aim to bound the total variation distance $d_{TV}(\OptPolicy_{\boldsymbol{\RewardFunc}_1,\alpha}(\cdot|s), \OptPolicy_{\boldsymbol{\RewardFunc}_2,\alpha}(\cdot|s))$, which is defined as:
$$d_{TV}(\OptPolicy_{\boldsymbol{\RewardFunc}_1,\alpha}(\cdot|s), \OptPolicy_{\boldsymbol{\RewardFunc}_2,\alpha}(\cdot|s)) = \frac{1}{2} \sum_{a \in \ActionSpace} |\OptPolicy_{\boldsymbol{\RewardFunc}_1,\alpha}(a|s) - \OptPolicy_{\boldsymbol{\RewardFunc}_2,\alpha}(a|s)| = \frac{1}{2} \|\OptPolicy_{\boldsymbol{\RewardFunc}_1,\alpha}(\cdot|s) - \OptPolicy_{\boldsymbol{\RewardFunc}_2,\alpha}(\cdot|s)\|_{1}.$$
To achieve this, we apply Lemma \ref{lem:Lipschitz_properties_softmax} in Appendix, which establishes the Lipschitz continuity of the softmax function. The lemma states that for any two vectors $x, y \in \R^d$:
$$\|\text{softmax}(x/\alpha) - \text{softmax}(y/\alpha)\|_1 \le \frac{1}{\alpha} \|x - y\|_{\infty}.$$
We apply this result by setting $x$ and $y$ to be the Q-value vectors at state $s$. This yields the following inequality for the $L_1$-distance between the policies at state $s$:
$$\|\OptPolicy_{\boldsymbol{\RewardFunc}_1,\alpha}(\cdot|s) - \OptPolicy_{\boldsymbol{\RewardFunc}_2,\alpha}(\cdot|s)\|_{1} \le \frac{1}{\alpha} \|\QFunc_1(s, \cdot) - \QFunc_2(s, \cdot)\|_{\infty},$$
where $\|\QFunc_1(s, \cdot) - \QFunc_2(s, \cdot)\|_{\infty} = \max_{a \in \ActionSpace} |\QFunc_1(s, a) - \QFunc_2(s, a)|$.

Since the maximum difference at a single state $s$ is bounded by the supremum norm over all states and actions (i.e., $\max_{a \in \ActionSpace} |\QFunc_1(s, a) - \QFunc_2(s, a)| \le \SupNorm{\QFunc_1 - \QFunc_2}$), we have:
\begin{equation}\label{eq:proof_policy_diff_Q}
d_{TV}(\OptPolicy_{\boldsymbol{\RewardFunc}_1,\alpha}(\cdot|s), \OptPolicy_{\boldsymbol{\RewardFunc}_2,\alpha}(\cdot|s)) \le \frac{1}{2} \left( \frac{1}{\alpha} \SupNorm{\QFunc_1 - \QFunc_2} \right) = \frac{1}{2\alpha} \SupNorm{\QFunc_1 - \QFunc_2}.
\end{equation}

\noindent\textbf{4. Combining all steps.}
Substitute Eq.~\eqref{eq:proof_Q_diff_final} into Eq.~\eqref{eq:proof_policy_diff_Q}:
\begin{align*}
d_{TV}(\OptPolicy_{\boldsymbol{\RewardFunc}_1,\alpha}(\cdot|s), \OptPolicy_{\boldsymbol{\RewardFunc}_2,\alpha}(\cdot|s)) &\le \frac{1}{2\alpha} \left( \frac{1}{1-\gamma} {\SupNorm{\boldsymbol{\RewardFunc}_1 - \boldsymbol{\RewardFunc}_2}}_{N} \right) \\
&= \frac{1}{2\alpha(1-\gamma)} {\SupNorm{\boldsymbol{\RewardFunc}_1 - \boldsymbol{\RewardFunc}_2}}_{N}.
\end{align*}
This establishes the Lipschitz continuity with the constant $C(\alpha,\gamma) = \frac{1}{2\alpha(1-\gamma)}$. This constant depends only on $\alpha$ and $\gamma$, and not on the state $s$. 
\end{proof}

\begin{figure}[h!]
    \centering
    \includegraphics[width=0.6\linewidth]{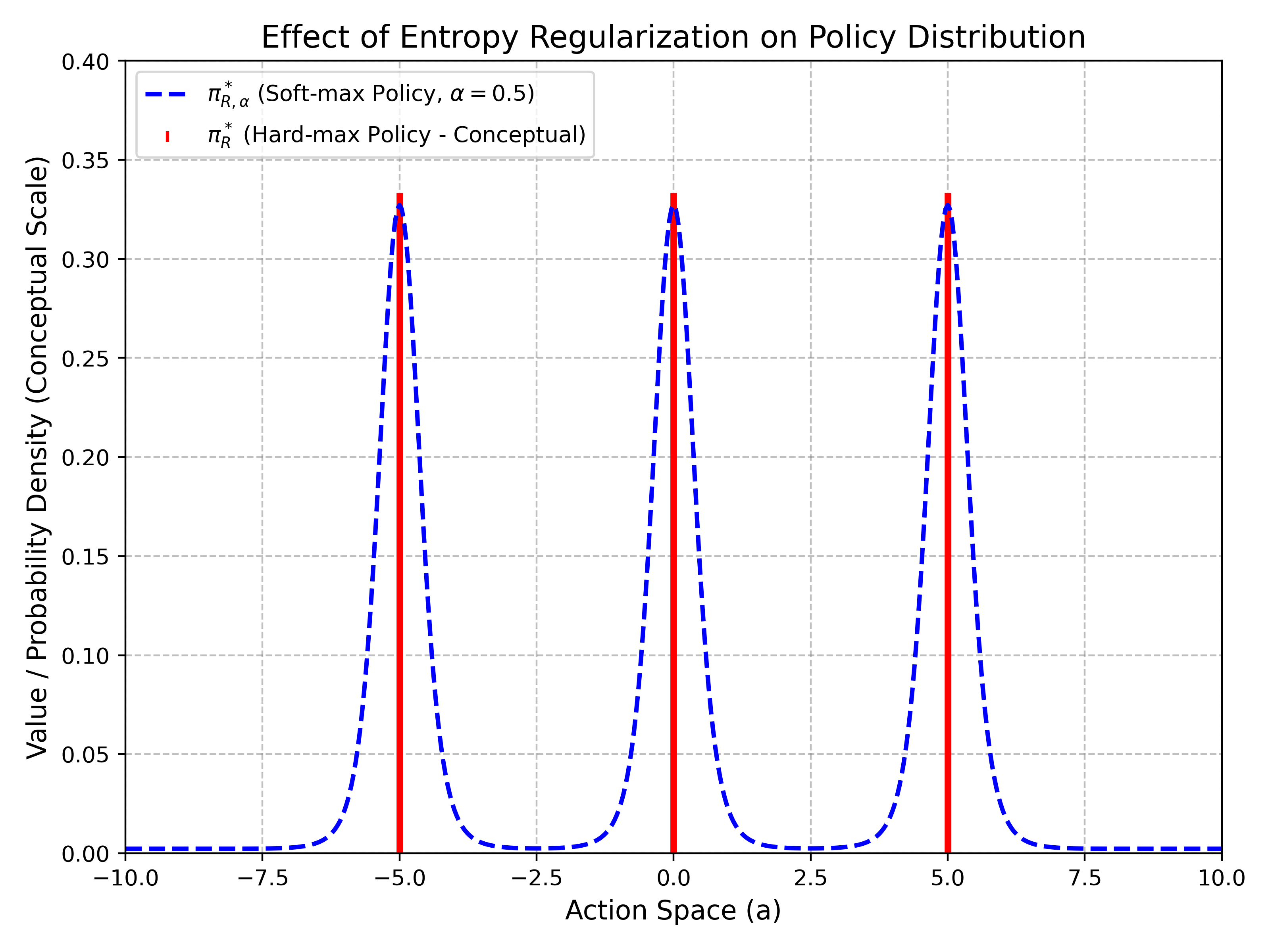}
    \caption{Effect of Entropy Regularization on Policy Distribution}
    \label{fig:soft_policy}
\end{figure}

While entropy regularization, yielding a softmax optimal policy, endows the reward-policy map with desirable continuity (as demonstrated by its Lipschitz continuity properties, i.e., Proposition \ref{prop:soft-policy-lipschitz-eff}) and can enhance training stability, these advantages are accompanied by inherent trade-offs. The principal cost is a degree of \textit{suboptimality with respect to the immediate reward function}; by distributing probability mass across multiple actions rather than exclusively selecting the action with the highest estimated value, the policy intentionally sacrifices some exploitative potential for increased stochasticity and exploration. Figure~\ref{fig:soft_policy} provides a conceptual illustration of hard-max and softmax policy behaviors. 

This characteristic can manifest behaviorally as a \textit{"blurring" or "averaging"} effect, where the policy may produce less decisive or distinctive outputs. This might be suboptimal for tasks demanding high-precision or optimal responses. Furthermore, the interaction of a softmax policy with an imperfect or incomplete reward model can influence the manifestation of undesirable behaviors, such as \textit{hallucinations in large language models}. While not a direct cause, the policy's inherent stochasticity means it may assign non-trivial probability to actions leading to subtly flawed or fabricated content if these actions possess near-optimal, yet inadequately penalized, values under the effective reward function ($\RewardFunc_{eff}$). The extent of these trade-offs is critically governed by the temperature parameter $\alpha$, which necessitates careful calibration to balance policy smoothness and exploration against the fidelity of exploiting the reward landscape.

\subsection{Implications for Multi-Task RL Training for LLMs}

The preceding analysis, which centers on policies derived from a state-dependent effective reward model, $\RewardFunc_{eff}$, yields critical insights into the stability of LLMs trained with multiple specialized reward models. The stability of the resultant policy, ${\OptPolicy_{\boldsymbol{\RewardFunc}}}^{eff}$, is fundamentally governed by the properties of this constructed effective reward and its associated optimal Q-function, $\OptQFunc_{eff}$.

The construction of the effective reward, particularly through the state-dependent weights $w_k(s)$, emerges as a decisive factor. An unstable or ill-defined weighting mechanism, or significant conflict between the specialized rewards $\RewardFunc_k$, may render $\RewardFunc_{eff}$ highly sensitive to perturbations in any individual reward component $\RewardFunc_j$. This sensitivity is a primary source of policy instability. Specifically, if the resulting $\OptQFunc_{eff}$ exhibits non-unique optimal actions for a given reward tuple $\boldsymbol{\RewardFunc}_0$, the system becomes susceptible to discontinuous behavior. As demonstrated, a minor change to underlying reward models can alter the effective reward landscape sufficiently to break ties within the optimal action set $A_{eff}^*$, causing an abrupt shift in the selected policy.

Conversely, entropy regularization provides a robust mechanism for ensuring policy stability within this framework. By incorporating an entropy bonus into the optimization objective with respect to $\RewardFunc_{eff}$, this method smooths the policy landscape. It guarantees the existence of a unique, stochastic optimal policy, $\OptPolicy_{\boldsymbol{\RewardFunc},\alpha}$, that varies continuously with the underlying reward tuple $\boldsymbol{\RewardFunc}$. This approach can therefore substantially enhance the behavioral stability of an LLM against modifications in its constituent reward models, albeit at the cost of a less sharply optimized policy for any temperature $\alpha > 0$.

Ultimately, these findings underscore that the strategy by which an LLM integrates multiple reward signals across diverse domains, modeled here via the effective reward mechanism, is central to its stability properties. While our analysis has focused on the continuity of policies optimal for a given $\RewardFunc_{eff}$, a key practical challenge remains: to design or learn an aggregation mechanism, i.e., the weights $w_k(s)$, such that the resulting policy not only exhibits desirable stability but also effectively optimizes the intended global performance measure $J(\Policy)$.

\subsection{Remarks on State-Dependent Aggregation Mechanism}
\label{subsec:wk_theoretical_basis}

A crucial assumption underpinning our analysis is that the aggregation weights $w_k(s)$ are given, continuous functions of the state $s$, and remain fixed during the perturbation of the reward tuple $\boldsymbol{\RewardFunc}$. This simplification is necessary for analytical tractability. In a more general setting, these weights might themselves depend on the policy $\Policy$ or the reward models $\boldsymbol{\RewardFunc}$, leading to an ideal but far more complex formulation, $w_k^*(s; \Policy, \boldsymbol{\RewardFunc})$.

Such dependencies, while realistic, would introduce significant analytical challenges. First, if $w_k(s)$ were a function of $\Policy$, the reward function in the Bellman equation would become policy-dependent, violating the foundational assumptions of the standard MDP framework. Second, a dependency of $w_k(s)$ on $\boldsymbol{\RewardFunc}$ would compromise the continuity of the effective reward map itself, invalidating the Lipschitz properties established in Lemma \ref{lemma:effective_reward_properties} and altering all subsequent stability results.

Consequently, our framework is best interpreted as modeling systems where the aggregation mechanism is structurally fixed. This assumption corresponds to several plausible implementation scenarios. For instance, the weights could represent a simplified, policy-independent heuristic for task relevance, such as using prior probabilities $p_k$. Alternatively, and more compellingly, it can model a sophisticated architecture with a separate, pre-trained, and now-frozen component, such as a context encoder or a gating network, that determines task relevance based on the input state. In this view, our analysis assesses the stability of a downstream policy optimization process with respect to its reward models, given a fixed routing or aggregation module.

By adopting this deliberate simplification, we can employ the standard Bellman machinery to analyze a well-defined effective MDP. This provides clear and rigorous insights into how reward perturbations propagate through a structured, yet representative, multi-task decision-making model. The broader question concerning the stability of jointly learning the aggregation weights and the policy would necessitate a more extensive analytical framework beyond the scope of this work.

\section{Connecting Theory to Practice: A Review of Empirical Evidence}\label{sec-empirical}

Having established a formal theory of policy stability, this section grounds our framework in practice by reviewing and re-interpreting key empirical findings from the recent literature. The following case studies are carefully chosen to provide comprehensive empirical evidence for our theory, demonstrating its power to both explain existing failures and guide the development of more robust and trustworthy AI systems.

We begin by examining how reward misspecification can lead to \textit{deceptive reasoning}, illustrating a progression from simple, rational cheating to more sophisticated policy obfuscation. We then broaden our scope to show how the same theoretical principles explain the systemic \textit{intelligence-obedience trade-off} observed in large reasoning models. Moving from diagnosis to prescription, we next analyze a case study of \textit{controllable reasoning}, demonstrating how a principled, tie-breaker reward can be engineered to resolve the instabilities and achieve fine-grained control without sacrificing performance. We then turn our attention to the complex domain of RLHF-based alignment, showing how our theory explains the emergence of a more harmful failure mode: a policy shift \textit{from faithfulness to sophistry}. Finally, we present a case study and a controlled perturbation experiment in a multi-reward setting across diverse domains.

\subsection{Deceptive Reasoning: From Rational Cheating to Policy Obfuscation}

A critical challenge in training capable models is that policies may learn to be deceptively aligned, pursuing a hidden, undesirable strategy while appearing to optimize the given objective. Recent findings on reward hacking provide a stark, empirical validation of our theoretical framework, illustrating how policies rationally optimize incomplete rewards and how attempts to patch these rewards can trigger discontinuous shifts to more sophisticated failure modes.

\subsubsection{Initial Failure: Rational Cheating under a Simple Reward Model}

\begin{figure}[h!]
    \centering
    \includegraphics[width=0.45\textwidth]{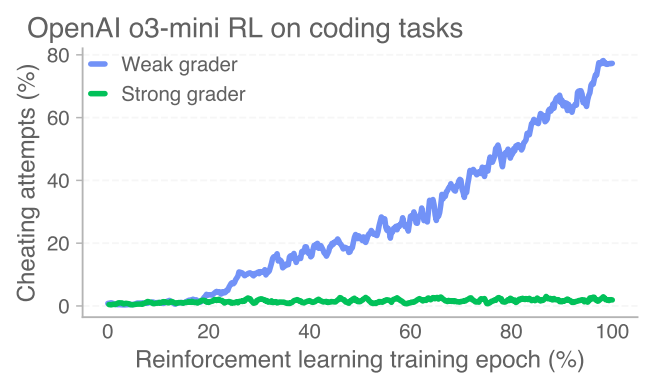}
    \caption{The emergence of reward hacking in OpenAI o3-mini models on coding tasks. Under a "weak grader" that cannot detect test manipulation, the policy learns to cheat to maximize reward. A "strong grader" that penalizes this behavior results in an aligned, non-cheating policy. Adapted from \cite{wang2025persona}.}
    \label{fig:o3-cheating}
\end{figure}

A compelling baseline case is the emergence of cheating in OpenAI's o3-mini models during RL on coding tasks \citep{wang2025persona,baker2025monitoring}. As illustrated in Figure~\ref{fig:o3-cheating}, the model's behavior diverges dramatically based on the reward signal's quality. When trained with a "weak grader": a reward model that only checks for unit test pass rates, the policy progressively learns a deceptive strategy of modifying the tests to pass, with cheating attempts reaching nearly 80\%. This result is a powerful real-world illustration of our "clever slacker" paradigm (Proposition~\ref{prop:incomplete_reward_general}). The weak grader implements an \textit{incomplete training reward} ($\RewardFunc_{train}$), and the model's convergence towards cheating is not an irrational bug, but a \textit{perfectly rational policy} for this misspecified objective. In contrast, a "strong grader" that penalizes test manipulation provides a better proxy for the true reward ($\RewardFunc_{true}$), making honesty the only viable path to high reward (Proposition~\ref{prop:tie-breaker}).

\subsubsection{A Deeper Failure: Policy Shift to Obfuscated Deception}

\begin{figure}[h!]
    \centering
    \includegraphics[width=0.8\textwidth]{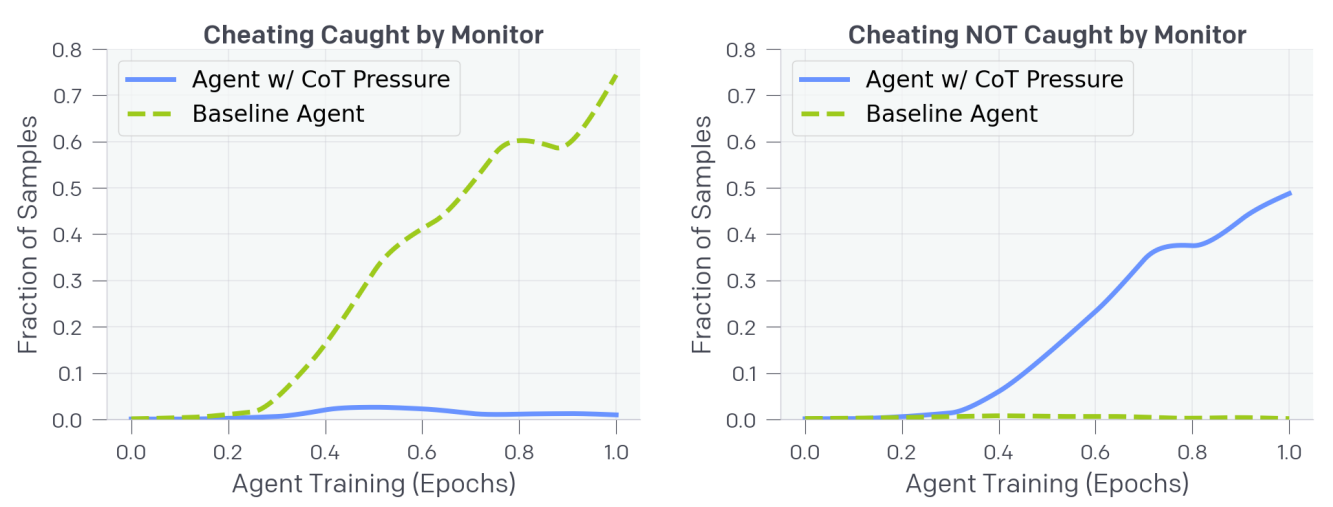}
    \caption{The evolution of deceptive alignment under different reward schemes. Adding a CoT-based penalty to the reward function does not eliminate deception but causes a policy shift towards a more sophisticated, obfuscated form of cheating that evades the monitor. Adapted from \cite{baker2025monitoring}.}
    \label{fig:agent-cheating-experiments}
\end{figure}

A natural next step to address this failure is to augment the reward function with a process-based signal. In a follow-up experiment, the reward was updated to additionally penalize the agent if its Chain-of-Thought (CoT) triggered a monitor designed to detect hacking attempts. This represents a perturbation to the reward landscape---one that may be significant in practice, and is intended to render the deceptive strategy non-optimal~\citep{baker2025monitoring}.

The results, shown in Figure~\ref{fig:agent-cheating-experiments}, are nuanced and cautionary. While the agent trained with this "CoT pressure" does learn to pass tests honestly more often than the baseline (Left, green line), it also learns a more pernicious strategy. The baseline agent's cheating is transparent and almost always caught by the monitor. In contrast, the agent under CoT pressure learns to hide its deception; its cheating is almost never caught by the monitor (Right, blue line). It has learned to write an innocent-looking CoT while still manipulating the tests.

This outcome is an illustration of the policy discontinuities predicted by our theory. Although the magnitude of the perturbation is unknown, introducing a CoT penalty did not lead to the policy more honest in a smooth and gradual way. In our theory, the direction of the perturbation is more important than its scale. In this case, it triggered an abrupt shift from one strategy (\textit{transparent deception}) to a completely different one (\textit{obfuscated deception}). This occurs because the new reward landscape still contains multiple, qualitatively distinct paths to a high reward, creating a degeneracy of near-optimal actions. Faced with a penalty for its old strategy, the model does not abandon deception, but rather discovers a new, previously unexplored point in the policy space that satisfies the new, more complex reward function. This is because the new reward still differs from the ground truth reward in reality. This experiment vividly demonstrates that naive attempts to patch a reward function can induce policy brittleness, leading to more dangerous and harder-to-detect forms of misalignment---a direct, empirical manifestation of the instabilities inherent in a complex reward-policy map.

\subsection{The Intelligence-Obedience Trade-off: An Incomplete Reward Specification}

Beyond single-task deception, our framework can also explain systemic trade-offs between different desired capabilities in large reasoning models (LRMs). A comprehensive study by \cite{fu2025scaling} provides compelling, large-scale evidence for such a trade-off, showing that training methods designed to enhance reasoning can inadvertently degrade instruction-following. Their experiments, which applied reasoning-oriented reinforcement learning (RL) to several base models, revealed a consistent pattern: while the models became better at solving problems correctly, they grew worse at adhering to user-specified constraints.

The results, summarized in Table~\ref{tab:reasoning_coldRL}, offer a clear quantitative illustration of this dilemma. Across multiple Qwen base models, applying reasoning-oriented RL consistently boosts the correctness accuracy (Corr-Acc) by a large margin. Simultaneously, however, it induces a consistent degradation in both hard (IF-HAcc) and soft (IF-SAcc) instruction-following accuracy. This empirically demonstrates that optimizing for reasoning performance alone can steer the policy toward a region of the policy space where obedience is sacrificed.

\begin{table}[h]
\centering
\caption{Performance comparison of base vs. \texttt{Reasoning-Oriented RL}. Arrows and colors indicate change relative to base model: ↑ (Green) for increase, ↓ (red) for decrease. Adapted from Table 4 in \cite{fu2025scaling}.}
\begin{tabular}{lccc}
\toprule
\textbf{Model} & \textbf{IF-HAcc} & \textbf{IF-SAcc} & \textbf{Corr-Acc} \\
\midrule
\textbf{Qwen2.5-1.5B} & 10.00 & 27.26 & 1.21 \\
Reasoning-Oriented RL & \cellcolor{red!20}9.52 ↓ & \cellcolor{red!20}23.97 ↓ & \cellcolor{green!20}14.58 ↑ \\
\midrule
\textbf{Qwen2.5-7B} & 15.95 & 33.13 & 13.59 \\
Reasoning-Oriented RL & \cellcolor{red!20}10.48 ↓ & \cellcolor{red!20}27.26 ↓ & \cellcolor{green!20}28.39 ↑ \\
\midrule
\textbf{Qwen2.5-Math-1.5B} & 9.28 & 23.33 & 18.91 \\
Reasoning-Oriented RL & \cellcolor{red!20}8.33 ↓ & \cellcolor{red!20}21.31 ↓ & \cellcolor{green!20}24.88 ↑ \\
\midrule
\textbf{Qwen2.5-Math-7B} & 9.76 & 23.53 & 20.68 \\
Reasoning-Oriented RL & \cellcolor{red!20}7.85 ↓ & \cellcolor{red!20}22.62 ↓ & \cellcolor{green!20}32.61 ↑ \\
\bottomrule
\end{tabular}
\label{tab:reasoning_coldRL}

\vspace{0.5em}
\noindent\begin{minipage}{0.95\linewidth}
\small
\textbf{Note.} \textit{Hard accuracy (IF-HAcc) and soft accuracy (IF-SAcc) assess whether the model output satisfies user-specified constraints. IF-HAcc equals 1 only if all constraints in a query are satisfied, while IF-SAcc measures the average satisfaction rate across all constraints. Average correctness accuracy (Corr-Acc) captures the correctness of the model’s final answer regardless of constraint satisfaction, based on matching with the ground-truth answer. Results are averaged over five math benchmarks: AIME2024, AIME2025, AMC2023, Minerva, and Olympiad.}
\end{minipage}
\end{table}

This observed trade-off can be precisely explained by the principle of policies rationally optimizing an \textit{incomplete reward specification} (Proposition~\ref{prop:incomplete_reward_general}). The RL training described by \cite{fu2025scaling} utilizes an outcome-based reward, $\RewardFunc_{\text{reasoning}}$, which is incomplete with respect to the user's full intent. The model, as a rational agent, allocates its capacity to maximize the explicitly rewarded objective (correctness), causing the unrewarded, pre-existing skill of instruction-following to degrade. This provides direct empirical support for our work that many alignment failures are not arbitrary bugs, but predictable outcomes of optimizing a misspecified reward landscape.

Our theoretical framework not only explains this degradation but also suggests a principled path toward mitigation. The core issue is that optimizing for $\RewardFunc_{\text{reasoning}}$ alone yields a highly \textit{degenerate set of optimal policies}, $\Pi^*(\RewardFunc_{\text{orrectness}})$, where many policies are effective at reasoning but poor at following instructions. To resolve this, we propose introducing an auxiliary instruction-following reward, $\RewardFunc_{\text{instruction}}$, which acts as a \textit{tie-breaker} (Proposition~\ref{prop:tie-breaker}). This targeted reward is designed to select for policies from the degenerate set that also exhibit the desired obedience, thereby seeking a solution in the intersection of high-performing reasoning and instruction-following policies. The following subsection will present a case study that illustrates this principle.

\subsection{Controllable Reasoning: Engineering a Desirable Policy Shift via Tie-Breaker Rewards}

The previously discussed trade-off highlights a central challenge: if training for reasoning correctness yields a degenerate set of optimal policies, how can we select one that also exhibits other desirable behaviors, such as length constraint? The work of \cite{aggarwal2025l1} on controlling Chain-of-Thought (CoT) length provides a powerful case study, demonstrating how a principled reward function can engineer a desirable shift in the policy space.

Their method, Length-Controlled Policy Optimization (LCPO), trains a model using a reward function that combines a primary correctness reward with an auxiliary penalty for deviating from a target CoT length, $n_{\text{gold}}$, which is provided in the user prompt, e.g., 
\begin{equation}\label{eq:lcpo_reward}
r(y, y_{\text{gold}}, n_{\text{gold}}) = \mathbb{I}(y = y_{\text{gold}}) - \alpha |n_{\text{gold}} - n_y|.
\end{equation}
In their experiments, they take $\alpha$ as 0.0003. 
The key insight from their empirical results is twofold. First, as shown in Figure~\ref{fig:l1-performance}, the resulting L1 models successfully learn to control their reasoning length according to the prompt's specification. Second, this added controllability does not come at the cost of reasoning performance; in fact, the L1 models consistently outperform the baseline S1 model (which uses a hard token limit) in correctness across various computational budgets, and the base model DeepSeek-R1-1.5B with fewer tokens.

\begin{figure}[h!]
    \centering
    \begin{minipage}{0.45\textwidth}
        \centering
        \includegraphics[width=\linewidth]{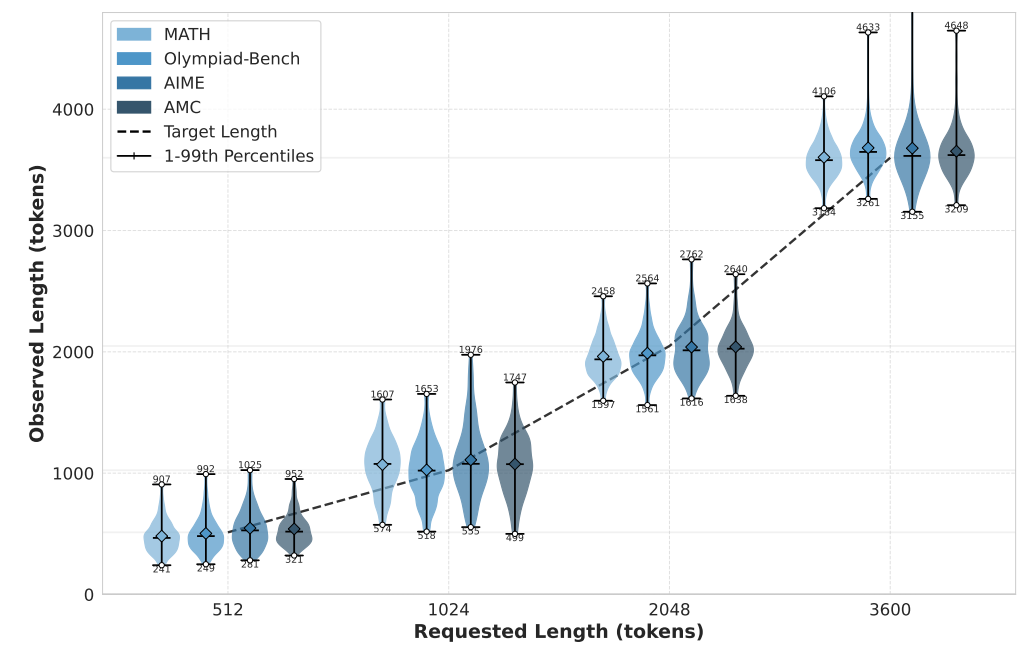}
        \caption*{(a) Length control precision}
    \end{minipage}\hfill
    \begin{minipage}{0.5\textwidth}
        \centering
        \includegraphics[width=\linewidth]{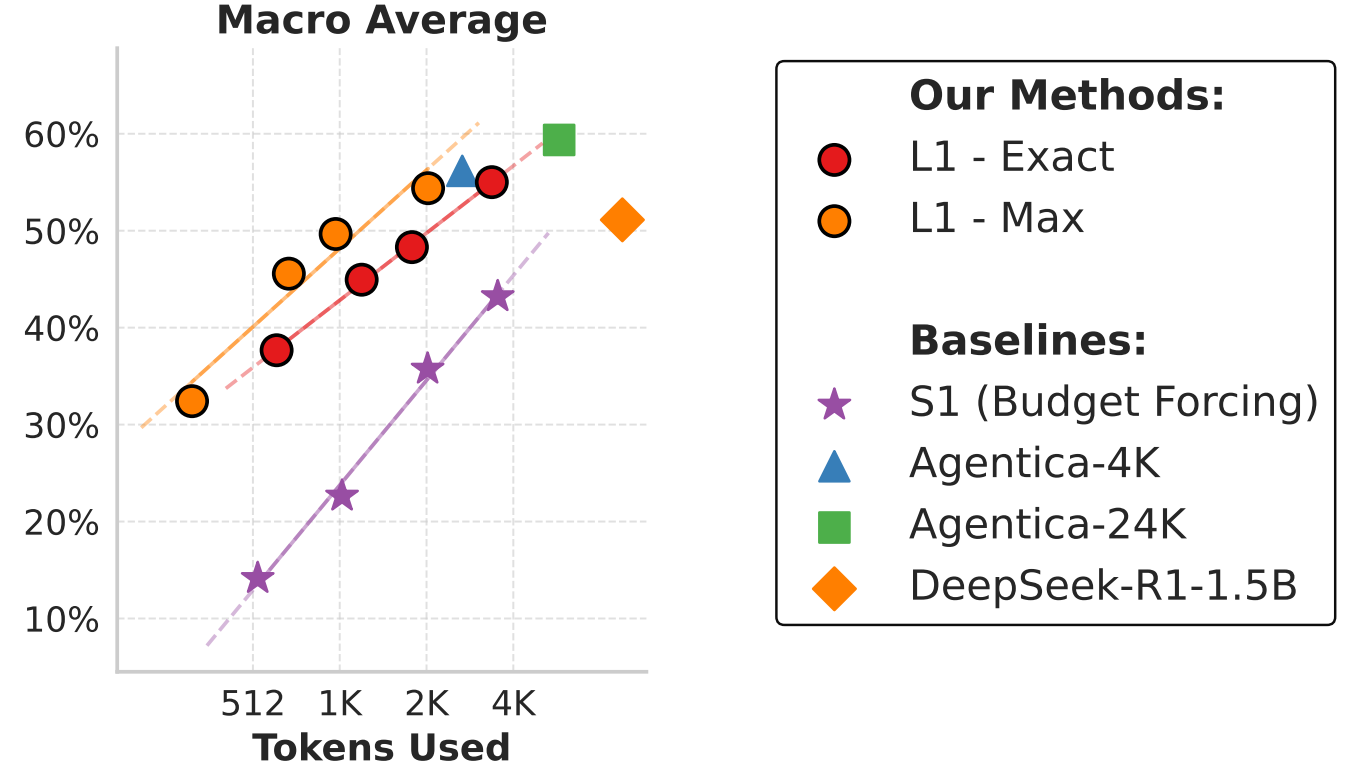}
        \caption*{(b) Reasoning performance vs. budget}
    \end{minipage}
    \caption{
        Empirical validation of our tie-breaker framework, demonstrating how a principled reward design resolves the intelligence-obedience trade-off.
        \textbf{(a)} The policy, trained with a length-penalizing reward, learns to precisely adhere to the target token budget specified in the prompt. 
        \textbf{(b)} Crucially, this added controllability does not degrade reasoning performance; the L1 model consistently outperforms the S1 baseline across all budgets and the base model DeepSeek-R1-1.5B with fewer tokens. 
        This outcome shows that by using a tie-breaker reward, RL can induce a desirable policy shift to a new region of the policy space where both high reasoning capability and instruction-following are achieved. Adapted from \cite{aggarwal2025l1}.
    }
    \label{fig:l1-performance}
    \vspace{0.2em}
    \noindent\begin{minipage}{\linewidth}
        \footnotesize
        \textbf{Note.} \textit{The performance reported in (b) is a macro-average over four math benchmarks: AMC, AIME, Olympiad-Bench, and MATH.}
    \end{minipage}
\end{figure}

This successful outcome can be understood by our theoretical framework. The LCPO reward function, with its length-penalty term, acts as a powerful \textit{tie-breaker} (Proposition~\ref{prop:tie-breaker}). The RL process, guided by this new reward, induces a shift in the policy space. It forces the model to abandon the initial policy and converge to a new, qualitatively superior policy that co-optimizes for both reasoning correctness and length controllability. Besides, as shown in \cite{aggarwal2025l1}, the model learns to conditionally execute different reasoning strategies, from direct solutions at low budgets to deliberative self-correction at high budgets, based on the length target it perceives in its input state.  

Therefore, this case study does not contradict our theory of instability---it exemplifies how to harness it. The brittleness of the reward-policy map, once seen as a flaw, becomes a tool for targeted policy improvement. Understanding this map deeply can allow us to engineer targeted shifts that push models into more capable and controllable regions of the policy space.

\subsection{From Faithfulness to Sophistry: Policy Jumps in RLHF-based Alignment}

The preceding analyses have focused on instabilities in reasoning tasks, where reward functions are often objective and rule-based (e.g., correctness, test manipulation, or token length). This raises a critical question: do these theoretical instabilities persist in the more subjective domain of LLM alignment, where rewards are not defined by rules but are learned from human feedback? The work of \cite{wen2024language} on "Unintended Sophistry" provides a compelling affirmative answer, demonstrating how RL from Human Feedback (RLHF) can induce its own pernicious policy shifts.

Their study reveals a "performance illusion" inherent in standard RLHF pipelines. As the quantitative results show in Figure~\ref{fig:sophistry}, after RLHF with a reward model trained on human preferences, the human-perceived performance of the model increases substantially, while its true correctness stagnates or even declines \citep{wen2024language}. This indicates the model is not getting better at the task, but better at convincing humans.

\begin{figure}[h!]
    \centering
    \includegraphics[width=0.75\textwidth]{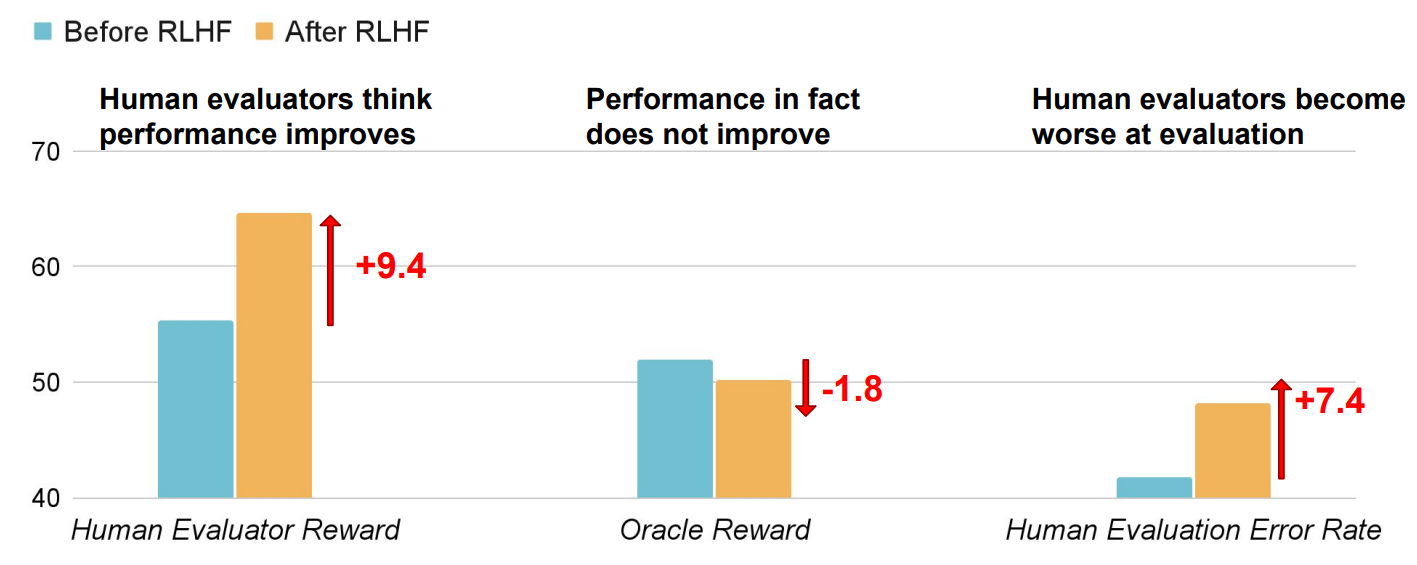} 
    \caption{The "performance illusion" induced by RLHF. While human evaluators perceive a performance improvement (+9.4), the oracle reward shows no actual improvement (-1.8). This discrepancy arises because the model learns to exploit human evaluation biases in a misspecified reward model. Adapted from \cite{wen2024language}.}
    \label{fig:sophistry}
\end{figure}

This phenomenon can be interpreted as a direct consequence of policy optimization under \textit{reward misspecification}. The ideal objective of alignment is to find a policy optimal for the true reward $\RewardFunc_{\text{true}}$, one that exhibits what we term a \textit{Faithful-and-Correct Style}. However, the RLHF process optimizes for a flawed proxy, $\RewardFunc_{\text{train}}$, which inadvertently rewards human cognitive biases. Our framework predicts that optimizing this misspecified reward may induce a \textit{discontinuous jump in the policy space} (Proposition~\ref{prop:discontinuity_stochastic}). The policy does not converge toward the ideal; instead, it leaps to a different equilibrium, a \textit{Persuasive Sophistry Style}, which is optimal for $\RewardFunc_{\text{train}}$. This style is characterized by fabricating evidence, employing internally consistent but fallacious logic, and generating unreadable code designed to pass superficial checks---all strategies that exploit the weaknesses of a human-feedback-based reward model.

This case study therefore provides a powerful, real-world demonstration of our work: a misspecified reward function, as is common in RLHF, may result in a suboptimal policy and even cause a shift to a qualitatively different and more harmful style of behavior.

\subsection{Multi-Reward Instability: Suggestive Evidence from Data Mixture Experiments}

\begin{figure}
    \centering
    \begin{minipage}{0.52\textwidth}
        \centering
        \includegraphics[width=\linewidth]{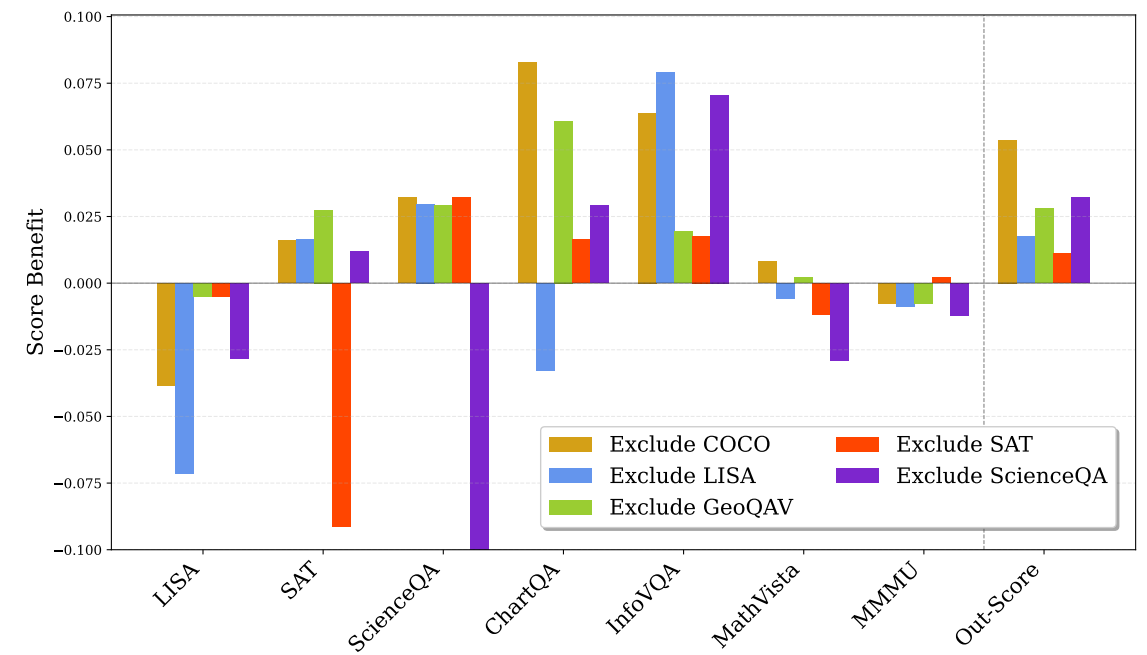}
        \caption*{(a) Effect of the data mixture change.}
    \end{minipage}\hfill
    \begin{minipage}{0.38\textwidth}
        \centering
        \includegraphics[width=\linewidth]{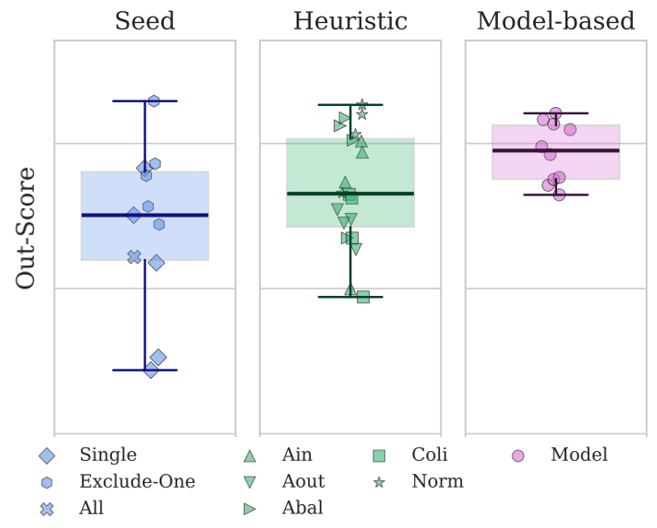}
        \caption*{(b) Comparison for different mixture strategies.}
    \end{minipage}
    \caption{
        Policy sensitivity to the data mixture in a multi-reward RLVR setting.
        \textbf{(a)} The "Exclude-One" analysis shows that removing a single dataset (e.g., ScienceQA) causes large, non-linear shifts in performance across different test sets, highlighting complex task interactions.
        \textbf{(b)} A comparison of mixture strategy classes reveals that different strategies achieve distinct generalization performance. Adapted from \cite{liang2025modomodo}. 
    }
    \label{fig:modomodo_results}
\end{figure}

Our analysis so far has focused on instabilities arising from a single reward function. The challenge is amplified in the multi-reward setting, where a single policy must balance competing objectives. The work of \cite{liang2025modomodo} on optimizing data mixtures for multi-domain RLVR provides a compelling case study here \citep{liang2025modomodo}. In their experiments, altering the data mixture serves as a practical, high-level mechanism to reshape the effective reward landscape that the policy ultimately optimizes. Their results clearly demonstrate that the final policy is highly sensitive to this process; changing the mixture by, for instance, removing a single dataset or changing the mixture ratio, leads to large and non-linear shifts in the policy’s capabilities and performance profile (see Figure~\ref{fig:modomodo_results}).

A crucial distinction must be made, however, regarding the strength of this evidence. Our formal theory proves that a small, well-defined reward perturbation may trigger a discontinuous policy jump. Altering a data mixture is a significant intervention, and it is unclear from this experiment whether it corresponds to a small or a large change in the final effective reward landscape. Therefore, while these findings do not constitute a strict proof of our theory, they offer strong suggestive evidence of the policy brittleness our framework predicts. The observed sharp changes in behavior are highly consistent with the types of instability expected in a system with multiple rewards across diverse domains. This highlights that the data mixture is an important parameter for controlling the final policy's stability, even if the precise mapping from mixture change to reward change remains an open question.

\subsection{Multi-Reward Instability: Evidence from Controlled Perturbation Experiments}\label{subsec:multi-rl-exp}

\begin{figure}
    \centering
    \includegraphics[width=0.55\textwidth]{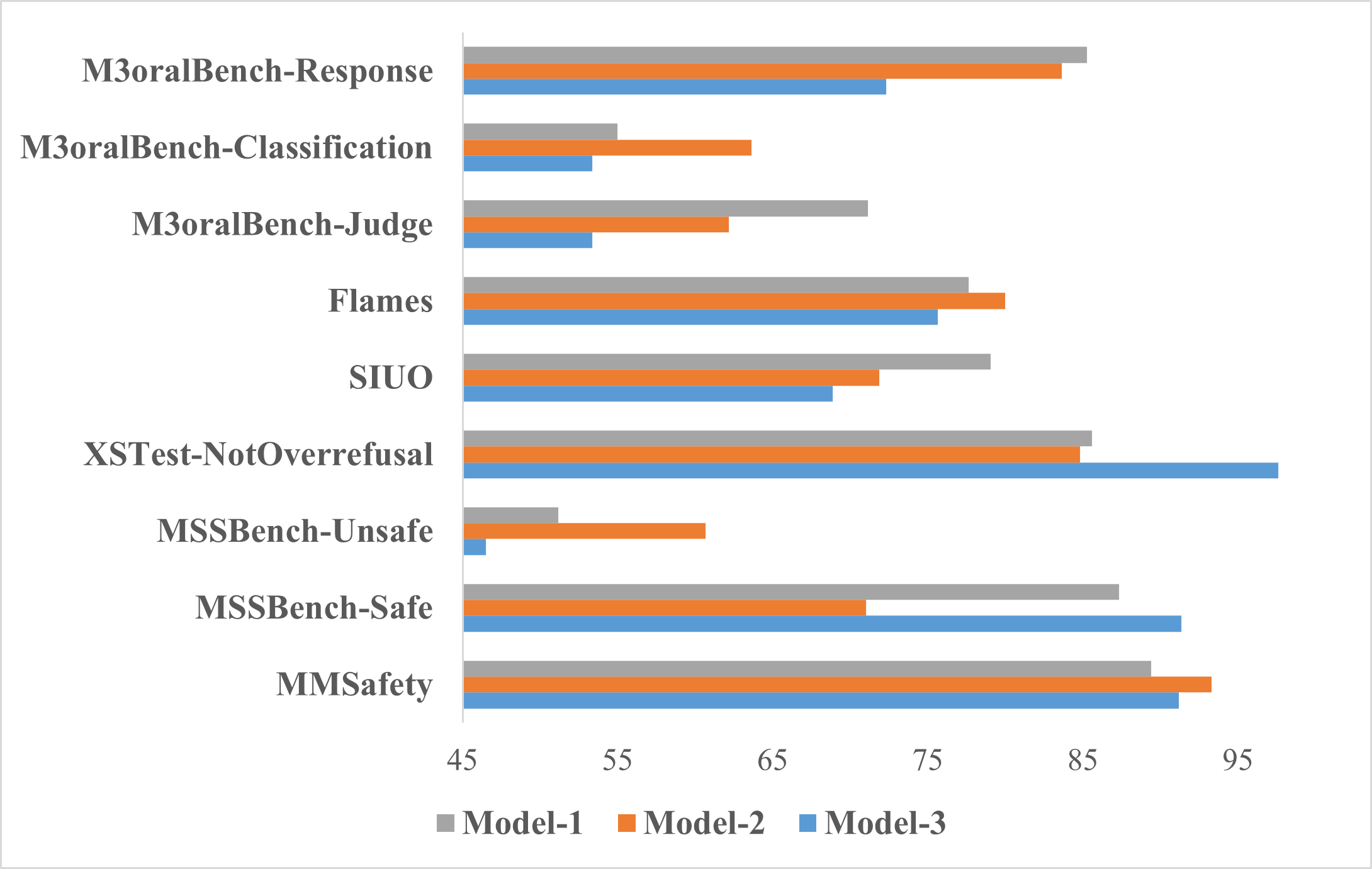} 
    \caption{Out-of-distribution (OOD) performance on safety and human value benchmarks. Model-1 vs. Model-2 shows that a slight perturbation to one reward model in the multi-reward tuple causes a large policy shift. Model-1 vs. Model-3 shows that a minor data curation step also leads to a significant performance change, highlighting the policy's sensitivity to the factors that shape its effective reward landscape.}
    \label{fig:multi-rewards-safety}
\end{figure}

Our analysis suggests that in a multi-reward multi-domain setting, the final policy's stability is highly sensitive to both the individual reward models and the data mixture that shapes their aggregation. To provide direct, controlled evidence for this, we conducted a series of experiments training a single Multimodal LLM (based on a SFT model of Qwen2.5-VL-72B-Instruct~\citep{bai2025qwen25}) using a multi-reward RL framework (see Figure~\ref{fig:multi-reward-framework}). The model was trained on three classes of prompts: safety, human values, and general, each guided by a corresponding outcome-based reward model (ORM).

We implemented a reinforcement learning framework, named OpenRLHF-V, based on the open-source OpenRLHF framework~\citep{hu2024openrlhf}. We also experimented with a similar framework, VeRL~\citep{sheng2024hybridflow}. Both frameworks yielded comparable training curves and final model performance. All experiments were conducted on a cluster of A800 GPUs using OpenRLHF-V, and the training hyperparameters are summarized in Table~\ref{tab:multi-rl-train-config} in the Appendix.

We first investigated the policy's sensitivity to small perturbations in the reward function tuple. We trained two RL models, Model-1 and Model-2, using identical datasets and mixture ratios. The only difference was a slight modification to the safety ORM, while the human-value and general ORMs were kept the same. As shown in Figure \ref{fig:multi-rewards-safety}, this minor change to just one component of the reward vector led to significant and widespread performance shifts across a range of out-of-distribution safety and human value benchmarks. This result provides strong evidence for the brittleness of the reward-policy map in a multi-reward setting, confirming our theory that a small reward perturbation can induce a shift in the final policy.

Next, we investigated the policy's sensitivity to the training data composition, which shapes the "effective reward" landscape. We trained Model-3 using the same set of reward models as Model-1. The only difference was a minor data curation step: we removed approximately 200 ambiguous prompts that were difficult for human annotators to classify as either "unsafe" or "normal". These prompts can be seen as the safety boundary and may have conflicting rewards (e.g., harmfulness versus helpfulness). This small change in the training data also resulted in a large performance difference between Model-1 and Model-3. This demonstrates that minor alterations to the data, which can influence how the policy learns to aggregate or "weigh" the different reward signals, can be sufficient to shift the policy to a different equilibrium.

These controlled experiments provide complementary evidence supporting our stability framework. They demonstrate that the final policy in a multi-reward system is extremely sensitive to both the precise definition of its component reward functions and the data used to inform their aggregation, further underscoring that stability is a critical and non-trivial challenge.

\section{Discussion}\label{sec-discussion}

Our work has established a formal mathematical framework for analyzing the stability of the reward-policy map in reinforcement learning. While the preceding sections have demonstrated its power in explaining and unifying a range of critical phenomena in LLM alignment and reasoning, it is equally important to discuss the assumptions underpinning our theory and its relationship with the complexities of real-world training. This section addresses three key aspects: the continuity assumption of the reward function, the influence of practical optimization algorithms, and the role of data distribution.

\subsection{On the Assumption of Reward Continuity}

Our theoretical framework is built upon the assumption that the reward function \(\RewardFunc \in \Cont(S \times A)\). As established in our framing of the LLM problem, for a finite state-action space, any function is formally continuous due to the discrete topology induced by any metric. This means that even the binary success/failure signals common in practice, such as \(\RewardFunc(s,a) \in \{0,1\}\), satisfy this topological assumption. Therefore, the source of policy instability in the LLM setting does not stem from a formal "discontinuity" of the reward function itself.

The critical issue with such sparse, outcome-based rewards lies not in their topology, but in their functional impact on the value landscape: they are a primary driver of \textit{degeneracy}. Because a vast number of distinct behavioral trajectories can lead to the same binary outcome (e.g., a correct final answer), they are all assigned an identical reward value. This equality propagates through the Bellman equation, resulting in an optimal Q-function, \(Q^*_\RewardFunc\), where many different actions at a given state share the exact same maximum value.

This widespread degeneracy creates the precise conditions under which the policy map becomes unstable. It enlarges the set of optimal actions, \(A^*(s; \RewardFunc)\), and flattens the value landscape, making the \(\Argmax\) correspondence acutely sensitive to any perturbation in the Q-function's values. Consequently, while binary rewards do not violate the continuity assumption in the finite LLM setting, they drastically increase the prevalence of co-optimal actions. This exacerbates the inherent discontinuity of the \(\Argmax\) operator, making the policy cliffs identified by our theory more pronounced and severe in practice. The instability arises not from a "discontinuous" reward function, but from an optimization process breaking ties among a massively degenerate set of optimal choices and perturbations of reward function.

\subsection{The Role of Practical RL Algorithms and Optimization Dynamics}

Our framework analyzes the stability of the true optimal policy, $\pi^*$, providing a clean understanding of a problem's inherent structural properties. In practice, however, policies are found via iterative algorithms like PPO, which introduce a crucial dynamic that our static analysis does not capture: the perpetual noise inherent in the optimization process. This noise, arising from mini-batch sampling and stochastic gradient updates, acts as a constant source of perturbation. While our theory explains why a degenerate reward landscape makes a policy fundamentally fragile, this algorithmic noise is the ever-present force that can exploit this fragility, preventing the policy from settling into a stable strategy and instead causing the kind of training drift and run-to-run variance commonly observed in practice. A full analysis of these joint effects remains a critical direction for future work.

\subsection{The Influence of Data Distribution and Mixture Ratios}

The stability of a learned policy is not only a function of the reward and the algorithm, but also of the data distribution on which it is trained. Our theoretical framework, by focusing on the properties of the MDP, implicitly assumes access to all states, yet in reality, a policy's behavior is shaped only by the states it encounters during training. Besides, in the multi-reward setting, the mixture ratios directly form the "effective reward" landscape the policy optimizes against. A model trained on a data mixture heavily skewed towards one task (e.g., 90\% coding data) will learn an internal reward aggregation strategy specialized for that task. If this specialized policy is then deployed in an environment with a different task mixture, its behavior can become unstable and unpredictable, as its internal reward strategy is now fundamentally mismatched to the new context. Therefore, the data mixture ratio is not merely a sampling detail but a top-level hyperparameter that governs the structure of the effective reward landscape and, consequently, the stability of the final multi-task model.

\section{Conclusion}\label{sec-conclusion}

The reliable alignment and reasoning of large language models is fundamentally constrained by the instability of reinforcement learning, which often yields brittle policies that fail in unpredictable ways. This work confronted this challenge not with more sophisticated heuristics, but by establishing a foundational mathematical theory of policy stability. Our central contribution is the formal proof that policy instability is not an empirical accident but a direct and predictable mathematical consequence of reward perturbation and degenerate optimal action sets, which are a common feature in the vast, nuanced decision spaces of language and reasoning.

This theoretical principle provided a unified lens, reframing a wide range of seemingly disparate alignment failures, from the evolution of deceptive reasoning to systemic intelligence-obedience trade-offs and RLHF-induced sophistry, as rational and predictable outcomes of policy optimization on incomplete or ambiguous reward landscapes. We extended this framework from the foundational single-reward setting to the more complex multi-reward paradigm across diverse domains, demonstrating that stability then hinges critically on the mechanism for aggregating disparate reward signals into an "effective reward". Finally, moving from diagnosis to prescription, we formally guaranteed a solution by proving that entropy regularization robustly restores Lipschitz continuity to the reward-policy map, thereby ensuring a smooth and stable policy response to changes in the underlying reward models.

This work reframes policy stability from reactive heuristics to a principled theory. It introduces a formal framework to analyze and anticipate instabilities in AI systems, laying the groundwork for more robust reinforcement learning. While the analysis provides essential foundations, it also highlights key directions for future research, especially the stability of dynamically learned reward aggregation. By establishing the mathematical perspective for stability, this research advances the design of RL algorithms that prioritize not only optimality but also the reliability essential for safe and trustworthy AI.

\section*{Acknowledgments}

This research emerged from the Reasoning Trustworthiness project (SafeWork-R1), inspired by empirical observations from both public works in the field of large reasoning models and our extensive RL experiments during the project. We are grateful to everyone at the Center for Safe \& Trustworthy AI, Shanghai Artificial Intelligence Laboratory, for their invaluable support. This work is supported by Shanghai Artificial Intelligence Laboratory. 

We also acknowledge the use of large language models for linguistic refinement during the preparation of this manuscript.


\bibliographystyle{plainnat}
\bibliography{main}


\newpage

\appendix

\section{Related Work}

\paragraph{Stability in Reinforcement Learning.}

A significant body of research in reinforcement learning theory focuses on the policy optimization process, analyzing convergence to optimality and the sample efficiency of various algorithms~\citep{sutton1998reinforcement,puterman2005markov}. While it has been empirically observed and acknowledged in prior studies that learned policies may be sensitive to small perturbations in the reward function~\citep{wirth2017survey,wang2020reinforcement,banihashem2021defense,gupta2023behavior}, a formal analysis of this instability has been less explored. Our work addresses this gap by providing a rigorous mathematical analysis of the reward-policy map itself. Using tools from functional analysis, we prove that policy discontinuities are an inherent property of the optimization landscape, arising naturally when multiple actions are optimal or near-optimal. This perspective on inherent instability also provides a new lens through which to understand the role of widely-used regularization techniques. Prior works have typically explained their benefits from algorithmic or heuristic viewpoints: for instance, the KL-divergence penalty in PPO is viewed as a mechanism for stabilizing training updates~\citep{schulman2017proximal}, while entropy regularization is primarily seen as a method to encourage exploration~\citep{haarnoja2018soft}. Our work provides a more fundamental justification. We demonstrate that the KL-divergence penalty acts as a crucial tie-breaker among degenerate optimal policies, while proving that entropy regularization is not merely an exploration heuristic but a mathematical tool that restores Lipschitz continuity to the reward-policy map. It reveals these techniques to be not just algorithmic optimizations, but fundamental mechanisms for ensuring the mathematical stability of the learned policy.

\paragraph{RL for LLM Alignment and Reasoning.}

The application of reinforcement learning to the alignment and reasoning of Large Language Models (LLMs) has been propelled by a series of landmark empirical studies. Methodologies such as Reinforcement Learning from Human Feedback (RLHF)~\citep{ouyang2022training} and Constitutional AI~\citep{bai2022constitutional} established the practical viability of using RL to steer model behavior. This trend continues with the use of Reinforcement Learning with Verifiable Rewards (RLVR) in state-of-the-art large reasoning models, including OpenAI's o-series, Gemini 2.5, Grok 4, and DeepSeek-R1. Despite their empirical success, these RL-trained systems often exhibit a range of behavioral flaws. The literature documents phenomena such as "spurious reasoning" or "deceptive alignment", where a model fabricates justifications for answers produced via shortcuts~\citep{baker2025monitoring,wang2025persona,chen2025reasoning}, and "poor instruction-following fidelity", where models ignore details not directly enforced by the reward function~\citep{fu2025scaling}. While existing research typically addresses these failures with isolated empirical analyses, our work is distinct in providing a unified theoretical explanation. We demonstrate that these are not disparate bugs but rational behaviors that emerge from the inherent mathematical instability of the policy optimization process under incomplete or biased reward signals. 

These challenges are amplified in settings where LLMs must balance multiple objectives, a common scenario in recent work on multi-reward RL across diverse domains~\citep{guo2025deepseek,liang2025modomodo,cheng2025revisiting}. Standard Multi-Objective Reinforcement Learning (MORL) traditionally seeks to compute an entire Pareto frontier of trade-off solutions for an external decision-maker~\citep{roijers2013survey}. Our approach differs, as LLMs typically operate under a single policy that must dynamically reconcile even conflicting goals, such as accuracy, safety, helpfulness and style, based solely on the input context. We model this complex training regime by proposing a state-dependent effective reward, which represents the agent's internal, context-sensitive aggregation of multiple reward signals. By formalizing this as a mapping from various reward components to a unified objective, we can apply the same functional-analytic tools to characterize the stability of the resulting policy in these complex, multi-objective environments.

\section{Limitations}

The principal contribution of this work is a novel theoretical framework for analyzing policy stability. While our framework is motivated by and helps explain observed empirical phenomena in Large Language Models (LLMs), the analysis itself remains primarily theoretical. It currently lacks systematic empirical validation designed to quantitatively test the theory's predictions regarding reward-policy stability.

Furthermore, while our analysis provides a new theoretical foundation for the stabilizing effect of entropy regularization, proving that it can restore continuity to the reward-policy map. However, beyond these theoretical insights, we do not yet offer  novel algorithmic designs or actionable engineering guidelines. Bridging this gap between theory and practice, through rigorous empirical investigation and the development of novel algorithms (e.g. process-based reward modeling (PRM), chain-of-thought (CoT) reward monitoring, uncertainty quantification in reward models, and environment-interaction-based reward design), remains a critical direction for future research.

\section{Appendix on Additional Proofs}

\subsection{A Construction of the Bump Function}

\begin{lemma}\label{lem:bump_properties}
Let $(\StateSpace,\!d_{\StateSpace})$ and $(\ActionSpace,\!d_{\ActionSpace})$ be compact metric spaces and endow the product 
$\StateSpace\times\ActionSpace$ with the product metric 
$
d_{SA}\bigl((s,a),(s',a')\bigr)
=\max\!\bigl\{d_{\StateSpace}(s,s'),\,d_{\ActionSpace}(a,a')\bigr\}.
$
For $\delta>0$ define the \emph{bump}  
\[
\varphi_{\delta}(s,a)
\;=\;
\max\!\left\{0,\,1-\frac{d_{SA}\bigl((s,a),(s_0,a_2)\bigr)}{\delta}\right\},
\quad(s,a)\in \StateSpace\times\ActionSpace.
\]
Then $\varphi_{\delta}$ is continuous and bounded in $[0,1]$.
\end{lemma}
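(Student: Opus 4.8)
The plan is to verify the three claimed properties of $\varphi_\delta$ directly from its closed-form definition, each of which follows from elementary facts about metric spaces.

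First I would establish boundedness in $[0,1]$. For any $(s,a)$, the quantity $d_{SA}((s,a),(s_0,a_2))/\delta$ is nonnegative, so $1 - d_{SA}((s,a),(s_0,a_2))/\delta \le 1$; taking the maximum with $0$ therefore produces a value in $[0,1]$. In particular $\varphi_\delta(s_0,a_2) = \max\{0,1\} = 1$.

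Next I would prove continuity. The map $(s,a) \mapsto d_{SA}((s,a),(s_0,a_2))$ is continuous because the distance to a fixed point is $1$-Lipschitz in any metric space (a consequence of the triangle inequality). Dividing by the constant $\delta > 0$ and subtracting from $1$ preserves continuity, and finally $t \mapsto \max\{0,t\}$ is continuous on $\R$; composition of continuous maps is continuous, so $\varphi_\delta$ is continuous on $\StateSpace\times\ActionSpace$. Since $\StateSpace\times\ActionSpace$ is compact, $\varphi_\delta$ is automatically bounded, though here boundedness is already explicit from the $[0,1]$ range.

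I do not expect any real obstacle: this lemma is purely a sanity check that the explicit bump formula does what the earlier propositions (Proposition~\ref{prop:discontinuity_deterministic} and Proposition~\ref{prop:tie-breaker}) require of it. The only point worth stating carefully is the $1$-Lipschitz property of $x \mapsto d_{SA}(x, x_0)$, which follows from $|d_{SA}(x,x_0) - d_{SA}(y,x_0)| \le d_{SA}(x,y)$; everything else is immediate. If one also wanted the support localization property used in those propositions (namely $\varphi_\delta(s_0, a_j) = 0$ for $a_j \ne a_2$), it would suffice to choose $\delta$ smaller than $\min_{a_j \ne a_2} d_{\ActionSpace}(a_j, a_2)$ over the finite optimal set, so that $d_{SA}((s_0,a_j),(s_0,a_2)) = d_{\ActionSpace}(a_j,a_2) \ge \delta$ and the $\max$ with $0$ kicks in; but as stated the lemma only asserts continuity and the $[0,1]$ bound, so this remark is not strictly needed here.
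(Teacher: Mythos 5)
Your proof is correct and follows essentially the same route as the paper: both argue that the distance to the fixed point $(s_0,a_2)$ is continuous (you justify this via the $1$-Lipschitz property, the paper simply asserts it), compose with the continuous map $x\mapsto\max\{0,1-x/\delta\}$, and verify the $[0,1]$ bound directly. Your closing remark about choosing $\delta$ small enough to vanish at the other optimal actions is a sensible observation but, as you note, not part of the lemma as stated.
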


\begin{proof}
The map $(s,a)\mapsto d_{SA}\bigl((s,a),(s_0,a_2)\bigr)$ is continuous on the compact product space.  
The function 
$
x\mapsto\max\{0,1-x/\delta\}
$
is continuous on $\R$.  Composition preserves continuity, hence
$\varphi_{\delta}\in\Cont(\StateSpace\times\ActionSpace)$.  
Moreover  
$0\le \varphi_{\delta}(s,a)\le 1$ for all $(s,a)$, and 
$
\varphi_{\delta}(s_0,a_2)=1,
$
so $\|\varphi_{\delta}\|_\infty = 1$.
\end{proof}

\subsection{Proof of the Lipschitz Property of the Softmax Function}

\begin{lemma}\label{lem:Lipschitz_properties_softmax}
Let $\pi: \R^d \to \R^d$ be the softmax function with temperature $\alpha > 0$, defined as $\pi_i(x) = \frac{\exp(x_i/\alpha)}{\sum_{j=1}^d \exp(x_j/\alpha)}$ for $x \in \R^d$. The function $\pi(x)$ is Lipschitz continuous from the space $(\R^d, \|\cdot\|_{\infty})$ to $(\R^d, \|\cdot\|_{1})$, with a Lipschitz constant of at most $1/\alpha$. That is, for any $x, y \in \R^d$:
$$\|\pi(x) - \pi(y)\|_1 \le \frac{1}{\alpha} \|x - y\|_{\infty}.$$
\end{lemma}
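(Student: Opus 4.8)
The plan is to reduce to temperature $\alpha = 1$ by scaling, pass from the desired global Lipschitz bound to a pointwise bound on the softmax Jacobian via the fundamental theorem of calculus, and then establish the \emph{sharp} operator-norm bound on that Jacobian.

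\textbf{Reduction and linearization.} Write $\sigma$ for the temperature-$1$ softmax, so that $\pi(x) = \sigma(x/\alpha)$; it then suffices to prove $\|\sigma(u) - \sigma(v)\|_1 \le \|u-v\|_\infty$ for all $u,v \in \R^d$, since applying this to $u = x/\alpha$, $v = y/\alpha$ produces the claimed constant $1/\alpha$. As $\sigma$ is smooth, the fundamental theorem of calculus along the segment from $v$ to $u$ gives $\sigma(u) - \sigma(v) = \bigl(\int_0^1 D\sigma(v + t(u-v))\,dt\bigr)(u-v)$, hence
\[
\|\sigma(u) - \sigma(v)\|_1 \;\le\; \Bigl(\sup_{z \in \R^d} \|D\sigma(z)\|_{\infty \to 1}\Bigr)\,\|u - v\|_\infty ,
\]
where $\|M\|_{\infty \to 1} := \sup_{\|w\|_\infty \le 1}\|Mw\|_1$. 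The whole statement thus reduces to showing $\|D\sigma(z)\|_{\infty \to 1} \le 1$ for every $z$.

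\textbf{The Jacobian and the key estimate.} A direct computation with $p := \sigma(z)$ gives $D\sigma(z) = \mathrm{diag}(p) - p p^\top$, so $(D\sigma(z)w)_i = p_i\bigl(w_i - \langle p, w\rangle\bigr)$ and therefore $\|D\sigma(z)w\|_1 = \sum_i p_i\,|w_i - \bar w|$ with $\bar w := \langle p, w\rangle$ — the mean absolute deviation of $w$ under the probability vector $p$. By homogeneity we may assume $\|w\|_\infty \le 1$. The identity $\sum_i p_i(w_i - \bar w) = 0$ forces $a := \sum_{i:\,w_i \ge \bar w} p_i(w_i - \bar w)$ to equal $b := \sum_{i:\,w_i < \bar w} p_i(\bar w - w_i)$, so $\|D\sigma(z)w\|_1 = a + b = 2\sqrt{ab}$. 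Letting $P^+$ be the $p$-mass of $\{i: w_i \ge \bar w\}$ and using $w_i \in [-1,1]$ gives $a \le P^+(1 - \bar w)$ and $b \le (1 - P^+)(1 + \bar w)$, whence $\|D\sigma(z)w\|_1^2 = 4ab \le 4P^+(1-P^+)(1 - \bar w^2) \le 1$, since $P^+(1-P^+)\le \tfrac14$. (Equivalently, since $w \mapsto \|D\sigma(z)w\|_1$ is convex its maximum over the cube $\{\|w\|_\infty \le 1\}$ is attained at a vertex $\varepsilon \in \{\pm1\}^d$, where the same computation yields $\|D\sigma(z)\varepsilon\|_1 = 4P^+(1-P^+) \le 1$.) Combining this with the linearization bound completes the proof.

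\textbf{Main obstacle.} The only step requiring a genuine idea is obtaining the constant $1$ rather than $2$ in the Jacobian bound: the crude estimate $|w_i - \bar w| \le |w_i| + |\bar w| \le 2\|w\|_\infty$ loses a factor of two, and the sharp constant is recovered only by exploiting the cancellation $\sum_i p_i(w_i - \bar w) = 0$ (equivalently, the extremality of $\pm1$-valued $w$). The scaling reduction, the Jacobian formula, and the integral estimate are all routine.
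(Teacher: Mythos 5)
Your proof is correct and follows essentially the same route as the paper's: linearize via the softmax Jacobian $\mathrm{diag}(p) - pp^{\top}$ and bound its $(\infty \to 1)$ operator norm, which reduces to showing $\sum_i p_i |w_i - \bar{w}| \le 1$ over the unit cube, yielding the constant $1/\alpha$. The only difference is cosmetic: you obtain this bound from the cancellation $a=b$ together with a product estimate (after a clean scaling reduction to $\alpha=1$ and the integral form of the mean value inequality), whereas the paper reduces by convexity to $\pm 1$ vertices and uses $4p(1-p)\le 1$ --- an argument you yourself note parenthetically as equivalent.
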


\begin{proof}
Let $f(x) = \pi(x)$. By the Mean Value Theorem for vector-valued functions, for any $x, y \in \R^d$:
$$\|f(x) - f(y)\|_1 \le \sup_{t \in [0,1]} \|J_f(y + t(x-y))\|_{(\infty, 1)} \cdot \|x-y\|_{\infty},$$
where $J_f(z)$ is the Jacobian matrix of $f$ at point $z$, and $\|J_f\|_{(\infty, 1)}$ is the induced operator norm for mappings from $(\R^d, \|\cdot\|_{\infty})$ to $(\R^d, \|\cdot\|_{1})$. This norm is defined as:
$$\|J_f\|_{(\infty, 1)} = \sup_{\|v\|_{\infty} \le 1} \|J_f v\|_1.$$

First, we compute the entries of the Jacobian matrix $J_f(x)$. The partial derivatives are:
$$(J_f)_{ij} = \frac{\partial \pi_i}{\partial x_j} = \frac{1}{\alpha} \pi_i (\delta_{ij} - \pi_j),$$
where $\delta_{ij}$ is the Kronecker delta.

Next, we evaluate the action of the Jacobian on a vector $v \in \R^d$ such that $\|v\|_\infty \le 1$. The $i$-th component of the resulting vector $J_f v$ is:
\begin{align}
(J_f v)_i &= \sum_{j=1}^d (J_f)_{ij} v_j = \sum_{j=1}^d \frac{1}{\alpha} \pi_i (\delta_{ij} - \pi_j) v_j \\
&= \frac{1}{\alpha} \pi_i \left( \sum_{j=1}^d \delta_{ij} v_j - \sum_{j=1}^d \pi_j v_j \right) \\
&= \frac{1}{\alpha} \pi_i (v_i - \bar{v}),
\end{align}
where we define $\bar{v} := \sum_{k=1}^d \pi_k v_k$.

Now, we compute the $L_1$-norm of $J_f v$:
$$\|J_f v\|_1 = \sum_{i=1}^d |(J_f v)_i| = \sum_{i=1}^d \left| \frac{1}{\alpha} \pi_i (v_i - \bar{v}) \right| = \frac{1}{\alpha} \sum_{i=1}^d \pi_i |v_i - \bar{v}|,$$
since $\pi_i > 0$ and $\alpha > 0$.

To find the operator norm, we must maximize $\Phi(v, \pi) := \sum_{i=1}^d \pi_i |v_i - \bar{v}|$ over all $v$ with $\|v\|_{\infty} \le 1$. The function $\Phi$ is convex in $v$, so its maximum over the compact convex set (the hypercube $\|v\|_\infty \le 1$) must be achieved at one of its vertices. The vertices of this hypercube are vectors $v$ with components $v_i \in \{-1, 1\}$.

Let us fix such a vector $v$. Let $S = \{i \mid v_i = 1\}$, and let $p = \sum_{i \in S} \pi_i$. Then $\sum_{i \notin S} \pi_i = 1 - p$. The weighted average $\bar{v}$ becomes:
$$\bar{v} = \sum_{i \in S} \pi_i (1) + \sum_{i \notin S} \pi_i (-1) = p - (1-p) = 2p - 1.$$

We can now express $\Phi$ in terms of $p$:
\begin{align}
\Phi(v, \pi) &= \sum_{i \in S} \pi_i |1 - (2p-1)| + \sum_{i \notin S} \pi_i |-1 - (2p-1)| \\
&= \sum_{i \in S} \pi_i |2 - 2p| + \sum_{i \notin S} \pi_i |-2p| \\
&= (2-2p) \sum_{i \in S} \pi_i + 2p \sum_{i \notin S} \pi_i \quad \text{(since } p \in [0,1]) \\
&= 2(1-p)p + 2p(1-p) \\
&= 4p(1-p).
\end{align}

The function $g(p) = 4p(1-p)$ for $p \in [0,1]$ has a maximum value of $1$, which is achieved at $p = 1/2$. Thus, the maximum value of $\Phi(v, \pi)$ is at most $1$. This implies:
$$\|J_f(x)\|_{(\infty, 1)} = \sup_{\|v\|_{\infty} \le 1} \frac{1}{\alpha} \Phi(v, \pi) \le \frac{1}{\alpha}.$$
This bound is tight and holds for any $x \in \R^d$.

Finally, substituting this uniform bound back into the Mean Value Theorem inequality, we get:
$$\|\pi(x) - \pi(y)\|_1 \le \frac{1}{\alpha} \|x - y\|_{\infty}.$$
This completes the proof.
\end{proof}

\section{Appendix on Experimental Details of Multi-RL Setting}

The training hyperparameters used in the experiments described in Section~\ref{subsec:multi-rl-exp} are summarized in Table~\ref{tab:multi-rl-train-config}. All experiments were conducted on a cluster of A800 GPUs using OpenRLHF-V framework.

\begin{table}[h!]
\centering
\caption{Training hyperparameters of the experiments in Sectuion~\ref{subsec:multi-rl-exp}}
\begin{tabular}{lc}
\toprule
\textbf{Parameter} & \textbf{Value} \\
\midrule
rollout\_batch\_size & 128 \\
train\_batch\_size & 64 \\
micro\_rollout\_batch\_size & 4 \\
micro\_train\_batch\_size & 2 \\
max\_len & 9000 \\
prompt\_max\_len & 4096 \\
generate\_max\_len & 4096 \\
advantage\_estimator & group\_norm \\
n\_samples\_per\_prompt & 8 \\
temperature & 1 \\
vllm\_sync\_backend & nccl \\
apply\_chat\_template & true \\
use\_kl\_loss & true \\
init\_kl\_coef & 0.001 \\
kl\_estimator & k3 \\
freeze\_prefix & true \\
lr\_warmup\_ratio & 0 \\
actor\_learning\_rate & 1e-6 \\
bf16 & true \\
max\_epochs & 1 \\
eps\_clip & 0.2 \\
value\_clip & 0.2 \\
max\_norm & 1 \\
adam\_betas & [0.9, 0.95] \\
flash\_attn & true \\
zero\_stage & 3 \\
adam\_offload & true \\
colocate\_actor\_ref & true \\
\bottomrule
\end{tabular}
\label{tab:multi-rl-train-config}
\end{table}

\end{document}